\documentclass[11pt]{article} 
\pdfoutput=1
\usepackage[margin=1in]{geometry}
 
\usepackage[dvipsnames]{xcolor}

\usepackage[colorlinks=true, allcolors=magenta]{hyperref}

\usepackage{thmtools}
\usepackage{thm-restate}
\usepackage{amsfonts}
\usepackage{amsmath,amsthm,amssymb}
\usepackage{macros_arxiv}

\usepackage[style=alphabetic,natbib=true,maxbibnames=99,minalphanames=4,maxalphanames=4]{biblatex}
\title{Query Feedback and Teaching for\\Stable and Certifiable Language Generation}
\author{Farnam Mansouri \thanks{f5mansou@uwaterloo.ca. Supported by a NSERC Canada Graduate Scholarship-Doctoral (CGS-D), David Cheriton Scholarship and a Vector Institute Research Grant.}\\University of Waterloo
\and 
Alireza F. Pour\thanks{alireza.fathollahpour@uwaterloo.ca. Supported by a David Cheriton Scholarship and a Vector Institute Research Grant. }\\
 University of Waterloo
 \and
 Grigoris Velegkas\\{Google Research}
 \and
  Sandra Zilles\thanks{Supported through a Canada CIFAR AI Chair at the Alberta Machine Intelligence Institute (Amii), through an NSERC Canada Research Chair, through the New Frontiers in Research Fund under grant no.\ NFRFE-2023-00109 and through the NSERC Discovery Grants program under application no.\ RGPIN-2026-05856.}\\
  University of Regina and Amii}
\date{}

\begin{document}

\maketitle

\begin{abstract}%
Language generation in the limit studies whether a learner observing
an enumeration of an unknown target language can eventually generate
only valid, previously unseen elements. We study stability, introduced
by \citet{kalavasis2025limits}, which requires the generator's infinite
output set to eventually remain unchanged, and introduce the stronger
notion of \emph{certifiability}, which requires the generator to determine from its observations that its current output is valid.
Drawing on teaching sets in learning theory, we introduce a hierarchy of teaching dimensions for generation under three increasingly strict success criteria: correctness, stability, and certifiability. These dimensions measure the minimum number of selected examples whose appearance guarantees the corresponding success condition. 
We also study stability and
certifiability in the membership-query model of
\citet{pmlr-v291-charikar25a}. 
We characterize stable and certifiable generation, both with and
without query feedback, and establish their relationships with uniform
generation and identification in the limit. In particular, we give a
finite-witness characterization of stable generation, analogous to
Angluin's tell-tale characterization of identification, resolving
a gap left open by \citet{kalavasis2025limits}.
We further compare these generation notions with identification in the limit and contrast their teaching requirements on well-studied language families, including regular and erasing pattern languages.
\end{abstract}

\section{Introduction}
Large Language Models (LLMs) exhibit remarkable capabilities across a wide variety of domains.
Yet, despite their empirical success, a rigorous theoretical understanding of how and why these models succeed remains elusive. Because modern LLMs are highly complex systems, developing a mathematical foundation necessitates studying them through clean theoretical abstractions that isolate core phenomena of interest~\citep{malach2023auto,kleinberg2024language,kalai2024calibrated,kalai2025language,joshi2025theory,golowich2025provably}. 

One such abstraction that has recently gained significant traction is the stylized model of \emph{language generation in the limit}, introduced by \citet{kleinberg2024language}. Inspired by the classical inductive inference framework of \citet{gold1967language} and \citet{angluin1980inductive}, this model asks a basic question: when a generative model is trained on a stream of examples from an unknown target language $K$ belonging to a known countable collection of languages $\cL = \{L_1, L_2, \dots\}$ over a domain $\cX$, is there a finite point after which the model consistently produces \emph{novel} and \emph{valid} elements of $K$? Concretely, the problem is formulated as an online game in which an adversary chooses a target language $K \in \cL$ along with an enumeration $x_1, x_2, \dots$ of all elements in $K$. At each round $t$, after observing $S_t = \{x_1, \dots, x_t\}$, the learner outputs a guess---either a single unseen element $y_t \in \cX \setminus S_t$ (\emph{element-based} generation) or an infinite subset $G_t \subseteq \cX$ of candidate outputs (\emph{set-based} generation)---and succeeds in the limit if, after some finite round $t^\star$, all subsequent outputs are valid for $K$.\footnote{The original formulation of \citet{kleinberg2024language} is element-based, but the set-based formulation of subsequent works~\citep{kalavasis2025limits,pmlr-v291-charikar25a,kleinberg2025density,kleinberg2026language} naturally captures the idea of a ``checkpoint'' in training from which onward one can repeatedly sample arbitrarily many novel strings even if training is paused. We adopt the set-based formulation, since is the natural setting for studying \emph{stability}. {Most of} our definitions and results not referring to stability are easily adaptable to the element-based setup.} 
  The main result of \citet{kleinberg2024language}, namely that  
  language generation in the limit is possible for \emph{all} countable collections of languages, stands in stark contrast to the classical negative results for language \emph{identification} in the limit, where the learner must eventually pinpoint the exact index of $K$ in $\cL$ 
  \citep{gold1967language,angluin1980inductive}. This  separation has sparked a vibrant line of work exploring various facets of language generation, including breadth and density~\citep{kalavasis2025limits,pmlr-v291-charikar25a,peale2025representative,kleinberg2025density,kleinberg2026language}, noisy  data~\citep{raman2025generation,bai2026language,mehrotra2025language}, privacy~\citep{mehrotra2026differentially}, and memory constraints~\citep{kleinberg2026language2,flammarion2026space}.

Yet, a crucial practical aspect of language generation has remained largely unexplored: the model of \citet{kleinberg2024language} imposes no requirement on the learner to know \emph{when} it is safe to stop training and start generating. Even though a successful generator is guaranteed to eventually output valid elements after some finite time $t^\star$, this convergence time depends on the unknown target language and enumeration, and the learner itself is never aware of whether $t^\star$ has been reached. In real-world machine learning pipelines, however, practitioners do not train blindly; they continuously track training and validation metrics that inform them about the progress of the run and provide a concrete signal of when training is ``complete'' and the model can be safely deployed. To capture this phenomenon theoretically, we wish to study language generation in the limit by learners that can eventually \emph{certify} that they are generating correctly; we call this \emph{certifiable language generation}. 

Closely intertwined with 
certifiability is a previously studied fundamental property of the learning process, namely \emph{stability} \citep{kalavasis2025limits, kalavasis2025characterizations}. In the original setting of \citet{kleinberg2024language}, the learning algorithm is permitted to change its mind infinitely often, oscillating indefinitely between different candidate generating sets even after all of them are valid subsets of $K$. Stability addresses practical situations in which we would like the training process to eventually converge to a single set to generate from. 
In fact, stability is a necessary condition for certifiability: once a learner can assess that it is safe to stop training, it can immediately freeze its output and stabilize from that point onward. Conversely, stability alone may not suffice to tell us when to stop training, as a stable generator could converge to a valid set at some finite time $t^\star$ without ever knowing that the point of stabilization has been reached. 
\citet{kalavasis2025limits} established a strict separation between stable and non-stable generation even for countable collections, but left open a characterization of when stable generation is possible. Thus, a central goal of our work is to characterize when both \emph{stable} and \emph{certifiable} generation can be achieved.

Moreover, we extend the study of stable and certifiable language generation to two natural settings that were recently introduced in the literature: (i) generation with feedback to active queries by the learner \citep{pmlr-v291-charikar25a,hanneke2026language} and (ii) generation from size-restricted sets of examples provided by the environment \citep{li2026characterizing}, akin to teaching sets \citep{GoldmanK95} in the context of concept learning.

(i) Prior work \citep{pmlr-v291-charikar25a,hanneke2026language} studied settings in which the learner can actively obtain information about the target language $K$ by asking whether strings $x$ of its choice are in $K$. Since stability and certifiability impose string requirements on language generation, we ask the question in which way query feedback can help to fulfill these requirements.

(ii) Both stability and certifiability require that on \emph{every} enumeration of the target language $K$, the learner eventually reaches a point of (certifiable or non-certifiable) stability, though the time and manner in which this point is reached may depend on the specific order in which examples are presented. An even more appealing scenario is when, for every language $K \in \cL$, there exists a finite ``core'' set of examples---which we refer to as a \emph{teaching set}, borrowing terminology from learning theory~\citep{GoldmanK95}---such that as soon as all elements of this set appear in the input stream, the generator is guaranteed to have stabilized (or certified correctness) from that point onward, regardless of the rest of the enumeration. Beyond providing an enumeration-independent guarantee of convergence, the existence of such teaching sets is closely tied to designing data-efficient training curricula. In this work, we seek to understand the exact conditions under which stable generation and certifiable generation can be obtained---with or without teaching sets.

\paragraph{Main Contributions.}
This paper introduces the notion of certifiable language generation as a natural strengthening of stable generation. We show that certifiability is a strictly stronger requirement than stability alone, as part of a set of separation results establishing  a strict hierarchy of the notions of uniform \citep{kleinberg2024language,pmlr-v291-raman25a}, certifiable, stable, and unrestricted language generation, with and without query feedback. Interestingly, under query feedback, stability alone is not a restrictive condition, though certifiability is. Notably, some of our separations are witnessed by the well-studied class of pattern languages \citep{angluin80patterns,shinohara82}.

To gain a deeper understanding of the additional difficulty that certifiability imposes over stability, we give characterizations of generation under these two notions, thereby answering an open question of \citet{kalavasis2025limits,kalavasis2025characterizations} asking for a characterization of stable generation. We complement these results by analogous characterizations in the query-augmented learning setting.

Our characterizations in the query-free setting rely on the structure of finite witness sets, akin to tell-tales used in characterizations in the identification setting~\citep{angluin1980inductive}. Such finite witness sets are closely related to teaching sets studied in learning theory \citep{GoldmanK95}.
While teaching sets for language generation were implicitly introduced by \cite{li2026characterizing}, we initiate their study in the settings of stable and certifiable generation, again establishing a hierarchy across various notions of generation.  In contrast to Li et al.'s (\citeyear{li2026characterizing}) result, stating that generation for countable language collections can always be achieved with teaching sets of size 1, we use countable classes to exhibit arbitrarily large separations between teaching set sizes for certifiable generation, stable generation, and generation in the limit. This shows that, in general, larger teaching sets are needed to satisfy stronger requirements on the learning process. 

Incidentally, we exhibit a relationship between the size of stable generation teaching sets and the size of finite sets with the so-called \emph{non-clashing}\/ property, which is known in the context of machine teaching \citep{FallatKSSZ23}, identification in the limit \citep{pos2024mousawi}, and sample compression \citep{KuzminW07}. This relationship mirrors an analogous relationship between complexity parameters for identification in the limit \citep{pos2024mousawi}.


\section{Preliminaries}

Let $\Nat$ be the set of natural numbers and $\Nat_0:= \Nat \cup \{0\}$. Fix any countable domain $\cX$. For any sequence $\sigma\in \cX^*$ we denote by $\supp(\sigma) \subseteq \cX$ the set of elements in $\sigma$. A language $L$ is consistent with 
$\sigma$ if $\supp(\sigma)\subseteq L$. For any $\sigma\in\cX^*$ and finite set $T$, we say $\sigma$ \emph{contains} $T$ if $T\subseteq\supp(\sigma)$. For any infinite sequence $\bar{\sigma} = (x_1,x_2,\ldots)$ and natural number $n\in \bN$ we denote by $\bar{\sigma}_n = (x_1,\ldots,x_n)$ the prefix of length $n$ of $\bar{\sigma}$. The concatenation of finite sequences $\sigma$ and $\sigma'$ is denoted by $\sigma\cdot\sigma'$. 




Let $\cL$ be a collection of infinite languages over $\cX$. Denote by $\cB_{\cX}$ the collection of all infinite subsets of $\cX$. A \emph{generator}\/ is a sequence $\cG=(\cG_n)_{n\in\Nat_0}$, where, for every $n\in\Nat_0$, $\cG_n:\cX^n\rightarrow\cB_{\cX}$.
For simplicity, we will write $\cG(\sigma)$ for $\cG_n(\sigma)$, when $\sigma$ is a finite sequence of length $n$.




A \emph{repetition-free enumeration}\/ of an infinite language $L$ is an (infinite) sequence of elements in $L$ in which each element of $L$ appears exactly once. Every prefix of a repetition-free enumeration is a repetition-free finite sequence. In generation in the limit, we assume that, at each time step, a previously unseen training datum is presented (alternatively, one may allow for repetitions, which can be ignored by the learner). Therefore, for simplicity, when we use the term \emph{enumeration}, we implicitly refer to a repetition-free enumeration. 

\begin{definition}[\cite{kleinberg2024language}]\label{def:generation}
A collection $\cL$ is \emph{generatable in the limit} if there is a generator
$\cG$ such that, for every $K\in\cL$ and every enumeration $\bar{\sigma}$ of $K$, there
exists $n^*\in\Nat_0$ satisfying $\cG(\bar{\sigma}_n)\subseteq K$ for every $n\geq n^*.$
\end{definition}

\begin{definition}[\cite{kleinberg2024language, pmlr-v291-raman25a}]\label{def:uniform}
For $i\in \Nat_0$, a collection $\cL$ is \emph{$i$-uniformly generatable} if
there is a generator $\cG$ such that 
for every $K \in \cL$, every enumeration $\bar{\sigma}$ of $K$, and every $n \geq i$ we have
$\cG(\bar{\sigma}_n)\subseteq K$. 
It is \emph{uniformly generatable} if it is $i$-uniformly generatable for some
$i\in\Nat_0$.
\end{definition}



We contrast the notion of generation with the notion of identification, which requires eventual recovery of the whole target, rather than of an infinite subset.

\begin{definition}[\cite{gold1967language}]\label{def:identification}
A collection $\cL$ is \emph{identifiable in the limit} if there exists a
mapping $\mathcal A:\cX^*\to\cL$ \footnote{\citet{gold1967language} required $\mathcal A$ to be a computable function, but this paper ignores computability aspects. Whenever we use the term ``identifiable in the limit'', we refer to the version without computability requirements.}
such that, for every $K\in\cL$ and every
enumeration $\bar{\sigma}$ of $K$, there is $n^*\in\Nat_0$ satisfying $\mathcal A(\bar{\sigma}_n)=K$ for every $n\geq n^*$. 

\end{definition}

Language collections that are identifiable in the limit can be characterized by assignments of finite sets with a certain structural property, to be defined next.


\begin{definition}
[\cite{angluin1980inductive,FallatKSSZ23}]
\label{def:telltale}
Let $\cL$ be a language collection, and let $(T_L)_{L\in\cL}$ be a family of finite sets such that $T_L\subseteq L$ for every $L\in\cL$. 

\begin{enumerate}[label=(\roman*)]
    \item 
    $(T_L)_{L\in\cL}$ is called a \emph{tell-tale assignment} if $T_L\subseteq L'\subseteq L$
        implies $L'=L$, whenever 
    $L,L'\in\cL$.

    \item 
    $(T_L)_{L\in\cL}$ is called \emph{non-clashing} if $T_L\not\subseteq L'$ or $T_{L'}\not\subseteq L$ for any
    $L,L'\in\cL$ with $L\ne L'$.
\end{enumerate}
\end{definition}

\begin{restatable}[
{\cite{angluin1980inductive,pos2024mousawi}}]{theorem}{restateIdentificationTelltales}
\label{thm:identification-telltales}
For every countable language collection $\cL$, the following statements
are equivalent:
\begin{enumerate}[label=(\roman*)]
    \item $\cL$ is identifiable in the limit.
    \item $\cL$ admits a tell-tale assignment of finite sets.
    \item $\cL$ admits a non-clashing assignment of finite sets.
\end{enumerate}
\end{restatable}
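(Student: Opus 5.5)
We will prove the cycle of implications (i)$\Rightarrow$(ii)$\Rightarrow$(iii)$\Rightarrow$(i). Throughout we use that $\cL$ is countable, say $\cL=\{L_1,L_2,\dots\}$, and that computability is ignored.

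\emph{(i)$\Rightarrow$(ii).} We follow Angluin's locking-sequence argument. Fix an identifier $\mathcal A$ and $L\in\cL$. We first claim that there is a finite sequence $\sigma$ consistent with $L$ such that $\mathcal A(\sigma)=L$ and $\mathcal A(\sigma\cdot\tau)=L$ for every finite $\tau$ with $\supp(\tau)\subseteq L$. Suppose no such sequence exists. Then we can build an enumeration of $L$ on which $\mathcal A$ fails. Fix an enumeration $x_1,x_2,\dots$ of $L$. At stage $k$, append $x_k$ and then append a finite extension drawn from $L$ on which $\mathcal A$ outputs something other than $L$. This extension exists because the current sequence is not a locking sequence. (Repetitions are harmless: we can drop them, or note that the argument only needs supports.) The resulting sequence enumerates $L$, yet $\mathcal A$ outputs a hypothesis $\neq L$ infinitely often, a contradiction.

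Now set $T_L=\supp(\sigma)$. If $T_L\subseteq L'\subseteq L$ with $L'\in\cL$, extend $\sigma$ by an enumeration of $L'$. Every prefix of this extension has support in $L$, so $\mathcal A$ keeps outputting $L$. But $\mathcal A$ must converge to $L'$ on this enumeration of $L'$ (after removing repetitions from $\sigma$), hence $L'=L$.

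\emph{(ii)$\Rightarrow$(iii).} Given a tell-tale assignment $(T_L)$, define $T'_{L_i}=T_{L_i}\cup\{y_{ij}: j<i,\ L_i\neq L_j\}$. Here $y_{ij}$ is chosen as follows:
\begin{itemize}
\item if $L_i\not\subseteq L_j$, take $y_{ij}\in L_i\setminus L_j$;
\item if $L_i\subsetneq L_j$, take no $y_{ij}$ at all.
\end{itemize}
Each $T'_{L_i}$ is finite and contained in $L_i$. To check that the assignment is non-clashing, take $i\neq j$ with $L_i\neq L_j$ and suppose $T'_{L_i}\subseteq L_j$ and $T'_{L_j}\subseteq L_i$; without loss of generality $j<i$.
\begin{itemize}
\item If $L_i\not\subseteq L_j$, then $y_{ij}\in T'_{L_i}\setminus L_j$, a contradiction.
\item If $L_i\subsetneq L_j$, then $T_{L_j}\subseteq T'_{L_j}\subseteq L_i\subseteq L_j$. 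The tell-tale property then forces $L_i=L_j$, a contradiction.
\end{itemize}

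\emph{(iii)$\Rightarrow$(i).} Given a non-clashing assignment, let $\mathcal A(\sigma)$ be the least-index $L_i$ such that $T_{L_i}\subseteq\supp(\sigma)\subseteq L_i$, with an arbitrary default if no such index exists. Let $K=L_k$, with $k$ the least index of $K$. Once the prefix contains $T_K$, the language $K$ is a candidate. Any $L_i$ with $i<k$ and $L_i\neq K$ that stays a candidate forever satisfies $T_{L_i}\subseteq K$ and $K\subseteq L_i$, hence $T_K\subseteq L_i$, contradicting non-clashing. Each of the finitely many such $L_i$ with $K\not\subseteq L_i$ is therefore eventually excluded by some observed element of $K\setminus L_i$. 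Those with $K\subseteq L_i$ cannot have $T_{L_i}\subseteq K$, since that would again give a clash; so they are never candidates. Hence $\mathcal A$ converges to $K$.

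The main obstacle is the locking-sequence lemma in (i)$\Rightarrow$(ii), and in particular handling repetition-free enumerations there. The remaining steps are finite bookkeeping.
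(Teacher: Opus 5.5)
The paper does not prove this theorem. It quotes it from \citet{angluin1980inductive} and \citet{pos2024mousawi}, and its surrounding remarks add only two things. First, a non-clashing assignment is automatically a tell-tale assignment: if $T_L\subseteq L'\subseteq L$, then also $T_{L'}\subseteq L'\subseteq L$, so non-clashing forces $L'=L$. Second, Section~\ref{sec:teaching} sketches the consistent learner built from tell-tales. Your cycle is the standard self-contained argument and is sound in substance. The paper's remarks would give (iii)$\Rightarrow$(ii) in one line and (ii)$\Rightarrow$(i) directly. You instead prove the substantive direction (ii)$\Rightarrow$(iii), which some argument must supply if (iii) is to enter the equivalence at all. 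Your construction is correct: for each earlier $L_j$ with $L_i\not\subseteq L_j$ you add one point of $L_i\setminus L_j$, and in the nested case $L_i\subsetneq L_j$ you rely on the tell-tale of the larger language $L_j$. The least-index learner in (iii)$\Rightarrow$(i) is also correct. Just list $\cL$ without repetitions so that $T'_{L_i}$ is well defined.

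The one step that fails as written is the locking-sequence lemma. You quantify over all finite $\tau$ with $\supp(\tau)\subseteq L$. In this paper, however, enumerations are repetition-free and $\mathcal A$ is constrained only on their prefixes; on inputs with repetitions it may behave arbitrarily. For example, let $\mathcal A_0$ identify $\cL$ and let $L_0\neq L$ be in $\cL$. The map that agrees with $\mathcal A_0$ on repetition-free inputs and outputs $L_0$ on all others still identifies $\cL$. Yet every $\sigma$ consistent with $L$ is defeated by a $\tau$ that repeats a point of $\sigma$, or by $\tau=(x,x)$ with $x\in L$ when $\sigma$ is empty. So no locking sequence in your sense exists. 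Correspondingly, the sequence your construction builds may contain repetitions; it is then not an enumeration and yields no contradiction. The remark that ``the argument only needs supports'' does not help, since $\mathcal A$ reads sequences rather than sets.

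The repair is routine, and either of two fixes works:
\begin{enumerate}
\item Require $\sigma$ to be repetition-free and quantify only over repetition-free $\tau\in(L\setminus\supp(\sigma))^*$, including the empty one. At stage $k$, append $x_k$ only if it is new. The limit is then an enumeration of $L$ whose stage-$k$ prefix has length at least $k$ and is not mapped to $L$.
\item First replace $\mathcal A$ by the map that deletes repeated occurrences before applying $\mathcal A$. This map converges to $K$ on every infinite sequence over $K$ listing each element of $K$ at least once, after which your argument runs verbatim.
\end{enumerate}
With either fix the second half goes through. Indeed, $\sigma$ followed by an enumeration of $L'\setminus T_L$ is an enumeration of $L'$, and each of its prefixes extending $\sigma$ is mapped to $L$.
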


These characterizations also connect identification in the limit to
machine teaching. In the machine teaching literature, the positive non-clash teaching
dimension ($\NCTD^+$) measures the smallest uniform bound on the sizes of sets in a non-clashing assignment. 



\begin{definition}
[\cite{FallatKSSZ23}]
\label{def:nctd}
The \emph{positive non-clash teaching dimension} of a language
collection $\cL$ is defined as
$\NCTD^+(\cL)
    =
    \inf\nolimits_{(T_L)_{L\in\cL}\ \mathrm{non\text{-}clashing}}
    \sup\nolimits_{L\in\cL}|T_L|$.
\end{definition}

We will later adapt the notion of tell-tale assignment to stable language generation and then show that it is useful to likewise measure the smallest uniform bound on the sizes of such tell-tale sets.

\section{Stable and Certifiable Language Generation}\label{sec:certifiability}

One of our central goals is to model and study language generation with a learner that can eventually \emph{certify}\/ that it is sufficiently well trained to guarantee correct generation from the target language. Clearly, once such a certificate can be issued, the generator may choose to \emph{stabilize}\/ on its current output, i.e., no further training is necessary.  
Stability, which requires the generator to stop changing its output after a finite time, was recently introduced by \citet{kalavasis2025limits}. We will show below that this condition is strictly weaker than our proposed condition of certifiability. 


\begin{definition}[\cite{kalavasis2025limits}]\label{def:stable}
A collection $\cL$ is \emph{stably generatable} if there exists a generator
$\cG$ such that, for every $K\in\cL$ and every enumeration $\bar{\sigma}$ of $K$, there
is $n^*\in\Nat_0$ such that for every $n\geq n^*$, we have $\cG(\bar{\sigma}_n)=\cG(\bar{\sigma}_{n^*})\subseteq K$.
\end{definition}

\begin{definition}
\label{def:certifiable}
A collection $\cL$ is \emph{certifiably generatable} if there exist a
generator $\cG$ and a certificate $c:\cX^*\to\{0,1\}$ satisfying:
\begin{enumerate} [label=(\roman*)]
\item \emph{Soundness.} For every $\sigma \in \cX^*$, if $c(\sigma)=1$, then
$\cG(\sigma)\subseteq L$ for every $L\in\cL$ consistent with $\sigma$.
\item \emph{Eventual certification.} For every $K\in\cL$ and every
enumeration $\bar{\sigma}$ of $K$, there is $n^*\in\Nat_0$ such that $c(\bar{\sigma}_{n^*})=1$.
\end{enumerate}
\end{definition}

\begin{remark}
    After certification, the generator may retain its certified output on every extension of the prefix $\bar{\sigma}_{n^*}$ on which the certificate was obtained. This preserves soundness, because every language consistent with the extension was also consistent at the time of certification. Thus a certifiable generator can be made stable. Theorem~\ref{thm:generation-hierarchy} will show that the converse is not true in general.
\end{remark}

\subsection{Finite-Witness Characterizations} \label{sec:stab-cert}

 Pinpointing the structural properties of language classes that are generatable in a given model provides insights into the nature of standardized underlying learning processes. To this end, we derive finite-witness characterizations 
 for stable and certifiable generation, similar to the analogous characterizations for identification, see  Theorem~\ref{thm:identification-telltales}. Our finite witnesses for stable generation satisfy a weakened non-clashing condition, allowing distinct languages to receive the same witness.
 

\begin{restatable}{theorem}{restateStableChar}\label{thm:stable-char}
A collection $\cL$ is stably generatable if and only if there is an assignment
of finite sets $T_L\subseteq L$, $L\in\cL$, satisfying:
\begin{equation}\label{eq:stable-fiber}
\mbox{(i) For every }L\in\cL,\qquad\left|\bigcap\nolimits_{L'\in\cL\ :\ T_{L'}=T_L}L'\right|=\infty.
\end{equation}
\begin{equation}\label{eq:stable-noclash}
\mbox{(ii) For every }L,L'\in\cL,\qquad T_L\subseteq L',\quad T_{L'}\subseteq L
\quad\Longrightarrow\quad T_L=T_{L'}.
\end{equation}
\end{restatable}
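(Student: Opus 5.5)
We prove the two directions separately, following Angluin's proof of the tell-tale characterization (Theorem~\ref{thm:identification-telltales}) and, for necessity, the locking-sequence argument. The collection $\cL$ is countable, so we fix an enumeration $L_1,L_2,\dots$ of $\cL$. For a finite set $T$, let $\cL_T=\{L'\in\cL: T_{L'}=T\}$ and $I_T=\bigcap_{L'\in\cL_T}L'$. By (i), $I_T$ is infinite whenever $T=T_L$ for some $L$.

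\emph{Sufficiency.} Given an assignment satisfying (i) and (ii), on input $\sigma$ we look for the least index $j\le|\sigma|$ such that $T_{L_j}\subseteq\supp(\sigma)\subseteq L_j$. If such a $j$ exists, we output $I_{T_{L_j}}$. Otherwise we output some fixed default infinite set. Fix a target $K=L_k$ and an enumeration $\bar\sigma$ of $K$.
\begin{itemize}
\item Take any $j<k$ with $L_j\not\supseteq K$. Some element of $K\setminus L_j$ eventually appears, and from then on $j$ is never chosen again.
\item Take any $j<k$ with $L_j\supseteq K$. Then $T_{L_k}\subseteq L_j$. Also, once the prefix contains $T_{L_j}$, we have $T_{L_j}\subseteq K$. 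Condition (ii) then gives $T_{L_j}=T_{L_k}$.
\item Index $k$ itself qualifies as soon as $T_{L_k}$ has appeared and $n\ge k$.
\end{itemize}
Hence, from some point on, the chosen index is the least $j\le k$ with $L_j\supseteq K$ and $T_{L_j}\subseteq\supp(\bar\sigma_n)$, and every such $j$ has $T_{L_j}=T_{L_k}$. It remains to rule out that this least index keeps changing forever. Such a change happens only when a smaller $j$ with $L_j\supseteq K$ has its tell-tale appear. There are finitely many indices below $k$, so the chosen index eventually freezes. The output is then the fixed set $I_{T_{L_k}}$, which is contained in $L_k=K$ because $L_k\in\cL_{T_{L_k}}$. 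This gives stability with the output inside $K$.

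\emph{Necessity.} Let $\cG$ be a stable generator. Call $\sigma$ a \emph{locking sequence} for $L$ if $\sigma$ is consistent with $L$ and $\cG(\sigma\cdot\tau)=\cG(\sigma)$ for every $\tau$ consistent with $L$. The standard Blum--Blum argument shows that every $L$ has a locking sequence. Suppose not. Then from any prefix we can extend by some $\tau$ consistent with $L$ that changes the output, and we interleave these extensions with the elements of $L$ one at a time. Repetitions can be removed, because $L$ is infinite and we can drop already-seen elements. The result is an enumeration of $L$ on which $\cG$ changes its output infinitely often, contradicting stability. Stability also gives $\cG(\sigma)\subseteq L$ for a locking sequence $\sigma$.

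The naive choice $T_L=\supp(\sigma_L)$ need not satisfy (ii). We therefore make canonical choices. Enumerate all finite repetition-free sequences as $\rho_1,\rho_2,\dots$, and let $\sigma_L$ be the first $\rho_m$ that is a locking sequence for $L$. Set $T_L=\supp(\sigma_L)$. Then (i) holds: if $T_{L'}=T_L$, the claim is that $\sigma_{L'}=\sigma_L$, so that all of $\cL_{T_L}$ contains the infinite set $\cG(\sigma_L)$. This is the main obstacle. Locking depends on the sequence, not merely on its support, so two languages could lock on different orderings of the same set.

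We fix this by redefining the witness, which (i) and (ii) permit. Let $T_L$ be the pair-free object $\supp(\sigma_L)$, but choose $\sigma_L$ \emph{minimizing first over supports in a fixed enumeration of finite sets, then over orderings}. For (ii), assume $T_L\subseteq L'$ and $T_{L'}\subseteq L$. We try the combined sequence $\sigma_L\cdot\sigma_{L'}$ after deleting repetitions. It is consistent with both $L$ and $L'$, so the locking property gives $\cG(\sigma_L)=\cG(\sigma_L\cdot\sigma_{L'})=\cG(\sigma_{L'})$. We then redefine the assignment by $T_L:=$ the least (in the fixed order) support of a locking sequence for $L$ whose output equals the locked output. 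If this is insufficient, we instead partition $\cL$ by the locked output set $G$ and assign witnesses uniformly within each class, using transitivity of the argument above. Verifying that the grouping can be made so that equal outputs yield equal witnesses, while (ii) still holds, is the delicate step.
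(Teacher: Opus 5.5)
\textbf{Gap: the only-if direction is not completed.} Your sufficiency direction works, but it assumes $\cL$ is countable, which the theorem does not. You can drop the enumeration entirely: once $T_K$ has appeared, any $L$ with $T_L\subseteq\supp(\sigma)\subseteq L$ satisfies $T_L\subseteq K$ and $T_K\subseteq L$, so (ii) alone forces $T_L=T_K$, and such an $L$ can be picked arbitrarily.

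The genuine gap is necessity, which you leave open at exactly the point that carries the theorem. You never produce an assignment satisfying (ii). The step $\cG(\sigma_L)=\cG(\sigma_L\cdot\sigma_{L'})=\cG(\sigma_{L'})$ is invalid: $\sigma_L\cdot\sigma_{L'}$ extends $\sigma_L$ but not $\sigma_{L'}$, so the locking property of $\sigma_{L'}$ says nothing about it. Equal outputs would not give $T_L=T_{L'}$ anyway.

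Choosing the first locking sequence, or minimizing over supports first, really does fail. Let $\cX=A\sqcup\{x,y\}$ with $A=\{a_1,a_2,\ldots\}$, $L=A\cup\{x\}$ and $L'=A\cup\{x,y\}$. Set $\cG(\sigma)=A\setminus\{a_1\}$ if $y\in\supp(\sigma)$ and $\sigma$ does not begin with $x$, and $\cG(\sigma)=A$ otherwise. This stably generates $\{L,L'\}$. Here $()$ is locking for $L$ but not for $L'$, and $(x)$ is locking for $L'$. With an order starting $()$, $(x)$ (or $\varnothing$, $\{x\}$), you get $T_L=\varnothing\subseteq L'$ and $T_{L'}=\{x\}\subseteq L$, yet $T_L\neq T_{L'}$. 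The same example shows that ``the locked output'' of a language is not well defined: $L'$ locks on $A$ after $(x)$ and on $A\setminus\{a_1\}$ after $(y)$. So your partition-by-output fallback is not yet a construction.

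\textbf{Missing idea.} Make the locking sequence canonical \emph{phase by phase}, so that every choice made for $L$ uses only elements that end up in $T_L$. The paper builds $\sigma_L$ as follows:
\begin{enumerate}
\item Append the $\prec$-least element of $L$ not yet used.
\item If some repetition-free continuation over the unused part of $L$ changes the output, append the shortlex-first such continuation and repeat.
\item If no such continuation exists, stop and set $T_L=\supp(\sigma_L)$.
\end{enumerate}
Stability forces termination, since otherwise the run enumerates $L$ with infinitely many output changes. The stopping condition makes $\sigma_L$ locking, and hence $\cG(\sigma_L)\subseteq L$.

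Now suppose $T_L\subseteq L'$ and $T_{L'}\subseteq L$, and look at the first phase where the two runs could differ. The least new element is the same for both, because each lies in its own witness and hence in the other language. A changing continuation chosen for one run lies in its witness, so it is also an admissible changing continuation for the other run. That contradicts either the other run's stop or its shortlex-first choice. Hence $\sigma_L=\sigma_{L'}$, which gives (ii). Applying this to every $L'$ with $T_{L'}=T_L$ gives $\cG(\sigma_L)\subseteq\bigcap_{T_{L'}=T_L}L'$, which gives (i).
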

\begin{proof}
\emph{If direction.}
Define the generator $\cG$ as follows. For every finite sequence $\sigma$, choose an arbitrary language $L\in\cL$ with \( T_L\subseteq \supp(\sigma)\subseteq L, \) and define 
\[ \cG(\sigma) := \bigcap\nolimits_{L'\in\cL:T_{L'}=T_L}L'. \] 
If no language $L\in\cL$ with \( T_L\subseteq \supp(\sigma)\subseteq L \) exists, define $\cG(\sigma)$ arbitrarily.  Fix a target language $K\in\cL$. Since $T_K$ is finite, every enumeration of $K$ has a prefix containing $T_K$. Consider any extension $\sigma$ of such prefix, satisfying \( T_K\subseteq \supp(\sigma) \subseteq K. \) Suppose that some $L\in\cL$ also satisfies $T_L\subseteq \supp(\sigma) \subseteq L$. Then 
$T_L\cup T_K\subseteq \supp(\sigma) \subseteq L\cap K$.
By condition (ii), this implies $T_L=T_K$.
Therefore, every language that can be selected by the generator at this point has the same associated set as $K$. Consequently, 
$\cG(\sigma) =\bigcap\nolimits_{L\in\cL:T_{L}=T_K}L$,
which is infinite by condition (i), is contained in $K$, and remains unchanged on every subsequent prefix. Hence, $\cL$ is stably generatable.

\emph{Only-if direction.}
Consider a generator $\cG$ that stably generates $\cL$. 
Consider a fixed ordering $\prec$ of $\cX$.
For every $L\in\cL$, we construct a finite sequence $\sigma_L \in L^*$  in phases and then take $T_L=\supp(\sigma_L)$.

%

Initialize $\tau_{L,0}=\emptyseq$. In phase $n\geq1$, let $z_{L, n}$ be the first element in $L\setminus\supp(\tau_{L,n-1})$ according to the ordering $\prec$, and set $\eta_{L,n}=\tau_{L,n-1}\cdot
z_{L,n}.$
Consider all finite repetition-free sequences
$\rho$ over $L\setminus\supp(\eta_{L,n})$. If there is such a sequence with
$\cG(\eta_{L,n}\cdot\rho)\ne\cG(\eta_{L,n}),$
let $\rho_{L,n}$ be the first such sequence in
the shortlex order on finite sequences induced by the
ordering $\prec$ of $\cX$, and set
$\tau_{L,n}=\eta_{L,n}\cdot\rho_{L,n}$. Otherwise, stop and define $\sigma_L:=\eta_{L,n}$ and $ T_L:=\supp(\sigma_L)$.

Because each phase includes the least unseen element in the fixed order,
an infinite run would enumerate $L$ with $\cG$ changing its output infinitely many times,
contradicting stable generation. Therefore the process terminates after a finite number of phases, and $T_L$ is finite.

By the stopping condition, no continuation over
$L\setminus T_L$ changes the output. Complete $\sigma_L$ to an
enumeration of $L$. The generator outputs $\cG(\sigma_L)$ on every
subsequent prefix, and correctness on this enumeration implies that this
set is contained in $L$. Thus
\begin{equation}\label{eq:stable-locking}
\cG(\sigma_L\cdot\rho)=\cG(\sigma_L)\subseteq L
\qquad\text{for every continuation }\rho\text{ over }L\setminus\supp(\sigma_L).
\end{equation}

We next compare the constructions for two languages $L,L'\in\cL$ such that $T_L\cup T_{L'}\subseteq L\cap L'.$
For the sake of contradiction, suppose $\sigma_{L} \neq \sigma_{L'}$.
Let $n\in\Nat$ be the largest index such that \( \tau_{L,n - 1}=\tau_{L',n - 1}. \) There are two cases. First observe that since $z_{L, n }, z_{L', n} \in L$, we have $z_{L, n} \fmreplace{\prec}{\preceq} z_{L', n}$. Similarly $z_{L', n} \fmreplace{\prec}{\preceq} z_{L, n}$. Thus, $z_{L, n} = z_{L', n}$. This indicates that $\eta:= \eta_{L, n} = \eta_{L', n}$. 

\textbf{Case 1.} Construction of $L$ or $L'$ stops at phase $n$. Without loss of
generality, suppose that the construction for $L$ stops and note that the construction of $L'$ cannot stop as otherwise $\sigma_L=\sigma_{L'}$. The extension $\rho_{L', n}$ satisfies $\cG(\eta \cdot \rho_{L', n}) \neq \cG(\eta)$. It is also consistent with $L$ and is a sequence of $L' \setminus \supp(\eta)$. Thus, it is also a sequence of $L \setminus \supp(\eta)$. This contradicts $L$ stopping.


\textbf{Case 2.} Both constructions continue, but $\rho:= \rho_{L, n}$ is not equal to $\rho' := \rho_{L', n}$. Without loss of generality, suppose that
$\rho$ comes first in the shortlex order. Observe that $\supp(\rho) \subseteq L \setminus \supp(\eta)$ and that
$\cG(\eta\cdot\rho)\ne\cG(\eta).$
Thus $\rho$ was an available continuation of $\eta$ for $L'$ that would change the output of $\cG$ as
well. Hence, it should have been chosen before $\rho'$ contradicting the rule
of selecting the first such continuation in shortlex order.

Since neither case is possible, the two constructions agree in
every phase and thus $\sigma_L=\sigma_{L'}$. Therefore,
$T_L=T_{L'}$. This proves condition (ii).

Finally, fix $L\in\cL$. Whenever $T_{L'}=T_L$, these sets are mutually
consistent with $L$ and $L'$, so the preceding argument gives
$\sigma_{L'}=\sigma_L$. Applying \eqref{eq:stable-locking} to each such
$L'$ yields
$
\cG(\sigma_L)\subseteq
\bigcap_{L'\in\cL:T_{L'}=T_L}L'$.
Since $\cG(\sigma_L)$ is infinite, this proves
condition (i) and completes the proof.
\end{proof}

The following characterization of certifiable generatability is not difficult to prove (see Appendix~\ref{app:stab-cert}); the standardized generator for the ``if'' direction always simply outputs the intersection of all languages in $\cL$ that are consistent with the observed data whenever the intersection is infinite.

\begin{restatable}{theorem}{restateCertChar}\label{thm:cert-char}
A collection $\cL$ is certifiably generatable if and only if, for every
$K\in\cL$, there is a finite set $T_K\subseteq K$ such that
\begin{equation}\label{eq:certificate}
\left|\bigcap\nolimits_{L\in\cL:T_K\subseteq L}L\right|=\infty.
\end{equation}
\end{restatable}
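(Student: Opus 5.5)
We prove the two directions separately. The ``if'' direction uses the standardized generator described before the statement. The ``only if'' direction extracts $T_K$ from a prefix on which the certificate fires.

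\emph{If direction.} Assume each $K\in\cL$ has a finite $T_K\subseteq K$ satisfying \eqref{eq:certificate}. For a finite sequence $\sigma$, let $I(\sigma):=\bigcap_{L\in\cL:\supp(\sigma)\subseteq L}L$. Set $c(\sigma)=1$ and $\cG(\sigma):=I(\sigma)$ whenever $I(\sigma)$ is infinite. Otherwise set $c(\sigma)=0$ and let $\cG(\sigma)$ be an arbitrary infinite set.

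Soundness holds by construction. If $c(\sigma)=1$, then $\cG(\sigma)$ is the intersection of all languages consistent with $\sigma$, so it lies inside each of them.

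For eventual certification, fix $K\in\cL$ and an enumeration $\bar\sigma$ of $K$. Since $T_K$ is finite, some prefix $\bar\sigma_{n^*}$ contains $T_K$. Every language consistent with $\bar\sigma_{n^*}$ then contains $T_K$. Hence $I(\bar\sigma_{n^*})\supseteq\bigcap_{L\in\cL:T_K\subseteq L}L$, which is infinite by assumption, so $c(\bar\sigma_{n^*})=1$.

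One degenerate case needs a word: if no language of $\cL$ is consistent with $\sigma$, the empty intersection should be read as $\cX$. Such $\sigma$ never arise as prefixes of enumerations of a target, so any convention that keeps soundness vacuous works.

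\emph{Only-if direction.} Let $(\cG,c)$ witness certifiable generation, and fix $K\in\cL$. Choose any enumeration $\bar\sigma$ of $K$, for instance the one induced by a fixed ordering of $\cX$. Eventual certification gives $n^*$ with $c(\bar\sigma_{n^*})=1$. Define $T_K:=\supp(\bar\sigma_{n^*})$, a finite subset of $K$.

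Every $L\in\cL$ with $T_K\subseteq L$ is consistent with $\bar\sigma_{n^*}$. Soundness therefore gives $\cG(\bar\sigma_{n^*})\subseteq L$ for each such $L$. So $\bigcap_{L\in\cL:T_K\subseteq L}L\supseteq\cG(\bar\sigma_{n^*})$, and the right-hand side is infinite because generators output infinite sets. This establishes \eqref{eq:certificate}.

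Neither direction has a real obstacle. The only point needing care is that ``consistent with $\sigma$'' depends only on $\supp(\sigma)$. This is what allows the certificate condition on a prefix to be rephrased purely in terms of the finite set $T_K$, in both directions.
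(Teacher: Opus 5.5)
Your proposal is correct and follows essentially the same route as the paper. For the only-if direction, you take $T_K$ to be the support of a certified prefix. For the if direction, you use the generator that outputs the intersection of all consistent languages and certifies whenever that intersection is infinite. The only difference is cosmetic: when no language is consistent with $\sigma$, the paper sets $c=0$ and outputs $\cX$, whereas your convention certifies vacuously, and both are sound.
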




These characterizations now help to establish the following hierarchy of requirements on language generation; see also Figure~\ref{fig:complete-gen-hierarchy} for a complete hierarchy including notions introduced in Section ~\ref{sec:queries}. 

\begin{restatable}{theorem}{restateGenerationHierarchy}\label{thm:generation-hierarchy}
For every collection $\cL$ of infinite languages, the following implications hold.
\[
\begin{aligned}
\cL\text{ is uniformly generatable}
&\quad\Longrightarrow\quad\cL\text{ is certifiably generatable.}\\
\cL\text{ is certifiably generatable}&\quad\Longrightarrow\quad\cL\text{ is stably generatable.}\\
\cL\text{ is stably generatable}&\quad\Longrightarrow\quad\cL\text{ is generatable in the limit}.
\end{aligned}
\]
Every implication is strict.
\end{restatable}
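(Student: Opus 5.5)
The plan is to prove the three implications directly from the definitions and the two characterizations (Theorems~\ref{thm:stable-char} and~\ref{thm:cert-char}), and then to give three separating collections.

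\textbf{Implications.}
\begin{itemize}
\item \emph{Uniform $\Rightarrow$ certifiable.} Let $\cG$ be $i$-uniform. Set $c(\sigma)=1$ iff $|\sigma|\ge i$. For soundness, take any $L\in\cL$ consistent with a repetition-free $\sigma$ of length $\ge i$. Since $L$ is infinite, $\sigma$ extends to an enumeration of $L$, so uniformity gives $\cG(\sigma)\subseteq L$. Eventual certification is trivial.
\item \emph{Certifiable $\Rightarrow$ stable.} This is the freezing argument of the Remark. Output $\cG(\sigma')$, where $\sigma'$ is the shortest certified prefix of $\sigma$ (and $\cG(\sigma)$ if none exists). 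Soundness at $\sigma'$ applies to the target $K$, so the frozen output is contained in $K$.
\item \emph{Stable $\Rightarrow$ generation in the limit.} This is immediate from the definitions.
\end{itemize}

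\textbf{Strictness of stable $\Rightarrow$ generatable.} This is the separation of \citet{kalavasis2025limits} for countable collections. I would either cite it or re-derive it via Theorem~\ref{thm:stable-char}. A candidate is the collection $\{\Nat\}\cup\{L_{k}\}$ of $\Nat$-valued languages arranged so that every finite subset of each language is covered by languages with pairwise finite intersections, making condition (i) fail for any admissible assignment.

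\textbf{Strictness of certifiable $\Rightarrow$ stable.} Here I want $\cL$ with a single language $K$ such that every finite $T\subseteq K$ lies in languages whose common intersection with $K$ is finite, yet stability still holds. I would try $\cX=\Nat$, $K=\Nat$, and $L_n=\{0,\dots,n\}\cup\{\text{odd numbers}\}\cup\{\text{evens} > 2n\}$, or more simply $L_n=\{0,\ldots,n\}\cup E_n$, with the $E_n$ infinite and pairwise disjoint.
\begin{itemize}
\item Certifiability fails at $K$: any finite $T\subseteq\{0,\dots,m\}$ is contained in every $L_n$ with $n\ge m$, and $K\cap\bigcap_{n\ge m}L_n$ is finite.
\item Stability holds via Theorem~\ref{thm:stable-char}. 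Take $T_{L_n}=\{n+1\text{-th element witnessing } E_n\}$ (one element of $E_n$ outside the other languages' finite parts) and $T_K=\emptyset$.
\item Condition (ii) must then be checked between $K$ and $L_n$. Since $T_K=\emptyset\subseteq L_n$ and $T_{L_n}\subseteq K$, we would need $T_K=T_{L_n}$. So instead set all $T_{L_n}=\emptyset=T_K$, and condition (i) then requires $\bigcap$ over everything to be infinite.
\end{itemize}
Adjusting this, I would choose $L_n=A\cup\{0..n\}$-type sets with a shared infinite core $A$ and $K=A\cup B$. Certifiability must fail only at some language, so I need a language whose finite subsets always have finitely intersecting supersets but which shares a ``fiber'' with infinite intersection. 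Concretely: $K=\Nat$, $L_n=\text{Evens}\cup\{2i+1:i\le n\}\cup\{2i+1: i> n, i\in S_n\}$ with the $S_n$ chosen so that intersections of tails are finite. Fibers are assigned via $T=\emptyset$ for the $L_n$ and $T_K=\{1,3\}$-style witnesses, adjusted to meet (ii). Getting this combinatorial example right, balancing condition (ii) of Theorem~\ref{thm:stable-char} against the failure of \eqref{eq:certificate}, is the main obstacle; I expect it may be cleaner to use pattern languages as hinted in the introduction.

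\textbf{Strictness of uniform $\Rightarrow$ certifiable.} Take $\cL=\{L_n\}_n$ over $\Nat$ with $L_n=\{n\}\cup\{m>n^2: m\equiv n \bmod{?}\}$-style disjoint infinite sets, indexed so that $T_{L_n}=\{\min L_n\}$ isolates $L_n$. Then \eqref{eq:certificate} holds since $\bigcap=L_n$. Uniformity fails because, for any $i$, a sequence of $i$ elements lying in a union of two languages (after adding a common finite prefix block $\{0,\dots,i\}$ to languages of the form $\{0,\dots,n\}\cup D_n$ with disjoint $D_n$) leaves no infinite common part to generate from.
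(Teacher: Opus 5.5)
\textbf{Main gap: the certifiable-versus-stable separation.} Your three implications are correct and use the same arguments as the paper: the length-threshold certificate, freezing at the first certified prefix, and the trivial last step. Citing \citet{kalavasis2025characterizations} for the stable-versus-limit separation is also what the paper does, using $\{\mathcal{X}\}\cup\{\mathcal{X}\setminus\{w\}\}$. The gap is that you never produce a working example separating certifiable from stable generation, and each of your candidates fails.
\begin{itemize}
\item The families in which every language contains all odd numbers, or all even numbers, have an infinite common intersection. They are therefore $0$-uniformly generatable, hence certifiable.
\item The family $\{\mathbb{N}\}\cup\{\{0,\ldots,n\}\cup E_n\}$ with pairwise disjoint $E_n$ is not stably generatable. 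Here $T_{\mathbb{N}}\subseteq L_n$ for all large $n$, so condition (ii) forces a common witness, and $\bigcap_{n\ge m}L_n$ is finite.
\end{itemize}
This is structural, not bad luck. Suppose $T_L\subseteq K$ for every $L\supseteq T_K$, as happens when $K=\mathbb{N}$. Then condition (ii) of Theorem~\ref{thm:stable-char} makes the fiber of $T_K$ equal to $\{L\in\mathcal{L}:T_K\subseteq L\}$. Condition (i) then says exactly that $T_K$ is a certificate in the sense of Theorem~\ref{thm:cert-char}. So non-certifiability must occur at languages that do not dominate their confounders. The missing idea is to use pairwise incomparable languages, as the paper does with $\{\mathbb{N}\setminus\{i\}:i\in\mathbb{N}\}$.
\begin{itemize}
\item No certificate exists: for finite $T$, $\bigcap_{L\supseteq T}L=\bigcap_{i\notin T}(\mathbb{N}\setminus\{i\})=T$.
\item Stability holds: outputting $\mathbb{N}\setminus\{j\}$, with $j$ the least unseen number, stabilizes on the target. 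Equivalently, $T_{\mathbb{N}\setminus\{i\}}=\{1,\ldots,i-1\}$ gives singleton fibers and satisfies (ii).
\end{itemize}

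\textbf{Secondary gap: the uniform-versus-certifiable example.} A family of pairwise disjoint infinite languages is $1$-uniformly generatable, so it cannot separate these notions. Once you switch to $L_n=\{0,\ldots,n\}\cup D_n$ with the $D_n$ inside the same copy of $\mathbb{N}$, certifiability fails. Your witness $\{\min L_n\}=\{0\}$ lies in every language. Worse, every finite $T$ fails: with $m=\max T$, we have $T\subseteq L_{m'}$ for all $m'\ge m$. Disjointness of $D_m$ and $D_{m+1}$ then gives $\bigcap_{m'\ge m}L_{m'}\subseteq\{0,\ldots,m+1\}$. The construction works only if the finite blocks and the disjoint tails live in disjoint parts of the domain, with the certificate taken from the tail. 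This is the paper's $L_i=\{2,4,\ldots,2i\}\cup A_i$ with pairwise disjoint odd sets $A_i$. A single $a\in A_i$ lies only in $L_i$. Meanwhile the prefix $(2,4,\ldots,2m)$ is consistent with every $L_j$ for $j\ge m$, and their intersection is $\{2,\ldots,2m\}$, so the collection is not $m$-uniformly generatable.
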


\begin{proof} \emph{(Sketch.)} Implications $A\Rightarrow B$ are shown by standardizing generators witnessing $A$ so as to satisfy the criteria for $B$. Strictness is shown using the class of all languages $\{2,4,\ldots,2i\}\cup\{2^i(2k-1)-1:k\in\Nat\}$ for the first implication; the class of all languages $\mathbb{N}\setminus\{i\}$ for the second implication; and the class of all languages $\mathbb{N}\setminus\{i\}$ plus the language $\mathbb{N}$ (which was also used by \cite{kalavasis2025characterizations}) for the third implication. See Appendix~\ref{app:stab-cert} for a full proof.
\end{proof}

\subsection{Separations Witnessed by Families of Pattern Languages}

It is not difficult to construct language families that are stably or certifiably generatable, but are not identifiable in the limit. For instance, any language collection with a common infinite intersection is \(1\)-uniformly generatable. Notably though, we will show that similar separations also arise for natural structured language families, specifically, for classes of (erasing) pattern languages, which we will introduce next, following the original definitions by \citet{angluin80patterns} and \citet{shinohara82}.

Let $\Sigma$ be a finite alphabet with $|\Sigma|\ge 2$ and let $\mathcal Z$ be a countable
set of variables disjoint from $\Sigma$. A \emph{pattern} is a finite word
$\alpha\in(\Sigma\cup\mathcal Z)^+$ containing at least one variable.\footnote{Angluin's original definition allows a pattern not to have any variable, in which case its language is a singleton. Since the formal framework of language generation deals only with infinite languages, we exclude such patterns.}
An \emph{erasing substitution} fixes the symbols of $\Sigma$ and maps each
variable to an arbitrary word in $\Sigma^*$, possibly the empty word
$\varepsilon$. It extends to a homomorphism on patterns. The corresponding
\emph{erasing pattern language} is
\[
L_E(\alpha)=\{h(\alpha):h\text{ is an erasing substitution}\}.
\]
We write $\eFEP_\Sigma$ for the collection of these languages, and
$\eTFEP_\Sigma$ for the subcollection defined by nonempty terminal-free
patterns $\alpha\in\mathcal Z^+$. All these languages are infinite.

While uniform generation is a very strong requirement, for binary alphabets, the class $\eTFEP_\Sigma$ is a ``natural'' language family that is 2-uniformly generatable, but not identifiable in the limit. 2-uniform generatability is shown by constructing an infinite subset of the target language $L$ directly from the first non-empty word observed in the enumeration of $L$; see Appendix~\ref{app:stab-cert} for details.

\begin{restatable}{theorem}{restateTfPatterns}\label{thm:tf-patterns}
The collection $\eTFEP_\Sigma$ is certifiably generatable
and $2$-uniformly generatable, but is not $1$-uniformly generatable, and, when $|\Sigma|=2$, not identifiable in the limit.
\end{restatable}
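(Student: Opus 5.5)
The plan is to build the generator and the certificate from one structural fact. Every terminal-free erasing pattern language is closed under homomorphic images. Let $\alpha\in\mathcal Z^+$ and $w=h(\alpha)$ for some erasing substitution $h$. For any morphism $g:\Sigma^*\to\Sigma^*$, the composition $g\circ h$ is again an erasing substitution, since $\alpha$ has no terminals. Hence $g(w)=(g\circ h)(\alpha)\in L_E(\alpha)$.

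For a nonempty word $w$, define $H(w)=\{g(w): g \text{ a morphism of } \Sigma^*\}$. This set is infinite: the morphisms $a\mapsto a^k$, applied letterwise, give words of length $k|w|$. By the fact above, $H(w)$ is contained in every language of $\eTFEP_\Sigma$ that contains $w$.

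\textbf{Certifiability.} Every language in the class contains a nonempty word. For each $K$ choose $T_K=\{w\}$ with $w\in K$ nonempty. Then $\bigcap_{L:T_K\subseteq L}L\supseteq H(w)$ is infinite, and Theorem~\ref{thm:cert-char} applies.

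\textbf{2-uniform generatability.} Any repetition-free prefix of length $n\ge 2$ contains at least one nonempty word $w$. The generator outputs $H(w)$ for the first such $w$, and this set lies in every consistent language.

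\textbf{Failure of 1-uniform generatability.} Every language in the class contains $\varepsilon$, obtained by erasing all variables. So an enumeration may begin with $\varepsilon$. A $1$-uniform generator would then have to output, on the sequence $(\varepsilon)$, an infinite subset of every language in $\eTFEP_\Sigma$. I will show this intersection is $\{\varepsilon\}$. It is contained in $\bigcap_{k\ge1}L_E(x^k)$, where $L_E(x^k)$ is the set of $k$-th powers. A nonempty word of length $m$ is not an $(m+1)$-th power, so the intersection is $\{\varepsilon\}$. This is a contradiction. (The same argument also rules out $0$-uniformity.)

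\textbf{Non-identifiability for $|\Sigma|=2$.} I would use Theorem~\ref{thm:identification-telltales}: it suffices to exhibit one pattern $\alpha$ such that $L_E(\alpha)$ has no finite tell-tale. That means for every finite $T\subseteq L_E(\alpha)$ there should be a pattern $\beta$ with $T\subseteq L_E(\beta)\subsetneq L_E(\alpha)$.

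This is the main obstacle. It is a known but delicate result of Reidenbach that binary terminal-free E-pattern languages are not learnable in the limit, and I would follow his construction. He uses a specific pattern, something like $\alpha=x_1^2x_2^2x_3^2$ or a variant. For each bound on the lengths of words in $T$, he builds a longer pattern $\beta$ with two properties:
\begin{enumerate}
\item $\beta$ is a morphic image of $\alpha$, which gives the inclusion $L_E(\beta)\subseteq L_E(\alpha)$;
\item every short binary word of $L_E(\alpha)$ still has a representation by $\beta$.
\end{enumerate}
Property 2 relies on the combinatorics of binary words, where short words are forced to be ambiguous. Strictness of the inclusion would come from exhibiting a longer word of $L_E(\alpha)$ not generated by $\beta$.

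If I could not reproduce that construction cleanly, I would cite Reidenbach's theorem directly and combine it with Theorem~\ref{thm:identification-telltales}.
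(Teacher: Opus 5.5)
Your proposal is correct and follows essentially the same route as the paper: it also takes a nonempty word $w=a_1\cdots a_m$ among the first two examples and outputs $\{a_1^k\cdots a_m^k:k\geq 1\}$, which is exactly the letter-power part of your $H(w)$. It likewise rules out $1$-uniformity via $\bigcap_{r\geq1}L_E(x^r)=\{\varepsilon\}$ and obtains certifiability from singleton nonempty words. For non-identifiability when $|\Sigma|=2$ it simply cites Reidenbach, just as your fallback does, with your morphic-closure framing being only a mild generalization of the paper's substitution argument.
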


Allowing terminal symbols results in a class that is no longer uniformly, but still stably generatable. The proof details are more involved than for Theorem~\ref{thm:tf-patterns} and are also provided in Appendix~\ref{app:stab-cert}.

\begin{restatable}{theorem}{restatePatterns}\label{thm:patterns}
The collection $\eFEP_\Sigma$ is stably
generatable, but is not uniformly generatable, and, when  $|\Sigma|\in\{2,3,4\}$, not identifiable in the limit.
\end{restatable}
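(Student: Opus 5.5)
The plan is to prove the three claims separately: stable generatability via the finite-witness characterization of Theorem~\ref{thm:stable-char}, non-uniformity by an adversary argument, and non-identifiability by appeal to known results.

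\textbf{Stable generatability.} For a pattern $\alpha$, let $w_\alpha\in\Sigma^*$ be the word obtained from $\alpha$ by erasing all variables. Every word of $L_E(\alpha)$ has length at least $|w_\alpha|$, with equality only for $w_\alpha$ itself, so $w_\alpha$ is the unique shortest word of $L_E(\alpha)$. I would define $T_{L_E(\alpha)}:=\{w_\alpha\}$ whenever $w_\alpha\neq\varepsilon$. The case $w_\alpha=\varepsilon$ means $\alpha$ is terminal-free. There I would use Theorem~\ref{thm:tf-patterns}: $\eTFEP_\Sigma$ is certifiably, hence stably, generatable (Theorem~\ref{thm:generation-hierarchy}). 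So Theorem~\ref{thm:stable-char} yields an assignment $(T'_L)_{L\in\eTFEP_\Sigma}$ satisfying (i) and (ii), and I set $T_L:=T'_L\cup\{\varepsilon\}$. Adding $\varepsilon$ preserves (i) and (ii), since $\varepsilon$ lies in every terminal-free language.

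Condition (ii) of Theorem~\ref{thm:stable-char} is then checked in three cases.
\begin{itemize}
\item Both $w_\alpha,w_\beta$ are nonempty. Then $w_\alpha\in L_E(\beta)$ gives $|w_\beta|\le|w_\alpha|$, and symmetrically $|w_\alpha|\le|w_\beta|$. By uniqueness of the shortest word, $w_\alpha=w_\beta$.
\item Exactly one is empty, say $w_\beta=\varepsilon$. Then $\varepsilon\in T_{L_E(\beta)}\subseteq L_E(\alpha)$ forces $w_\alpha=\varepsilon$, a contradiction.
\item Both are empty. This is exactly condition (ii) for the assignment $T'$.
\end{itemize}

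For condition (i) with $w:=w_\alpha\neq\varepsilon$, the fiber consists of all pattern languages $L_E(\beta)$ with $w_\beta=w$. I claim $w^k\in L_E(\beta)$ for every $k\ge 2$. Erase all variables of $\beta$ except one occurrence position, i.e.\ one variable $x$; this gives a pattern $w_0x\,w_1x\cdots$ with $w_0w_1\cdots=w$. In the case of a single occurrence, $w=w_0w_1$ and substituting $x\mapsto w_1w^{k-2}w_0$ gives $w^k$. The multi-occurrence case needs a small variant, e.g.\ substituting $x$ so that only one occurrence is used. I expect that in the worst case one instead uses words of the form $w$ padded by powers chosen per occurrence count; I would verify this with a divisibility argument on $k$, for instance restricting to $k\equiv 1$ modulo a suitable value, or choosing $k$ as multiples that work for all occurrence counts. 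If this fails, I would fall back on an infinite family such as $\{w_0 u w_1\cdots\}$ with $u$ ranging over strings that are simultaneously realizable. This verification is the delicate part of (i).

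\textbf{Not uniformly generatable.} Fix $i$ and any generator $\cG$. I would construct a repetition-free sequence $\sigma$ of length $i$ such that, for the infinite set $G=\cG(\sigma)$, some language of $\eFEP_\Sigma$ consistent with $\sigma$ misses a word of $G$. Taking that language as $K$ and extending $\sigma$ to an enumeration of $K$ then gives the contradiction. The planned family of consistent languages is $L_E(a^Nx^q)$ with $q$ ranging over divisors of a large $P$, together with $\sigma=(a^N,a^{N+P},\dots,a^{N+(i-1)P})$. The idea is that any word of $G$ outside $a^*$, or of length below $N$, is excluded by one such language, and that every remaining word $a^{N+m}$ is excluded by choosing $q\nmid m$. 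The obstacle is that $G$ might consist only of words $a^{N+kP}$. To handle this I would let $N$ and $P$ vary and argue by diagonalization over the choice of $\sigma$, or add patterns with trailing terminals such as $a^Nx^qa^r$ to break the arithmetic progression. I expect this step to be the main difficulty of the whole proof.

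\textbf{Not identifiable.} For $|\Sigma|\in\{2,3,4\}$ I would cite Reidenbach's results that erasing pattern languages over such alphabets are not learnable in the limit from positive data, i.e.\ admit no tell-tale assignment. By Theorem~\ref{thm:identification-telltales}, this gives non-identifiability.
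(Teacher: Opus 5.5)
Your proposal has genuine gaps in the stable-generation and non-uniformity parts.

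\textbf{Stable generation.} Your singleton witness $T_{L_E(\alpha)}=\{w_\alpha\}$ does satisfy condition (ii) of Theorem~\ref{thm:stable-char}, but condition (i) fails. No clever choice of infinite family can rescue it, because the fiber itself has finite intersection. For a letter $a$, every language $L_E(x^pa)$ with $p\ge1$ has shortest word $a$ and so receives $\{a\}$. Yet $\bigcap_{p\ge1}L_E(x^pa)=\{a\}$, because a nonempty word of length $\ell$ is not a $p$-th power for $p>\ell$. Your claim $w^k\in L_E(\beta)$ for all $k\ge2$ already fails for $\beta=xxa$, which does not contain $a^2$.

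The missing idea is that the witness must also encode the minimal occurrence count $r=\min_x|\alpha|_x$ and where a minimally occurring variable sits. The paper takes $T_L=\{s_L,w_L\}$, the first two words of $L$ in shortlex order. Here $s_L$ is the terminal word, $|w_L|=|s_L|+r$, and $w_L$ arises from $s_L$ by inserting one letter $c$ at the $r$ occurrences of a variable $x$ with $|\alpha|_x=r$. Comparing the maximal $c$-blocks of $s_L$ and $w_L$ determines how many occurrences fall into each block. Hence the words produced by $x\mapsto c^k$ (erasing all other variables) depend only on the pair $(s_L,w_L)$ and lie in every language of the fiber. Condition (ii) then follows from shortlex minimality, exactly as in your argument. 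This also covers terminal-free patterns, so the detour through Theorem~\ref{thm:tf-patterns} is unnecessary.

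\textbf{Non-uniformity.} The unary sample $\sigma=(a^N,a^{N+P},\dots)$ cannot work, and this goes beyond the obstacle you noticed. A pattern language containing $a^N$ has only the terminal $a$, say $t\le N$ times. Its intersection with $a^*$ is $\{a^{t+m}:m\in S\}$, where $S$ is the additive monoid generated by the occurrence counts of its variables. For $i\ge2$, consistency with $\sigma$ puts $N-t$ and $N-t+P$ into $S$. By the Frobenius bound, every $a^{N+kP}$ with $k$ above a threshold depending only on $N$ and $P$ then lies in every consistent language. A generator can therefore safely output this infinite set on $\sigma$. Neither diagonalization over such samples nor patterns $a^Nx^qa^r$ can help; the latter do not even contain $a^N$ when $r\ge1$.

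What you need is, for every $m$, a sample of size $m$ whose consistent languages already have a finite common intersection. An $m$-uniform generator's output must lie in all of them, so no diagonalization against $\cG$ is required. The paper uses two letters: $L_E(xaxb)\cap L_E(yaby)=\{b^jab^{j+1}:j\ge0\}$. The pattern $z_1^1\cdots z_{m-1}^{m-1}\,a\,z_0^1z_1^2\cdots z_{m-1}^{m}$ cuts this down to exactly $\{b^jab^{j+1}:0\le j<m\}$, by counting the $b$'s before and after the unique $a$.

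Your non-identifiability part, citing Reidenbach, matches the paper.
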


\section{Stable and Certifiable Language Generation with Query Feedback}\label{sec:queries}

Since stable and certifiable generation is difficult to achieve even for some structurally simple language classes, it is natural to ask whether feedback can help the learner. We therefore define and analyze stability and certifiability in the membership-query model of \citet{pmlr-v291-charikar25a}, which was later studied by \citet{hanneke2026language}. In this model, at round $n$, the generator observes
the initial segment $\bar{\sigma}_n=(x_1,\ldots,x_n)$ of an enumeration $\bar{\sigma}$ of the target language $K$, chooses a query
$y_n\in\cX$ based on $\bar{\sigma}_n$ and the previous query answers, and receives $a_n=\mathbf 1\{y_n\in K\}$.
The generator's output may depend on the full interaction history
$(\bar{\sigma}_n,\bar{\alpha}_n)$, where $\fmreplace{\bar{\alpha}_n}{\bar{\alpha}}=(a_1,a_2,\ldots)$. The query strategy is
fixed as part of the generating procedure, and hence the queried points can be
reconstructed from the interaction history. 
We first recall the set-output version of generation with query feedback
introduced by \citet{hanneke2026language}.

\begin{definition}[\cite{pmlr-v291-charikar25a}]\label{def:query-generation}
A query-feedback generator consists of a query strategy and a sequence of maps $\cG = (\cG_n)_{n \in \Nat_0}$ where for every $n \in \Nat_0, \ \cG_n:\cX^n\times\{0,1\}^n\to\cB_{\cX}.$ It \emph{generates $\cL$ in the limit with query feedback} if, for every
$K\in\cL$ and every enumeration $\bar{\sigma}$ of $K$, there exists $n^* \in \Nat_0$
satisfying $\cG(\bar{\sigma}_n,\bar{\alpha}_n)\subseteq K$ for all $n \geq n^*$.
\end{definition}

We generalize this model to stable and certifiable generation with query feedback.

\begin{definition}\label{def:stable-query}
A collection $\cL$ is \emph{stably generatable with query feedback} if there
is a query-feedback generator $\cG$ such that, for every $K\in\cL$ and every
enumeration $\bar{\sigma}$ of $K$, there exists $n^*\in\Nat_0$ satisfying
$\cG(\bar{\sigma}_n,\bar{\alpha}_{n})=\cG(\bar{\sigma}_{n^*},\bar{\alpha}_{n^*})\subseteq K$ for every $n\geq n^*$.
\end{definition}

The notion of certifiability is defined analogously to the no-feedback setting,
except that the certificate may now depend on both the observed positive
examples and the answers to the membership queries. Thus, when certification
occurs, the current output must be valid for every language consistent with the
entire interaction history, including the answers to the queries.
To formalize this, for finite disjoint sets $P,N\subseteq\cX$, define the
\emph{signed version space}
\begin{equation*}
\cV_{\cL}(P,N)
=
\{L\in\cL:P\subseteq L,\ L\cap N=\varnothing\}.
\end{equation*}
At round $n$, let $P_n=\supp(\bar{\sigma}_n)\cup\{y_t:t\leq n,\ a_t=1\}$ and $N_n=\{y_t:t\leq n,\ a_t=0\}$.
Then $\cV_{\cL}(P_n,N_n)$ is exactly the collection of languages in $\cL$
that are consistent with all observed examples and all membership-query
answers up to round $n$.

\begin{definition}\label{def:cert-query}
A collection $\cL$ is \emph{certifiably generatable with query feedback} if
there exist a query-feedback generator $\cG$ and certificate
$c: \bigcup_{n\in\Nat_0} (X^n \times \{0, 1\}^n) \to\{0,1\}$ satisfying:
\begin{enumerate}[label=(\roman*)]
\item On every realizable finite interaction, if
$c(\bar{\sigma}_n,\bar{\alpha}_n)=1$, then
$\cG(\bar{\sigma}_n,\bar{\alpha}_n)\subseteq
\bigcap_{L\in\cV_{\cL}(P_n,N_n)}L$.
\item For every $K\in\cL$ and every
enumeration $\bar{\sigma}$ of $K$, the induced interaction satisfies
$c(\bar{\sigma}_{n^*},\bar{\alpha}_{n^*})=1$ for some $n^*\in\Nat_0$.
\end{enumerate}
\end{definition}

Query-feedback generation was characterized by \citet{hanneke2026language} in terms of the existence of a countable inner-cover, defined below. Interestingly, stable query-feedback generation admits the same characterization; the reader is referred to Appendix~\ref{app:query} for a proof of Theorem~\ref{thm:stable-query}.

\begin{definition}\label{def:inner-cover}
A \emph{countable inner-cover} of $\cL$ is a countable family
$\cC=\{C_1,C_2,\ldots\}\subseteq\cB_{\cX}$ such that every $K\in\cL$
contains some $C_i$. The sets $C_i$ need not belong to $\cL$.
\end{definition}

\begin{restatable}{theorem}{restateStableQuery}\label{thm:stable-query}
For a collection $\cL$, the following are equivalent:
\begin{enumerate}[label=(\roman*)]
\item $\cL$ is stably generatable with query feedback.
\item $\cL$ is generatable in the limit with query-feedback  generator (with set-valued output).
\item $\cL$ has a countable inner-cover.
\end{enumerate}
\end{restatable}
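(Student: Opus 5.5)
I will prove the cycle (i)$\Rightarrow$(ii)$\Rightarrow$(iii)$\Rightarrow$(i). The implication (i)$\Rightarrow$(ii) is immediate, since a stable query-feedback generator is in particular a query-feedback generator that eventually outputs subsets of $K$. For (ii)$\Rightarrow$(iii) I will simply invoke the characterization of \citet{hanneke2026language}: a collection is generatable in the limit with query feedback (set-valued output) only if it has a countable inner-cover. If a self-contained argument is preferred, I would argue as follows. Restrict attention to the interactions arising when the target is fixed. The set of possible outputs $\cG(\sigma,\alpha)$ ranges over finite interaction histories, but $\cX^*\times\{0,1\}^*$ is countable, so the family $\cC=\{\cG(\sigma,\alpha)\}$ of all outputs is a countable family of infinite sets. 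For every $K\in\cL$, any enumeration of $K$ yields some round $n^*$ with $\cG(\bar\sigma_{n^*},\bar\alpha_{n^*})\subseteq K$. Hence $\cC$ is a countable inner-cover.

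The substantive direction is (iii)$\Rightarrow$(i). Fix an inner-cover $\cC=\{C_1,C_2,\ldots\}$. The generator keeps a current index $i$, starting at $i=1$, and outputs $C_i$. It uses its queries to test membership of the elements of $C_i$ in $K$. Concretely, fix an enumeration $c_{i,1},c_{i,2},\ldots$ of each $C_i$. At each round, the generator queries the next untested element of $C_i$. If the answer is $0$, it moves to $i+1$ and restarts testing from the first element of $C_{i+1}$; otherwise it keeps $C_i$. The output at round $n$ is $C_{i_n}$, which is a function of the interaction history, as required.

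For correctness, let $j$ be the least index with $C_j\subseteq K$, which exists by the inner-cover property. Every $C_i$ with $i<j$ contains some element outside $K$. Because the generator tests the elements of $C_i$ in order, one query per round, that element is reached after finitely many rounds, so the generator abandons $C_i$. Once the index reaches $j$, every query is answered $1$, so the index never changes again. The output is therefore constantly $C_j\subseteq K$ from that round on. Note that the argument never uses the positive examples, so the result holds for every enumeration.

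The main point to check is that stability holds despite the query process never terminating. This is fine, because stability concerns only the output, not the queries. The other point to check is the index bookkeeping: the generator must never skip past $j$. That is guaranteed because it advances only upon a negative answer, and no element of $C_j$ can receive one.
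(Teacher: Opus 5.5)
Your proposal is correct and follows essentially the paper's route. (i)$\Rightarrow$(ii) is immediate. For (ii)$\Rightarrow$(iii) the paper cites \citet{hanneke2026language}, and your counting argument (countably many finite interaction histories, hence countably many infinite outputs, one of which lies inside each $K$) is a valid self-contained replacement. For (iii)$\Rightarrow$(i), both arguments eliminate refuted cover elements in index order until reaching the least $C_j\subseteq K$.

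The only difference in (iii)$\Rightarrow$(i) is cosmetic. You query the elements of the current candidate adaptively, so the output depends only on the number of negative answers, whereas the paper queries a fixed enumeration of $\cX$ and outputs the least index not yet contradicted. If the cover is finite, you should repeat its elements as the paper does, so that the output is defined on every history.
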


We next characterize certifiable generation with query feedback in terms of finite positive and negative witnesses; the proof is again provided in Appendix~\ref{app:query}.

\begin{restatable}{theorem}{restateCertifiableQueryFeedback}\label{thm:certifiable_query_feedback}
A collection $\mathcal L$ is certifiably generatable with query feedback if and only if, for every $K\in\mathcal L$, there
exist finite sets $P_K\subseteq K$ and $N_K\subseteq\mathcal X\setminus K$
such that
$\left|
\bigcap\nolimits_{L\in\mathcal V_{\mathcal L}(P_K,N_K)}L
\right|=\infty$.
\end{restatable}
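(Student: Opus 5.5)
We prove the two directions separately, mirroring the query-free characterization of Theorem~\ref{thm:cert-char}. Throughout, write $I(P,N):=\bigcap_{L\in\cV_{\cL}(P,N)}L$ and use the monotonicity fact that enlarging $P$ or $N$ shrinks the version space, so $P\subseteq P'$ and $N\subseteq N'$ imply $I(P,N)\subseteq I(P',N')$.

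\emph{If direction.} Fix the witnesses $(P_K,N_K)_{K\in\cL}$. Since $\cL$ is countable, the set of pairs $(P_K,N_K)$ is countable; the same holds for all pairs of finite sets when $\cX$ is countable. Enumerate these pairs as $(P^{(1)},N^{(1)}),(P^{(2)},N^{(2)}),\ldots$.

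The query strategy processes the pairs in order. For the current index $j$, it queries the elements of $P^{(j)}\cup N^{(j)}$ one per round; this takes finitely many rounds. At every round $n$, the generator outputs $I(P_n,N_n)$ whenever this set is infinite, and sets the certificate $c=1$ exactly then; otherwise it outputs an arbitrary infinite set and sets $c=0$.

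Soundness is immediate from the definition of $c$. For eventual certification, let $K$ be the target, and let $j$ be an index with $(P^{(j)},N^{(j)})=(P_K,N_K)$. The strategy reaches index $j$ after finitely many rounds and finishes querying it after finitely many more. From then on every element of $P_K$ has received answer $1$ and lies in $P_n$, and every element of $N_K$ has received answer $0$ and lies in $N_n$. By monotonicity,
\[
I(P_n,N_n)\supseteq I(P_K,N_K),
\]
and the right-hand side is infinite by hypothesis, so $c=1$ at that round. The positive examples seen in the enumeration only enlarge $P_n$, which only helps by monotonicity. The queries also never have to stop, so there is no tension between continuing to query and certifying.

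\emph{Only-if direction.} Suppose $(\cG,c)$ certifiably generates $\cL$ with query feedback. Fix $K\in\cL$ and any enumeration of $K$, and take the first round $n^*$ with $c=1$. Set
\[
P_K:=P_{n^*}\subseteq K,\qquad N_K:=N_{n^*}\subseteq\cX\setminus K,
\]
both finite; the containments hold because all answers are truthful for $K$. Soundness gives $\cG(\bar{\sigma}_{n^*},\bar{\alpha}_{n^*})\subseteq I(P_K,N_K)$. The left-hand side lies in $\cB_{\cX}$ and is therefore infinite, so the intersection is infinite.

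The only subtle point is the countability needed to schedule the queries in the if direction. It comes for free from the countability of $\cL$, or alternatively from the countability of $\cX$, since there are only countably many finite pairs over a countable domain. Beyond that, I expect the argument to be routine.
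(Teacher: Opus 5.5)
Your proof is correct and follows the paper's argument. Necessity reads off $P_K,N_K$ from the certifying transcript and applies soundness; sufficiency uses a fixed non-adaptive query schedule, outputs and certifies $\bigcap_{L\in\cV_{\cL}(P_n,N_n)}L$ whenever it is infinite, and concludes by monotonicity of the version space. The paper simply queries a fixed enumeration $u_1,u_2,\ldots$ of all of $\cX$, so it never needs to schedule witness pairs; also, $\cL$ is not assumed countable, so it is your fallback appeal to the countability of $\cX$ that actually justifies your enumeration.
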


Figure~\ref{fig:complete-gen-hierarchy} summarizes the relationships
between the notions studied in this paper. 
In particular, it shows that certifiable generation with query feedback is fundamentally different from both stable and ordinary query-feedback generation. We show that there exists a collection that is stably generatable in the limit even without feedback (and identifiable in the limit), but not certifiably generatable with query feedback. Thus, although membership queries allow a learner to observe negative examples, they do not suffice to certify every stably generatable collection. See Appendix~\ref{app:query} for proof details.

\begin{figure}[t]
\centering
\resizebox{0.88\linewidth}{!}{%
\begin{tikzpicture}[
  every node/.style={font=\small,align=center},
  notion/.style={
    draw,rounded corners=1.5pt,
    minimum height=18mm,inner sep=3pt
  },
  arr/.style={-{Latex[length=2mm]},line width=0.65pt}
]
\node[notion,text width=18mm] (U) at (0,0)
  {Uniform\\generation};
\node[notion,text width=18mm] (C) at (2.45,0)
  {Certifiable\\generation};
\node[notion,text width=28mm] (CQ) at (5.3,0)
  {Certifiable generation\\with query feedback};
\node[notion,text width=17mm] (S) at (8.3,0)
  {Stable\\generation};
\node[notion,text width=19mm] (G) at (10.75,0)
  {Generation\\in the limit};
\node[notion,text width=40mm] (SQ) at (14.5,0)
  {Stable generation with queries\\$=$\\
   Generation in the limit with queries};

\node[notion,text width=19mm] (I) at (8.3,-2.25)
  {Identification\\in the limit};

\draw[arr] (U.east) -- (C.west);
\draw[arr] (C.east) -- (CQ.west);
\draw[arr] (CQ.east) -- (S.west);
\draw[arr] (S.east) -- (G.west);
\draw[arr] (G.east) -- (SQ.west);
\draw[arr] (I.north) -- (S.south);
\end{tikzpicture}%
}
\caption{Hierarchy of language generation. A path from $A$ to $B$ exists \emph{if and only if}\/ every language family having a generator/learner of type $A$ also has a generator/learner of type~$B$.}
\label{fig:complete-gen-hierarchy}
\end{figure}

\begin{restatable}{theorem}{restateCompleteGenHierarchy}\label{thm:complete-gen-hierarchy}
In Figure~\ref{fig:complete-gen-hierarchy}, there is a directed path from $A$ to $B$ if and only if every language family having a generator/learner of type $A$ also has a generator/learner of type $B$. 
\end{restatable}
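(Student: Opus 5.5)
The proof has two parts. First I show that every arrow in Figure~\ref{fig:complete-gen-hierarchy} is a valid implication. Then I show that no implication holds between nodes that are not joined by a directed path. Since implications compose, it suffices to exhibit, for each pair $(A,B)$ with no path from $A$ to $B$, a collection with property $A$ but not $B$. Only a few separating collections are needed, because an implication into $A$ or out of $B$ transfers a separation.

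\textbf{Arrows.} Uniform $\Rightarrow$ certifiable, stable $\Rightarrow$ generation in the limit, and identification $\Rightarrow$ stable come from Theorem~\ref{thm:generation-hierarchy}. For identification $\Rightarrow$ stable I would also give a direct argument: a non-clashing assignment from Theorem~\ref{thm:identification-telltales} gives, for $L\neq L'$, distinct fibers. Then (ii) of Theorem~\ref{thm:stable-char} holds vacuously, and (i) holds because all languages are infinite.

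Certifiable $\Rightarrow$ certifiable with queries: take $P_K=T_K$ and $N_K=\varnothing$ in Theorem~\ref{thm:certifiable_query_feedback}.

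Certifiable with queries $\Rightarrow$ stable (no queries): I would verify the conditions of Theorem~\ref{thm:stable-char}. Given the pairs $(P_K,N_K)$, take a well-ordering of the countably many finite pairs and define $T_K$ from it. This step is delicate because a negative witness cannot be observed without queries. My first attempt is to assign to each $K$ the least pair $(P,N)$ in a fixed enumeration such that $P\subseteq K$, $N\cap K=\varnothing$, and $\bigcap\cV_\cL(P,N)$ is infinite, and to set $T_K=P$ together with finitely many extra elements of $K$. The extra elements are chosen so that any $L'$ with a smaller valid pair is ruled out, or so that clashes force equal pairs. I expect this to be the main obstacle. If the direct argument fails, I would instead simulate a stable generator that guesses the least pair $(P,N)$ consistent with the data seen so far and outputs $\bigcap\cV_\cL(P,N)$. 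Membership of $N$ can never be refuted, but a guess is abandoned when its intersection is not contained in the languages consistent with the data, and the argument shows that the guesses stabilize on an enumeration of $K$.

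Generation in the limit $\Rightarrow$ generation with queries holds trivially by ignoring the answers. The equality of the two query notions is Theorem~\ref{thm:stable-query}.

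\textbf{Non-implications.} Strictness along the chain uses Theorem~\ref{thm:generation-hierarchy} and the following examples.
\begin{itemize}
\item Certifiable with queries $\not\Rightarrow$ certifiable: take $\{\Nat\setminus\{i\}\}$. With $N=\{i\}$ and $P=\varnothing$ the version space is a single language, so the condition of Theorem~\ref{thm:certifiable_query_feedback} holds. The collection is not certifiable by Theorem~\ref{thm:generation-hierarchy}.
\item Stable $\not\Rightarrow$ certifiable with queries: I would find an identifiable collection in which every finite $(P,N)$ leaves a version space with finite intersection. An example is a family of chains $L_{i}$ whose consistent extensions, for any finite $N$, still have an intersection that shrinks to a finite set. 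This collection also separates identification from certifiable with queries.
\item Generation with queries $\not\Rightarrow$ generation in the limit: use an uncountable collection that has a countable inner-cover but is not generatable without queries, as in \citet{hanneke2026language}.
\item Identification is incomparable with uniform, certifiable and generation in the limit: $\eTFEP_\Sigma$ is uniform but not identifiable (Theorem~\ref{thm:tf-patterns}). Conversely, the stable-not-certifiable-with-queries example is identifiable and not certifiable, and the arrows then propagate these separations.
\end{itemize}
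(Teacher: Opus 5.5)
Your overall plan (verify each arrow of Figure~\ref{fig:complete-gen-hierarchy}, then propagate a few minimal separations along the arrows) is the paper's plan. However, the two ingredients that are new to this theorem are not actually supplied.

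The first is the arrow from certifiable generation with query feedback to stable generation without feedback. Your first attempt stops where the work begins. Condition (ii) of Theorem~\ref{thm:stable-char} requires $T_K=T_{L'}$ as \emph{sets}, and arbitrarily chosen extra elements need not coincide even when $K$ and $L'$ share the same least valid pair $(P_i,N_i)$. This can be repaired: for every $k<i$ with $N_k\cap K\neq\varnothing$, add the least element of $N_k\cap K$ in a fixed order; mutual consistency then forces both the index and the extra elements to agree.

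Your fallback rests on a false premise. A guess $(P,N)$ with $N\cap K\neq\varnothing$ \emph{is} refuted by the positive data as soon as an element of $N\cap K$ is enumerated. What can never be confirmed is $N\cap K=\varnothing$, and stability does not need that. The abandonment rule you substitute, read literally (drop $(P,N)$ once some language of $\cL$ containing the data fails to contain $\bigcap\cV_\cL(P,N)$), destroys stability. For $\{\Nat\setminus\{i\}:i\in\Nat\}$ with target $\Nat\setminus\{i\}$, the correct guess $(\varnothing,\{i\})$ is dropped in every round, because infinitely many consistent languages $\Nat\setminus\{j\}$ do not contain $\Nat\setminus\{i\}$.

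The missing observation is exactly that refutation mechanism, used in Proposition~\ref{prop:query-cert-implies-stable}. Enumerate the countable family $\cW$ of pairs with nonempty version space and infinite intersection, and output $\bigcap\cV_\cL(P_i,N_i)$ for the least $i$ with $P_i\subseteq\supp(\sigma)$ and $N_i\cap\supp(\sigma)=\varnothing$. On an enumeration of $K$, the least index valid for $K$ is eventually activated and never refuted. Each of the finitely many smaller indices is either never activated ($P_i\not\subseteq K$) or eventually refuted by an element of $N_i\cap K$.

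The second missing ingredient is a collection that is identifiable in the limit but not certifiably generatable with query feedback. Your separation of stable from certifiable-with-queries generation depends on it. So do your separations of identification from uniform, certifiable and certifiable-with-queries generation. You give no construction, and a single inclusion chain cannot serve. If $K$ is not the least element, pick $L\subsetneq K$ in the chain and $x\in K\setminus L$; every language containing $x$ contains $L$, so $\{x\}$ already certifies $K$ without any queries (and $T=\varnothing$ works when $K$ is least).

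The paper uses $K_\star=A$ and $K_j=(A\setminus\{a_j\})\cup\{m_j\}$ over $A\sqcup M$. Outputting $K_\star$ until some $m_j$ appears, and $K_j$ for the least such $j$ afterwards, identifies the collection. Yet for the target $K_\star$, every finite $P\subseteq A$ and $N\subseteq M$ leave $\bigcap\cV_\cL(P,N)=P\cup\{a_j:m_j\in N\}$, a finite set, because each negative query eliminates only one $K_j$.

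Two smaller corrections. First, identification is not incomparable with generation in the limit: the path identification $\to$ stable $\to$ generation in the limit exists, so that item should name certifiable generation with query feedback instead. Second, Theorem~\ref{thm:generation-hierarchy} does not contain identification $\Rightarrow$ stable. Your non-clashing argument for that arrow is correct once you note that every identifiable collection is countable (the identifier has countable domain $\cX^*$ and must output every $K$), which Theorem~\ref{thm:identification-telltales} presupposes. Outputting the identifier's current hypothesis, as the paper does, is simpler.
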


\section{Stable and Certifiable Language Generation with a Teacher} \label{sec:teaching}

Stability and certifiability conditions require a generator to eventually stabilize, regardless of the input enumeration chosen for the target language. It is natural to ask whether, for any language $L$ in a stably or certifiably generatable family $\cL$, there is a finite set $T\subseteq L$ such that, as soon as $T$ appears in the input enumeration, a generator can stabilize. We call such $T$ a \emph{teaching set}\/ for $L$ (with respect to $\cL$). One can then measure the complexity of training a generator for $\cL$ by the supremal size of a smallest teaching set for $L$, where the supremum---called \emph{teaching dimension}\/---is taken over all languages $L\in\cL$. This approach parallels machine teaching for language identification \citep{GoldmanK95}, except that the objective here is successful generation rather than exact identification of the target language. 

We begin by defining notions of teaching set and teaching dimension for generation in the limit, stable generation in the limit, and finally certifiable generation in the limit.
\cite{li2026characterizing} already established that finite teaching sets always exist for generation in the limit, without stability and certifiability conditions. They introduced a parameter called \emph{positive separation width}, which coincides with our notion of \emph{generation teaching dimension (GTD)}, when the latter is finite.

\begin{definition}\label{def:gtd}
For a generator $\cG$ and $K\in\cL$, a finite set $T\subseteq K$ is a
\emph{generation teaching set} for $K$ if $\cG(\sigma)\subseteq K$ for every $\sigma \in K^*$ containing $T$.
Let $\mathcal T_{\cG}(K)$ be the collection of all such sets. Define 
\[
\GTD(\cL,\cG)=\sup\nolimits_{K\in\cL}\inf\nolimits_{T\in\mathcal T_{\cG}(K)}|T|
\qquad\text{and}\qquad
\GTD(\cL)=\inf\nolimits_{\cG}\GTD(\cL,\cG).
\]
Here we assume that the infimum taken over the empty set is $\infty$.
\end{definition}

Note that we require the generator to generate successfully on every input that contains the teaching set, rather than only on the teaching set itself. This is analogous in
spirit to the collusion-freeness requirements used in machine teaching which prevent the teacher and learner from using coding tricks \citep{GoldmanM96,FallatKSSZ23}. Without this requirement, a teacher could, for example, encode a representation of the target language in the teaching set, which the learner (or generator) would be able to decode. Successful learning from any enumeration containing the teaching set prevents such coding tricks. 

\cite{li2026characterizing} showed that every collection that is generatable in limit admits a generator with finite generation teaching sets; for countable collections, the size of these sets can be bounded by 1.

\begin{restatable}[\cite{li2026characterizing}]{theorem}{restateGtd}\label{thm:gtd}
    A language collection $\cL$ is generatable in the limit if and only if there is a generator $\tilde \cG$ such that every $K \in \cL$ has a finite generation teaching set with respect to $\tilde \cG$. Moreover, for every countable collection $\cL$, we have $\GTD(\cL)\leq1$.
\end{restatable}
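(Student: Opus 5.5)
The plan has three parts: the easy direction of the equivalence, the harder direction, and the bound $\GTD(\cL)\le 1$ for countable $\cL$.

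\emph{The easy direction.} Suppose there is a generator $\tilde\cG$ under which every $K\in\cL$ has a finite generation teaching set $T_K$. Fix any enumeration $\bar\sigma$ of $K$. Since $T_K$ is finite, some prefix $\bar\sigma_{n^*}$ contains $T_K$. Every later prefix $\bar\sigma_n$ with $n\ge n^*$ is a sequence in $K^*$ that contains $T_K$, so $\tilde\cG(\bar\sigma_n)\subseteq K$. Hence $\tilde\cG$ generates $\cL$ in the limit.

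\emph{The converse, via a modified generator.} Take a generator $\cG$ that generates $\cL$ in the limit. I would build $\tilde\cG$ from it by a locking-sequence argument, in the spirit of the only-if direction of Theorem~\ref{thm:stable-char}. For each $K$, I claim there is a finite $\sigma_K\in K^*$ such that $\cG(\sigma_K\cdot\rho)\subseteq K$ for every continuation $\rho$ over $K\setminus\supp(\sigma_K)$. Suppose not. Then we can repeatedly extend by a continuation that makes the output invalid, and also append the $\prec$-least element not yet used. This produces an enumeration of $K$ on which $\cG$ fails infinitely often, a contradiction.

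The catch is that the teaching-set condition quantifies over arbitrary sequences containing $T$, not only over extensions of $\sigma_K$. So $\tilde\cG$ has to ignore the order of the input. I would let $\tilde\cG(\sigma)$ depend only on $\supp(\sigma)$: it searches for an ordering of the set that begins with some locking sequence. This is where I expect the difficulty to lie, because whether a sequence is a locking sequence for $K$ depends on $K$, which is unknown.

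\emph{Sidestepping this for countable $\cL$.} For countable collections it is cleaner to prove the stronger claim $\GTD(\cL)\le1$ directly and derive finiteness from it. Enumerate $\cL=\{L_1,L_2,\dots\}$. Here I would follow the Kleinberg--Mullainathan idea of critical languages, made order-independent. For a finite set $S$ and an index $i$, call $L_i$ critical for $S$ if $S\subseteq L_i$ and $\bigcap_{j\le i,\ S\subseteq L_j}L_j$ is infinite. Let $\tilde\cG(\sigma)$ be this intersection for the largest critical $i\le m(S)$, where $m(S)$ is a cutoff determined by $S$ alone, for example the maximal index of an element of $S$ under a fixed enumeration of $\cX$.

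For the target $K=L_k$, the teaching set is a single element $x_K\in K$ chosen with three properties. First, its index must exceed the cutoff needed to reach $k$. Second, it must be large enough to rule out the finitely many $L_j$ with $j<k$ whose intersection with the current candidate set is finite. Third, any language $L_j$ with $j\le k$ that is a superset of the relevant data must have $K$ among the sets being intersected. The obstacle is showing that one element can serve all finitely many $j<k$ simultaneously, since each bad $L_j$ is excluded by some element of $K\setminus L_j$, and these elements may differ. I would try first to make the cutoff itself carry the information, so that a large $x_K$ alone forces the generator to consider $L_k$. I would then argue that every $L_j$ with $j\le k$ that contains $S$ and is included in the intersection yields an infinite subset of $K$, because $K$ itself is among the intersected languages. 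This gives output $\subseteq K$ on every $\sigma\in K^*$ that contains $x_K$.
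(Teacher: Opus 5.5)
The easy direction is fine. The converse is where the proposal falls short, and it has a genuine gap. (The paper does not prove this theorem itself; it quotes it from \cite{li2026characterizing}, so your argument has to stand on its own.)

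\textbf{The converse for arbitrary collections is not proved.} You establish it only for countable $\cL$, and there it has no content. Every countable collection is generatable in the limit, and your $\GTD(\cL)\le 1$ route supplies the right-hand side, so both sides always hold. The first sentence of the theorem concerns arbitrary, possibly uncountable, collections over the countable domain $\cX$. For those, your argument stops exactly at the point you flag.

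It helps to see what is really required. Suppose a generator has finite generation teaching sets $T_K$. Then $S\mapsto\bigcap\{K\in\cL:T_K\subseteq S\subseteq K\}$ is a generator depending only on $\supp(\sigma)$: this set contains $\tilde\cG(\sigma)$ for any ordering $\sigma$ of $S$, so it is infinite. Conversely, for a generator depending only on $\supp(\sigma)$, your locking argument runs on nested finite subsets of $K$. Any nested family exhausting $K$ is realized by the prefix supports of a single enumeration, so you get finite teaching sets directly. The missing step is therefore exactly this: generation in the limit implies generation in the limit by an order-independent generator.

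Per-language locking sequences of an order-dependent $\cG$ do not give this. When $\supp(\sigma_K)\cup\supp(\sigma_{K'})\subseteq S\subseteq K\cap K'$, nothing forces some ordering of $S$ to be locked for both $K$ and $K'$. The device from the only-if part of Theorem~\ref{thm:stable-char} also breaks. There, canonical shortlex locking constructions coincide for mutually consistent languages because the test $\cG(\eta\cdot\rho)\neq\cG(\eta)$ does not depend on the target. Your test $\cG(\eta\cdot\rho)\not\subseteq K$ does depend on the target, so the constructions for $K$ and $K'$ can diverge. Without a new idea here, the general equivalence remains open in your proposal.

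\textbf{The countable case needs one fix, and the obstacle you describe is not real.} Your generator is the right one, but the choice of $x_K$ is left unresolved, and $x_K$ need not exclude any $L_j$. Let $K=L_k$, and let $B_K$ be the union of all \emph{finite} sets of the form $\bigcap_{j\in A}L_j$ with $k\in A\subseteq\{1,\dots,k\}$. This is a finite union of finite sets. So you can pick $x_K\in K\setminus B_K$ whose index in the enumeration of $\cX$ is at least $k$.

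Put $A_0:=\{j\le k: x_K\in L_j\}$. It contains $k$, and $\bigcap_{j\in A_0}L_j$ contains $x_K$. If this intersection were finite, it would be one of the sets forming $B_K$, forcing $x_K\in B_K$; so it is infinite.

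Now take any $S$ with $x_K\in S\subseteq K$. The family $\{j\le k:S\subseteq L_j\}$ is a subfamily of $A_0$ that contains $k$, so its intersection is infinite and $k$ is critical for $S$. Since $k\le m(S)$, the largest critical index $i^*\le m(S)$ satisfies $i^*\ge k$. The output $\bigcap_{j\le i^*,\,S\subseteq L_j}L_j$ is then infinite and contained in $L_k=K$. This is the same device as in the proof of Theorem~\ref{thm:teaching-finite}. With this fix the bound $\GTD(\cL)\le 1$ for countable $\cL$ is complete, but the first part of the theorem is still unproved.
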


We next adapt teaching to stable generation. In addition
to guaranteeing correctness after the teaching set has appeared, we require the
generator's output to remain unchanged under every subsequent target-consistent
continuation, yielding the notion of Stable Generation Teaching Dimension.

\begin{definition}\label{def:sgtd}
For a generator $\cG$ and $K\in\cL$, a finite set $T\subseteq K$ is a
\emph{stable generation teaching set} if, for every $\sigma\in K^*$
containing $T$ and every $\sigma'\in(K\setminus\supp(\sigma))^*$, we have $\cG(\sigma)=\cG(\sigma\cdot \sigma')\subseteq K.$ Let $\mathcal T^{\mathrm{st}}_{\cG}(K)$ be the collection of these sets.
Define
\[
\SGTD(\cL,\cG)=\sup\nolimits_{K\in\cL}
  \inf\nolimits_{T\in\mathcal T^{\mathrm{st}}_{\cG}(K)}|T|
\qquad \text{and} \qquad
\SGTD(\cL)=\inf\nolimits_{\cG}\SGTD(\cL,\cG).
\]
Here we assume that the infimum taken over the empty set is $\infty$.
\end{definition}

Finally, we define a notion of Certifiable Generation Teaching Dimension.
Here, a teaching set must provide enough information to certify an infinite set
of outputs that is valid for every language consistent with the teaching set.

\begin{definition}\label{def:cgtd}
For $K\in\cL$, a finite set $T\subseteq K$ is a \emph{certifiable generation
teaching set} if
$\left|\bigcap_{L\in\cL:T\subseteq L}L\right|=\infty$. Define
\[
\CGTD(\cL)=\sup\nolimits_{K\in\cL}\inf\left\{|T|:
T\subseteq K,\quad
\left|\bigcap\nolimits_{L\in\cL:T\subseteq L}L\right|=\infty\right\}.
\]
\end{definition}


\subsection{Relationships and Basic Properties}\label{sec:relationships}

It is easy to see that $\GTD(\cL)\leq\SGTD(\cL)\leq\CGTD(\cL)$ for every language collection $\cL$. This subsection addresses the question by how much these parameters can differ. Since teaching sets for each of the studied notions of generation bear some conceptual similarity with the non-clashing finite witness sets from our characterization of stable generatability (Theorem~\ref{thm:stable-char}), we will additionally compare our three teaching dimension parameters to the size of the finite witnesses therein:


\begin{definition}\label{def:ASGTD}
Let $\mathfrak A(\cL)$ be the set of assignments
$\mathbf T=(T_L)_{L\in\cL}$ of finite sets $T_L\subseteq L$ satisfying
\eqref{eq:stable-fiber} and \eqref{eq:stable-noclash}. Define the Stable Non-Clashing Positive Witness Size by
\[
\STT(\cL):=\inf\nolimits_{\mathbf T\in\mathfrak A(\cL)}
                 \sup\nolimits_{L\in\cL}|T_L|.
\]
\end{definition}


In the identification setting,  \cite{pos2024mousawi} noted that non-clashing assignments are always tell-tale assignments, and thus that the smallest uniform bound on the tell-tale size never exceeds $\NCTD^+$. Interestingly, in a setting without computability requirements, tell-tales can be construed as teaching sets for identification in the same sense as stable generation teaching sets are teaching sets for stable generation. (If the learner has seen the tell-tale for a consistent language $L$, it is safe to output $L$: If $L$ is incorrect, the target cannot be a proper subset of $L$, as it contains the tell-tale for $L$. Hence, some string outside $L$ will be enumerated at some point, at which the learner will be allowed to change its mind again.) This motivates the following definition of \emph{Teaching Dimension for Identification}.
\begin{definition}\label{def:ITD}
$\ITD(\cL):=\inf\nolimits_{\mathbf T}\sup\nolimits_{L\in\cL}|T_L|$, where $\mathbf{T}$ ranges over all tell-tale assignments for $\cL$. 
\end{definition}
Thus, according to \cite{pos2024mousawi},  $\ITD$ never exceeds $\NCTD^+$ (but it may be smaller). Theorem~\ref{thm:dimension-hierarchy} establishes an analogous
relationship between $\SGTD$ and $\STT$.




\begin{restatable}{theorem}{restateDimensionHierarchy}\label{thm:dimension-hierarchy}
$\GTD(\cL)\leq\SGTD(\cL)\leq\STT(\cL)\leq\CGTD(\cL)$ for every collection $\cL$.
Moreover, there are countable collections $\cL_1,\cL_2,\cL_3$ such that $\GTD(\cL_1)\leq 1$ and $\SGTD(\cL_1)=\infty$, $\SGTD(\cL_2) \leq 1$ and $\STT(\cL_2)=\infty$, as well as $\STT(\cL_3)\leq 1$ and $\CGTD(\cL_3)=\infty$.
\end{restatable}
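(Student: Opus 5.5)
The chain of inequalities is proved left to right, and each separation is then handled by its own construction. For $\GTD(\cL)\le\SGTD(\cL)$, take $\sigma'$ to be the empty sequence in Definition~\ref{def:sgtd}. This shows every stable generation teaching set is a generation teaching set for the same generator.

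For $\SGTD(\cL)\le\STT(\cL)$, fix $\mathbf T\in\mathfrak A(\cL)$ and use the generator built in the ``if'' direction of Theorem~\ref{thm:stable-char}. That proof shows the following. If $\sigma\in K^*$ contains $T_K$, then every $L$ the generator may select (that is, every $L$ with $T_L\subseteq\supp(\sigma)\subseteq L$) satisfies $T_L=T_K$. Hence $\cG(\sigma)=\bigcap_{L':T_{L'}=T_K}L'\subseteq K$. This value is the same for every extension $\sigma\cdot\sigma'$ with $\sigma'$ over $K\setminus\supp(\sigma)$. So $T_K$ is a stable generation teaching set, and $\SGTD(\cL,\cG)\le\sup_L|T_L|$.

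For $\STT(\cL)\le\CGTD(\cL)$, assume $d=\CGTD(\cL)<\infty$; otherwise there is nothing to show. Call a finite set $T$ \emph{certifying} if $|\bigcap_{L\in\cL:T\subseteq L}L|=\infty$. The key observation is that this property depends only on $T$, not on any particular language. Fix a well-ordering of the finite subsets of $\cX$, and let $T_L$ be the least certifying subset of $L$ of size at most $d$. Such a set exists since the infimum over integers is attained.

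Condition~\eqref{eq:stable-noclash} then follows by a canonicity argument. Suppose $T_L\subseteq L'$ and $T_{L'}\subseteq L$. Then $T_{L'}$ is a certifying subset of $L$ of size at most $d$, so $T_L\preceq T_{L'}$. Symmetrically $T_{L'}\preceq T_L$, hence $T_L=T_{L'}$.

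Condition~\eqref{eq:stable-fiber} also holds. Every $L'$ with $T_{L'}=T_L$ contains $T_L$, so the fiber intersection contains $\bigcap_{L'\supseteq T_L}L'$, which is infinite.

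For the separations, I would take $\cL_1=\{\mathbb N\}\cup\{\mathbb N\setminus\{i\}:i\in\mathbb N\}$. It is countable, so $\GTD(\cL_1)\le 1$ by Theorem~\ref{thm:gtd}. It is not stably generatable by Theorem~\ref{thm:generation-hierarchy}. Since finite stable generation teaching sets for a single generator immediately yield stable generation, $\SGTD(\cL_1)=\infty$.

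For $\cL_3$, use distinct markers $m_1,m_2,\ldots$ and pairwise disjoint infinite sets $D_j$ of non-markers with $\min D_j>j$. Define
\[
L_j=\{m_1,\ldots,m_j\}\cup[0,j]\cup D_j .
\]
Set $T_{L_j}=\{m_j\}$. Fibers are singletons, and these are infinite, so~\eqref{eq:stable-fiber} holds. For~\eqref{eq:stable-noclash}, $m_j\in L_i$ and $m_i\in L_j$ force $i=j$. Hence $\STT(\cL_3)\le 1$.

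On the other hand, let $F\subseteq L_i$ be any finite set. Then $F\subseteq L_j$ for all sufficiently large $j$. The intersection of those $L_j$ is finite, because the $D_j$ are pairwise disjoint and the remaining parts are finite. So no finite set certifies, and $\CGTD(\cL_3)=\infty$.

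The main obstacle is $\cL_2$, with $\SGTD(\cL_2)\le1$ but $\STT(\cL_2)=\infty$. The difference between the two notions is that a stable generator may lock in an output depending on the whole observed sequence. In contrast, the witness sets in $\mathfrak A$ must satisfy the symmetric non-clashing condition~\eqref{eq:stable-noclash} with \emph{equality} of witnesses. I would build $\cL_2$ as a disjoint union of blocks $\cL^{(n)}$.

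Each block would be modelled on a finite class with large positive non-clash teaching dimension, for example all languages determined by $n$-subsets of a finite ground set, padded with infinite, pairwise almost-disjoint tails. Fibers would then have to be small, so equal witnesses cannot merge many languages. A counting argument in the style of the known $\NCTD^+$ lower bounds would force some witness of size at least $n$ in block~$n$.

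Meanwhile, I would design a generator that locks in after seeing a single carefully chosen tail element of the target. Such an element identifies the target up to a family whose common infinite part is already known, and the generator commits to that part. Later data only confirms it, and is treated by the generator as irrelevant. Verifying that this generator is genuinely stable on all extensions, while still forcing large witnesses in any assignment in $\mathfrak A$, is the step I expect to require the most care.
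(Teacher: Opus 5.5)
\textbf{Gap in the $\cL_2$ separation.} Your proofs of the three inequalities and of the $\cL_1$ separation follow the paper. Your $\cL_3$ is a valid and slightly simpler alternative to the paper's $L_{i,j}$ family: it has singleton fibers, and $L_J\cap L_{J+1}$ is already finite. The genuine gap is $\cL_2$. You give no construction, no generator and no lower bound, and the design you sketch cannot work.

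\begin{itemize}
\item If a block is a finite collection, Theorem~\ref{thm:teaching-finite} already gives $\STT\le\CGTD\le 1$.
\item Suppose distinct languages in a block have finite pairwise intersections, which is what pairwise almost-disjoint tails over a finite ground set produce. Then for any generator $\cG$, the minimal generation teaching sets $T_K$ automatically form an admissible assignment. Indeed, suppose $T_K\subseteq L$ and $T_L\subseteq K$ with $K\neq L$ (in particular, if $T_K=T_L$). A listing of $T_K\cup T_L$ lies in $(K\cap L)^*$ and contains both sets, so $\cG$ would have to output an infinite subset of the finite set $K\cap L$. Hence the witnesses are distinct and the fibers are singletons, so $\STT\le\GTD\le 1$ on such countable classes.
\item Your description of the generator says a single element ``identifies the target up to a family whose common infinite part is already known''. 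If that family is the set of languages containing the element, this is a certifiable teaching set of size one. That gives $\CGTD\le1$, and hence $\STT\le1$ by the chain you just proved.
\end{itemize}

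\textbf{What the separation needs, and how the paper does it.} Languages on the two sides must overlap in infinite sets. The output locked in after the teaching element must be unsafe for some languages containing that element; it is safe only because the target never presents the data that would switch it.

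The paper does this in Theorem~\ref{thm:arbitrary-gap}. Set $r=n-1$, $q=r^2$ and $N=(2q+1)(q+1)$. Take markers $A=\{a_1,\dots,a_N\}$ and $B=\{b_F:F\in\binom{[N]}{r}\}$, and infinite sets $U,V$. The languages are $P_i=\{b_F:i\notin F\}\cup A\cup U\cup V$ and $Q_F=\{a_i:i\in F\}\cup B\cup U\cup V$, together with all one-point deletions of $P_i$ from $V$ and of $Q_F$ from $U$.

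The generator outputs $U$ if some observed $a_i$ is not covered by the sets $F$ of the observed $b_F$, and outputs $V$ otherwise. Then $\{a_i\}$ and $\{b_F\}$ are stable teaching sets, even though $U\not\subseteq Q_F\setminus\{u\}$ while $a_i\in Q_F\setminus\{u\}$ when $i\in F$.

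For the lower bound, suppose all witnesses have size at most $r$.
\begin{itemize}
\item One has $T\subseteq P_j$ if and only if $j\notin E(T):=\bigcup_{b_F\in T}F$, and $|E(T)|\le q$.
\item The count $\binom{s}{2}\le sq$ allows at most $2q+1$ distinct witnesses among the $P_i$. So some witness $T$ is shared by $q+1$ of them.
\item A suitable $Q_F$ is then forced by the no-clash condition to receive $T$ as well.
\item The deletion variants of $P_i$ and $Q_F$ inherit $T$. This forces the intersection of the fiber of $T$ into the finite set $A\cup B\cup T$, a contradiction.
\end{itemize}
Only with such blocks does your disjoint-union step, which matches Corollary~\ref{cor:infinite-gap}, yield $\STT(\cL_2)=\infty$ with $\SGTD(\cL_2)\le 1$.
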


\begin{proof} \emph{(Partial sketch.)} Proving the chain of inequalities is fairly straightforward. 
The first separation is obtained with $\cL_1=\{\Nat\}\cup\{\Nat\setminus\{i\}:i\in\Nat\}$. The construction of the classes $\cL_2$ and $\cL_3$ is much more intricate; the reader is referred to Appendix~\ref{app:teaching}.
\end{proof}

Knowing that $\GTD\le 1$ for countable collections, it is natural to ask whether the same holds for $\SGTD$, $\STT$, and $\CGTD$. The following theorem shows that these parameters can take any positive integer value even for countable classes.

\begin{restatable}{theorem}{restateAllStableCertifiableDimensions}\label{thm:all-stable-certifiable-dimensions}
For every integer $n\geq1$, there exists a countable collection
$\cL_n$ of infinite languages over a countable domain such that
$\SGTD(\cL_n)=\STT(\cL_n)=\CGTD(\cL_n)=n$.
\end{restatable}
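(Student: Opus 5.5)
By Theorem~\ref{thm:dimension-hierarchy} we have $\SGTD(\cL)\le\STT(\cL)\le\CGTD(\cL)$ for every collection $\cL$. It therefore suffices to construct, for each $n\ge1$, a countable collection $\cL_n$ with $\CGTD(\cL_n)\le n$ and $\SGTD(\cL_n)\ge n$. The upper bound will come from exhibiting explicit $n$-element sets whose consistent languages have an infinite common intersection (Definition~\ref{def:cgtd}). The lower bound will come from an adversary argument against an arbitrary generator $\cG$.

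\textbf{Construction.} Let the domain consist of finitely many ``marker'' elements $1,\dots,n$ together with countably many disjoint infinite blocks. For each $n$-element set $T$, the natural intention is that there is a distinguished infinite ``core'' set $C_T$ contained in every language containing $T$. At the same time, every smaller set $S$ with $|S|<n$ should be consistent with languages whose intersection is finite.

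I would first try the following concrete family. Take pairwise disjoint infinite sets $A_1,\dots,A_n$ and let the domain be $\{1,\dots,n\}\cup\bigcup_j A_j$. Index the languages by $i\in\{1,\dots,n\}$ and $k\in\Nat$, and set
\[
L_{i,k}=\{1,\dots,n\}\setminus\{i\}\ \cup\ \bigcup\nolimits_{j\ne i}A_j\ \cup\ F_{i,k},
\]
where the $F_{i,k}\subseteq A_i$ are chosen so that any $n-1$ languages share infinitely many points only through the $A_j$'s they have in common. This simple family probably collapses, because the markers can be guessed from a single example. I therefore expect to iterate the idea inside the blocks in the style of the $\cL_1$ example $\{\Nat\}\cup\{\Nat\setminus\{i\}\}$, which forces $\SGTD=\infty$ when the removals can be unbounded. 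The aim is to make the adversary able to ``remove'' one fresh block per mind change, but only $n-1$ times. The robust shape is a tree of depth $n$. Languages correspond to root-to-leaf data $(a_1,\dots,a_n)\in\Nat^n$. Each language contains a point encoding every prefix $(a_1,\dots,a_j)$, together with an infinite set $B_{(a_1,\dots,a_n)}$. Languages agreeing on the first $j$ coordinates share an infinite common part only when $j=n$, or via padding. With the right padding, seeing the $n$ prefix points pins down the leaf, so $\CGTD\le n$ (in fact the single deepest prefix point already pins it down, so I would use unordered, non-nested encodings to prevent this).

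\textbf{Lower bound.} Fix $\cG$ and suppose every $K$ has a stable teaching set of size $<n$. Starting from the empty sequence, the adversary repeatedly presents data consistent with many languages. Once $\cG$ has stabilized on some output $G$, the adversary picks a language containing the current data but missing a point of $G$; this is possible because fewer than $n$ coordinates have been revealed. Iterating, it builds a language $K$ and an enumeration containing any proposed small teaching set $T$. The data presented stays within $K$, yet $\cG$ is forced either to change its output after $T$ has appeared or to output something not contained in $K$, contradicting Definition~\ref{def:sgtd}.

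\textbf{Main obstacle.} The hardest step is designing the encoding so that both bounds hold simultaneously. The family must be rich enough that every set of size $<n$ leaves an ambiguity the adversary can exploit infinitely often, countering the generator's ability to collude through arbitrary choices. Yet every language must still have $n$ points with an infinite consistent intersection. I would first verify the case $n=2$ by hand and then generalize by induction: $\cL_{n+1}$ is obtained by taking disjoint copies of $\cL_n$ indexed by $\Nat$, one extra coordinate layered on top.
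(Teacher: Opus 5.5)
\textbf{There is a genuine gap: the proposal never produces a collection that works, and that is the entire content of the theorem.} Your reduction is correct and matches the paper's: by Theorem~\ref{thm:dimension-hierarchy} it suffices to find $\cL_n$ with $\CGTD(\cL_n)\le n$ and $\SGTD(\cL_n)\ge n$. But both concrete candidates you write down collapse, as you partly notice yourself.
\begin{itemize}
\item In the marker family, every $L_{i,k}$ with $i\ne m$ contains $A_m$. So the single marker $m$ already certifies, and $\CGTD\le 1$.
\item In the tree family, the leaf point certifies.
\end{itemize}
The fixes (``the right padding'', ``unordered, non-nested encodings'') are never specified. The proposed induction also fails as stated. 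For a disjoint tagged union of copies of $\cL_n$, a generator can read the tag and run the component generator, exactly as in Corollary~\ref{cor:infinite-gap}. So the stable teaching dimension stays at $n$, and disjoint copies never raise it to $n+1$.

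\textbf{Your adversary argument does not yield a contradiction.} A stable teaching set for $K$ only constrains $\cG$ on sequences inside $K$. Once your adversary switches to a language that misses a point of the current output $G$ and presents data outside the old target, $\cG$ may legitimately change its mind. A contradiction requires all presented data to stay inside a single language whose teaching set has already appeared.

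\textbf{The missing idea is to put a top language into the collection.} The paper takes disjoint infinite blocks $A,B$, sets $\cX=A\sqcup B$, and uses
\[
\cL_n=\{\cX\}\cup\{A\cup F: F\subseteq B,\ |F|\le n-1\}\cup\{B\cup F: F\subseteq A,\ |F|\le n-1\}.
\]
The lower bound goes as follows.
\begin{itemize}
\item Suppose $T=T_{\cX}$ has size at most $n-1$, and let $\sigma$ list $T$. Then $\cG$ is locked on every continuation inside $\cX$.
\item For any $H\in\cL_n$ with $T\subseteq H$, append a listing of $T_H\setminus T$. The lock for $\cX$ and the teaching guarantee for $H$ together force $\cG(\sigma)\subseteq H$.
\item Hence $\cG(\sigma)\subseteq\bigl(A\cup(T\cap B)\bigr)\cap\bigl(B\cup(T\cap A)\bigr)=T$. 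This set is finite, which is a contradiction.
\end{itemize}
For the upper bound, $\{a_1,\dots,a_n\}$ and $\{b_1,\dots,b_n\}$ are certifying sets whose consistent intersections are $A$ and $B$, so $\CGTD(\cL_n)\le n$. No iteration or tree structure is needed.
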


\begin{proof} \emph{(Sketch.)} The claim can be verified by defining disjoint countably infinite sets $A=\{a_1,a_2,\ldots\}$ and $B=\{b_1,b_2,\ldots\}$, setting $\cX=A\sqcup B$, and 
$\cL_n=\{\cX\}\cup\{A\cup F:F\subseteq B,\ |F|\leq n-1\}\cup\{B\cup F:F\subseteq A,\ |F|\leq n-1\}$. See Appendix~\ref{app:teaching} for details.
\end{proof}

For finite collections, however, we show in Appendix~\ref{app:teaching} that all four teaching-related parameters for the generation setting are at
most one. By Theorem~\ref{thm:dimension-hierarchy}, it suffices to prove
this bound for $\CGTD$.

\begin{restatable}{theorem}{restateTeachingFinite} \label{thm:teaching-finite}
For every finite collection $\cL$, the dimensions $\GTD(\cL)$,
$\SGTD(\cL)$, $\STT(\cL)$, and $\CGTD(\cL)$ all equal $0$ if
$\bigcap_{L\in\cL}L$ is infinite, and $1$ otherwise.
\end{restatable}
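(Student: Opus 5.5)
By Theorem~\ref{thm:dimension-hierarchy}, $\GTD(\cL)\le\SGTD(\cL)\le\STT(\cL)\le\CGTD(\cL)$. So the proof splits into two parts: an upper bound on $\CGTD$, and a matching lower bound on $\GTD$ in each of the two cases.

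\textbf{Case 1: $\bigcap_{L\in\cL}L$ is infinite.} Take $T=\emptyset$ for every $K$. Every $L\in\cL$ contains $\emptyset$, so the intersection in Definition~\ref{def:cgtd} is $\bigcap_{L\in\cL}L$, which is infinite. Hence $\CGTD(\cL)=0$, and all four parameters are $0$, since they are nonnegative.

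\textbf{Case 2: $\bigcap_{L\in\cL}L$ is finite, upper bound.} We show $\CGTD(\cL)\le1$. Fix $K\in\cL$ and split $\cL$ into two subcollections.
\begin{itemize}
\item $\cL_{\mathrm{inf}}$ is the set of $L\in\cL$ with $|K\cap L|=\infty$.
\item $\cL_{\mathrm{fin}}$ is the set of $L\in\cL$ with $|K\cap L|<\infty$.
\end{itemize}
The union $\bigcup_{L\in\cL_{\mathrm{fin}}}(K\cap L)$ is a finite union of finite sets, hence finite. Since $K$ is infinite, we can pick $x\in K$ that lies in no $L\in\cL_{\mathrm{fin}}$.

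Set $T=\{x\}$. Every language containing $x$ lies in $\cL_{\mathrm{inf}}$, so $\bigcap_{L\ni x}L\supseteq\bigcap_{L\in\cL_{\mathrm{inf}}}L$. We want this intersection to be infinite, but pairwise infinite intersections with $K$ do not make the whole intersection infinite.

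The fix is to choose $x$ more carefully. Consider the finitely many subcollections $\cS\subseteq\cL$ with $K\in\cS$. For each, let $I_\cS=\bigcap_{L\in\cS}L$. These sets cover $K$: each $y\in K$ lies in $I_{\cS_y}$ with $\cS_y=\{L\in\cL:y\in L\}$. Since $K$ is infinite and there are finitely many $\cS$, some $I_\cS$ is infinite. Choose a maximal such $\cS$, and pick $x\in I_\cS$ with $x\notin L$ for every $L\notin\cS$.

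Such an $x$ exists by the following argument. For each $L\notin\cS$, the set $I_\cS\cap L=I_{\cS\cup\{L\}}$ is finite, by maximality of $\cS$. So $I_\cS$ minus a finite union of finite sets is still nonempty, and any element of it works.

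With this $x$, the languages containing $x$ are exactly those in $\cS$. Hence $\bigcap_{L\ni x}L=I_\cS$, which is infinite, and $\CGTD(\cL)\le1$. Choosing the maximal infinite-intersection subcollection is the key step; the naive choice above fails.

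\textbf{Case 2, lower bound.} We show $\GTD(\cL)\ge1$. Suppose some generator $\cG$ had $\emptyset$ as a generation teaching set for every $K\in\cL$. Then on the empty sequence, $\cG(\emptyseq)\subseteq K$ for all $K\in\cL$. So the infinite set $\cG(\emptyseq)$ lies inside $\bigcap_{L\in\cL}L$, which is finite, a contradiction.

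Therefore all four parameters equal $1$ in Case 2.
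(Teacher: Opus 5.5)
Your proof is correct and follows essentially the same route as the paper. It uses the same $T=\emptyset$ argument when $\bigcap_{L\in\cL}L$ is infinite, the same lower bound (the generator's output on the empty sequence is forced into the finite common intersection), the same singleton teaching element chosen to avoid a finite union of finite subfamily intersections, and the same chain $\GTD\le\SGTD\le\STT\le\CGTD$. The only difference is cosmetic: the paper picks $t_L\in L$ outside the union $B_L$ of all finite $I_A$ with $L\in A$, which makes your maximality step unnecessary, and your covering argument is also unneeded because $\{K\}$ alone already has infinite intersection.
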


\subsection{$\GTD$ versus Complexity of Identification and Uniform Generation}

We conclude this section by comparing the complexity of teaching for generation with the complexity of uniform generation and the complexity of teaching for identification. As discussed before Theorem~\ref{thm:dimension-hierarchy}, we use $\ITD$ as a notion of teaching dimension for identification; recall that $\ITD\le \NCTD^+$. We provide very simple arguments showing that, even for finite collections of regular languages, (i) uniform generation and (ii) identification can demand many examples, while certifiable generation requires at most one carefully chosen example for each target. 



\begin{restatable}{theorem}{restateDFAUnifCGTDNCTD} \label{thm:DFA-unif-CGTD-NCTD}
    (i) There exists $k \in \mathbb{N}$ such that, for every $\ell, n \in \mathbb{N}$, there exists a family $\mathcal{F}$ of infinite binary regular languages such that $|\mathcal{F}|=2^{\ell}$, each $L \in \mathcal{F}$ is accepted by a DFA with at most $k \cdot \ell \cdot n$ states, and $\mathcal{F}$ is not $2^{n^{|\mathcal{F}|}}$-uniformly generatable, while  $\CGTD(\cF) = 1$.
    
        (ii) For every $\ell \in \Nat$, there exists a finite family $\cF$ of infinite binary regular languages such that $|\cF|=2^\ell$, every $L\in\cF$ is recognizable by a DFA with at most $\ell+1$ states, and $\NCTD^{+}(\cF) \geq \ITD(\cF)\ge |\cF|-1,$ while $\CGTD(\cF) = 0$ and $\cF$ is \fmreplace{1}{0}-uniformly generatable.   
\end{restatable}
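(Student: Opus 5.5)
For part (ii) I would construct $\cF$ so that every language contains a common infinite ``core''. This makes $\CGTD(\cF)=0$ and $0$-uniform generatability immediate, since the empty set is a certifiable teaching set and the generator can always output the core. At the same time the family should contain a chain of one maximal language and $|\cF|-1$ languages, each obtained from it by removing a single word.

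Concretely, I would take $\cF=\{\{0,1\}^*\}\cup\{\{0,1\}^*\setminus\{u\}: u\in U\}$, where $U\subseteq\{0,1\}^\ell$ has $|U|=2^\ell-1$. Every word of length greater than $\ell$ lies in every member of $\cF$, so the generator outputting $\{0,1\}^{>\ell}$ witnesses $0$-uniform generation and $\CGTD(\cF)=0$. Any tell-tale $T$ for $\{0,1\}^*$ must meet every removed word $u$, because otherwise $T\subseteq\{0,1\}^*\setminus\{u\}\subsetneq\{0,1\}^*$. Hence $|T|\ge|U|=|\cF|-1$, so $\ITD(\cF)\ge|\cF|-1$, and $\NCTD^+\ge\ITD$ follows from the observation of \cite{pos2024mousawi} recalled before Definition~\ref{def:ITD}.

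What remains in (ii) is the state count. The DFA for $\{0,1\}^*\setminus\{u\}$ tracks how much of $u$ has been matched along a path of length $\ell$, and all deviations and overlong words go to one accepting sink. I would merge the sink with an existing state, or shift the removed words to a slightly different length set, to reach $\ell+1$ states. If exactly $\ell+1$ fails, I would instead remove words of the form $u$ where $u$ ranges over words of length $\ell-1$ followed by a fixed symbol, which keeps $2^\ell$ languages.

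For part (i) I would proceed in two steps.

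\begin{itemize}
\item \textbf{Tagged private parts.} Each of the $2^\ell$ languages $L_v$, $v\in\{0,1\}^\ell$, gets a private infinite part, for example $1v0^*$ with a length-$\ell$ tag $v$. Any single tagged word $1v0^j$ lies only in $L_v$, so $\{1v0^j\}$ is a certifiable generation teaching set and $\CGTD(\cF)\le1$.
\item \textbf{Shared unary part.} This part defeats uniform generation: the unary components $U_v=\{0^m: m\in M_v\}$ should agree on a very long initial stretch of exponents but have finite total intersection $\bigcap_v L_v$. Then any $i\le 2^{n^{|\cF|}}$ unary words from the common stretch are consistent with every language in $\cF$. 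Since the common intersection is finite, no infinite output can be correct for all targets, so $\cF$ is not $i$-uniformly generatable. The same finite intersection also forces $\CGTD(\cF)\ge1$, so $\CGTD(\cF)=1$.
\end{itemize}

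To realize the long common stretch with few states I would use modular counters. Each $U_v$ is defined by $\ell$ counters of period roughly $n$ (coprime moduli $p_1,\dots,p_\ell$, each at most about $kn$), with $v$ selecting a residue condition per coordinate. A word $0^m$ is excluded from $L_v$ only when a residue condition determined by $v$ holds simultaneously in all coordinates. By the Chinese remainder theorem, the first disagreements occur only after exponents of size about $\prod_j p_j$ times a product structure over the $2^\ell$ choices. A DFA running these counters in parallel, in a product of size $\sum$ rather than $\prod$ via sequential reading, would have $O(\ell n)$ states, matching the $k\cdot\ell\cdot n$ bound. Finiteness of the total intersection should follow because every large exponent violates the condition of some $v$.

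The main obstacle is this quantitative step. I need to show simultaneously that the languages agree on at least $2^{n^{|\cF|}}$ unary words, that each DFA has $O(\ell n)$ states, and that the full intersection is finite, and these requirements pull against each other. I would first try fixing the moduli as the first $\ell n$ primes grouped into $\ell$ blocks and arguing agreement below their product. I expect to have to tune the construction, possibly replacing unary words by binary encodings, to reach the stated doubly-exponential threshold.
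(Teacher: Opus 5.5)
\textbf{Part (i) has a genuine gap.} The construction you sketch cannot work, and tuning will not fix it.

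\emph{The unary route is impossible.} Restricting the DFA of $L_v$ to the letter $0$ gives a unary DFA with at most $s=k\ell n$ states. So $\{m:0^m\in L_v\}$ is eventually periodic with preperiod less than $s$, and the intersection over all $v$ is as well. If that intersection is finite, it lies inside $\{0,\ldots,s-1\}$. Your tagged words contain no unary words, so a finite common intersection leaves at most $s\le k\ell n$ common unary words, far below $2^{n^{|\cF|}}$.

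\emph{The counter mechanism fails on its own terms.}
\begin{enumerate}
\item Residue conditions are purely periodic, so $\bigcap_v U_v$ is empty or infinite. Any common exponent reappears after one full period, so ``every large exponent violates some $v$'' is false.
\item A one-letter alphabet allows no ``sequential reading''. A unary language depending jointly on residues modulo coprime $p_1,\ldots,p_\ell$ has minimal period $\prod_j p_j$, so it needs at least that many states, not $O(\ell n)$.
\end{enumerate}

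\emph{What any construction must achieve.} A witness sample of $i$ words has finite closure, and that closure is a finite language accepted by a product of at most $|\cF|$ of the DFAs. A finite language accepted by an $N$-state DFA has only words of length below $N$. So the product must have more than $n^{|\cF|}$ states, and the closure must contain binary words of length roughly $n^{|\cF|}$. Getting this from DFAs with $O(\ell n)$ states is exactly the nontrivial content of Theorem~4.1 of \cite{arenas2025language}. Your closing remark about binary encodings points the right way, but all the work lies there.

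\emph{What the paper does.} It imports that family rather than reproving it, and notes that the common intersection is finite. It then gets $\CGTD(\cF)=1$ from Theorem~\ref{thm:teaching-finite}, which covers every finite family of infinite languages with finite common intersection. So your tagging device for $\CGTD\le 1$ is unnecessary.

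\textbf{Part (ii): right idea, but your concrete choice breaks the state bound.} Take $|u|=\ell$. In the DFA for $\{0,1\}^*\setminus\{u\}$, the $\ell+1$ prefix states and the accepting sink are pairwise distinguishable. Hence $\ell+2$ states are necessary, and no merging is possible. Your fallback also fails: words of length $\ell-1$ followed by a fixed symbol still have length $\ell$, and there are only $2^{\ell-1}$ of them, giving $2^{\ell-1}+1$ languages instead of $2^\ell$.

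The correct ``shift'' is the paper's: remove all words of length less than $\ell$.
\begin{enumerate}
\item There are exactly $\sum_{i<\ell}2^i=2^\ell-1$ such words, so $|\cF|=2^\ell$.
\item Each $\Sigma^*\setminus\{w\}$ needs only $|w|+2\le\ell+1$ states.
\item The common intersection is the infinite set of words of length at least $\ell$.
\end{enumerate}
With this choice, your tell-tale argument, $\CGTD(\cF)=0$, and $0$-uniform generation all go through unchanged.
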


\begin{proof} \emph{(Sketch.)}
    The first statement can be derived from the proof of Theorem~4.1 by \cite{arenas2025language}. The second statement is witnessed by the class $\{\Sigma^*\}\cup\{\Sigma^*\setminus\{w\}\ :\ |w|\fmreplace{\le}{<} l\}$. One can show that any tell-tale for $\Sigma^*$ in this class has size at least $2^l-1$. See Appendix~\ref{app:teaching} for details.
 \end{proof}

\section{Conclusion}
We introduced certifiable language generation, which requires a generator to eventually recognize that its output is valid. We studied it alongside the stability requirement of \cite{kalavasis2025limits}. Our finite-witness characterizations of stable and certifiable generation resolve the open question of characterizing stable generation, and they place both notions in a strict hierarchy between uniform generation and generation in the limit. Natural families such as erasing pattern languages witness some of these separations. With queries, stability imposes no restriction beyond generation in the limit. Certifiability remains restrictive: some collections are identifiable in the limit yet cannot be certifiably generated even with queries. We initiated the study of teaching sets for generation, and showed that stronger success criteria can require arbitrarily larger teaching sets, in contrast to the bound of one for generation in the limit. We also showed that, already for finite families of regular languages, teaching for certifiable generation can be far cheaper than uniform generation or identification. Interesting future directions include resource-bounded versions of our notions in the query feedback framework,  and certification guarantees under noisy or adversarially corrupted data.

\paragraph{AI Disclosure.} Generative AI tools were used to assist with working out details of some of our technical proofs, with drafting small parts of the writing, as well as with checking earlier drafts for technical errors and typos. The authors assume full responsibility for the entire content and presentation of the paper.

\printbibliography

\newpage

\appendix

\section{Proofs Omitted from Section~\ref{sec:certifiability}}\label{app:stab-cert}

\restateCertChar*

\begin{proof}
For the only-if direction, fix $K\in\cL$ and any enumeration $\bar{\sigma}$ of $K$. Let $m$ be a
round at which certification occurs, and set $T_K=\supp(\bar{\sigma}_m)$. Soundness
gives
\[
\cG(\bar{\sigma}_m)\subseteq\bigcap\nolimits_{L\in\cL:T_K\subseteq L}L.
\]
The left-hand side is infinite, so \eqref{eq:certificate} holds.

For the if direction, suppose that every $K\in\cL$ contains a finite set $T_K$ satisfying
\eqref{eq:certificate}. For a finite sequence $\sigma \in \cX^*$, consider
\[
I(\sigma):=\bigcap\nolimits_{L\in\cL:\supp(\sigma)\subseteq L}L.
\]
If at least one language is consistent with $\sigma$ and $I(\sigma)$ is infinite,
define $c(\sigma)=1$ and $\cG(\sigma)=I(\sigma)$. Otherwise, define $c(\sigma)=0$ and
$\cG(\sigma)=\cX$. This procedure is sound. 

Fix $K\in\cL$ and an enumeration $\bar{\sigma}$ of $K$. Since $T_K$ is finite, there
is a round $m$ such that $T_K\subseteq\supp(\bar{\sigma}_n)$ for every $n\geq m$.
For every such $n$, any language containing $\supp(\fmreplace{\sigma}{\bar \sigma}_n)$ also contains
$T_K$. Therefore,
\[
I(\bar{\sigma}_n)\supseteq\bigcap\nolimits_{L\in\cL:T_K\subseteq L}L.
\]
The right-hand side is infinite by assumption.
Hence $c(\bar{\sigma}_n)=1$
for every $n\geq m$, which proves eventual certification. 
\end{proof}

\restateGenerationHierarchy*

\begin{proof}
Suppose first that $\cL$ is $i$-uniformly generatable by $\cG$. Set
$c(\sigma)=1$ whenever $|\sigma|\geq i$, and set $c(\sigma)=0$ otherwise. If
$\sigma \in \cX^*$ has length at least $i$, uniform generation guarantees
$\cG(\sigma)\subseteq L$ for every $L\in\cL$ consistent with $\sigma$. Thus this
certification rule is sound. Every target is infinite, so its enumeration
eventually reaches the threshold. Therefore $\cL$ is certifiably generatable.

Next, suppose that $\cL$ is certifiably generatable. Modify the generator
so that, after its first certification, it retains the certified output.
On an enumeration of a target $K$, certification occurs after a finite
number of rounds, and soundness implies that the retained infinite set
is contained in $K$. The modified generator is therefore stable. Finally,
a stable generator is eventually correct, so stable generation implies
generation in the limit.

A collection that is certifiably generatable but not uniformly
generatable can be defined as follows: For every $i\in\Nat$, define $A_i:=\{2^i(2k-1)-1:k\in\Nat\}.$
These are infinite sets of odd numbers. They are pairwise disjoint because,
for every $a\in A_i$, the largest power of two dividing $a+1$ is exactly
$2^i$. Now consider
\[
\cL_1:=\{L_i:i\in\Nat\},\qquad
L_i:=\{2,4,\ldots,2i\}\cup A_i.
\]
We first show that $\cL_1$ is certifiably generatable. Fix a target
$L_i$ and choose any $a\in A_i$. If $a\in L_j$, then, since $a$ is odd,
it must belong to $A_j$. Disjointness of the sets $A_j$ implies $j=i$.
Thus the only language in $\cL_1$ containing $a$ is $L_i$, and $\bigcap_{L\in\cL:\{a\}\subseteq L}L=L_i.$
This intersection is infinite. Therefore $\{a\}$ is a certifying witness
for $L_i$, and Theorem~\ref{thm:cert-char} gives certifiable generation.
 On the other hand, notice that for every $m \in \Nat$ we have 
\[ 
|\bigcap\nolimits_{j\geq m}L_j| = |\{2,4,\ldots,2m\}| = m
\] 
This implies that $\cL_1$ is not $m$-uniformly generatable. Thus, $\cL_1$ is not uniformly generatable.


A collection that is stably generatable 
but not certifiably generatable is given by $\cL_2:=\{L_i := \Nat \setminus \{i\} : i \in \Nat\}$ over $\cX = \Nat$. Define $\cG(\sigma):=L_{j(\sigma)}$, where $j(\sigma):=\min\{j:j\notin\supp(\sigma)\}.$
For a target $L_i$, all of $\{1, ..., i - 1\}$ eventually
appears, whereas $i$ never appears. From that point onward,
$j(\sigma_n)=i$, so the generator always outputs $L_i$. Thus $\cL_2$
is stably generatable. (Since the output eventually equals the target
language itself, the same map also identifies $\cL_2$ in the limit.)
On the other hand, for every finite $T\subseteq\Nat$, we have
$
\bigcap_{L\in\cL_2:T\subseteq L}L
=
\bigcap_{i \notin T}
\bigl(\Nat\setminus\{i\}\bigr)
=T$.
This intersection is finite, so Theorem~\ref{thm:cert-char}
implies that $\cL_2$ is not certifiably generatable.

Finally, Theorem 8.7 of \citet{kalavasis2025characterizations} showed that the collection $\cL_3 = \fmreplace{\cX}{\{\cX\}} \cup \{\cX \setminus \{w\}: w \in \cX\}$ is not stably generatable, but it is generatable in the limit due to countability \citep{kleinberg2024language}.
\end{proof}

\restateTfPatterns*

\begin{proof}
   Reidenbach \cite{REIDENBACH200691} proved that, when  $|\Sigma|=2$, the class $\eTFEP_\Sigma$ is not (even non-computably) identifiable in the limit. To see that $\eTFEP_\Sigma$ is not $1$-uniformly generatable, note that the first
example in an enumeration of a target language may be the empty word $\varepsilon$, which belongs to
every terminal-free erasing pattern language, while
$\bigcap_{r\geq1}L_E(x^r)=\{\varepsilon\}$.
Thus no infinite output can be valid for every language consistent
with this first example.

On the other hand, consider the first two examples in any enumeration of any target language $L\in\eTFEP_\Sigma$. One of these two examples must be a nonempty word $w=a_1\cdots a_m\in L$ for some $m\ge 1$ and $a_i\in\Sigma$. We claim that $W:=\{a_1^k\cdots a_m^k:k\geq1\}$ is an infinite subset of $L$. Indeed, take a substitution generating $w$ and replace each letter
$c$ in its variable images by $c^k$. Since the pattern is
terminal-free, this produces $a_1^k\cdots a_m^k$. These words have
distinct lengths, so $W$ is infinite.

The generator can therefore output $W$ and certify its validity. Hence
the collection is $2$-uniformly generatable. Moreover, any nonempty
word in a target language forms a singleton certifiable teaching
set, implying certifiable generatability.
\end{proof}

\restatePatterns*

\begin{proof} For $|\Sigma|\in\{2,3,4\}$, Reidenbach \cite{REIDENBACH2008166} proved that $\eFEP_\Sigma$ is not (even non-computably) identifiable in the limit. (Reidenbach's proof goes through even when excluding patterns that have no variables.)
We first prove stable generatability using two-word witnesses, and then prove the impossibility of uniform generation using finite intersections of pattern languages.

\textbf{Stable generation.} Fix a shortlex ordering $\prec$ on $\Sigma^*$. For every
$L\in\eFEP_{\Sigma}$, let $s_L$ and $w_L$ denote,
respectively, the first and second elements of $L$ according
to $\prec$, and define $T_L:=\{s_L,w_L\}.$

We verify the two conditions in Theorem~\ref{thm:stable-char}, starting with the second condition. For $L, L' \in \eFEP_{\Sigma}$, suppose that $T_L\cup T_{L'}\subseteq L\cap L'.$
Since $s_L\in L'$ and $s_{L'}\in L$, by minimality of the first elements we have $s_{L'}\preceq s_L$, and $s_L\preceq s_{L'},$
and hence $s_L=s_{L'}=:s.$ Similarly, $w_L\in L'$ and $w_{L'}\in L$. Since these are the
smallest elements strictly larger than $s$ in the respective
languages, $w_L=w_{L'}.$
Thus $T_L=T_{L'}.$ 

Consider any $s, w \in \Sigma^*$, and any $L=L_E(\alpha)\in \eFEP_{\Sigma}$ such that $s_L = s$ and $w_L = w$, i.e., $T_{L} = \{s, w\}$.
The word obtained by erasing every variable from $\alpha$
is the unique shortest word of $L$; hence it is $s$.

For every variable $x\in\operatorname{var}(\alpha)$, let
$|\alpha|_x$ denote the number of occurrences of $x$ and let
$r:=\min_{x\in\operatorname{var}(\alpha)}|\alpha|_x.$
Every word in $L\setminus\{s\}$ has length at least
$|s|+r$, and this bound is attained by mapping a variable
$x$ with $|\alpha|_x=r$ to a single terminal symbol and
erasing every other variable. Therefore, $|w|=|s|+r.$

Moreover, every substitution producing a word of this length
maps exactly one variable $x$ with $|\alpha|_x=r$ to a
single symbol $c\in\Sigma$ and maps all other variables to
the empty word. Thus $w$ is obtained from $s$ by
inserting exactly $r$ copies of one symbol $c$.

Write
\[
s
=
c^{q_0}a_1c^{q_1}a_2\cdots a_m c^{q_m},
\qquad a_i\neq c.
\]
Then, for uniquely determined $d_0,\ldots,d_m\geq0$ with
$\sum_i d_i=r$,
\[
w
=
c^{q_0+d_0}a_1c^{q_1+d_1}a_2\cdots
a_m c^{q_m+d_m}.
\]
For every $k\geq0$, define
\[
b_k(s,w)
:=
c^{q_0+kd_0}a_1c^{q_1+kd_1}a_2\cdots
a_m c^{q_m+kd_m}.
\]
Mapping $x$ to $c^k$ and every other variable to the empty
word shows that for every $k\geq 0$, $b_k(s_L,w_L)\in L$. Hence
\[
    B_{s_L,w_L}
    :=
    \{b_k(s_L,w_L):k\geq0\}
\]
is an infinite subset of $L$.

Since $L$ is chosen arbitrarily, this implies
$\left|\bigcap_{L: T_L = \{s, w\}} L\right| = \infty.$
This proves the first condition.

\textbf{Impossibility of uniform generation.}
Choose distinct symbols $a,b\in\Sigma$, and consider $P:=L_E(xaxb),$ and $Q:=L_E(yaby).$
We first prove that
\[
P\cap Q=R:=\{b^j a b^{j+1}:j\geq0\}.
\]
A word in the intersection has representations $u a u b=v a b v$.
Comparing lengths gives $|u|=|v|$, and comparing the initial blocks of
this length gives $u=v$. Cancelling the common prefix $u a$ then gives
$ub=bu$, which forces $u$ to consist only of copies of $b$. Conversely,
substituting $u=v=b^j$ implies that $b^j a b^{j+1}$ is in both languages and establishes the displayed equality.

Fix any $m\geq1$, and define
\[
R_m:=\{b^j a b^{j+1}:0\leq j<m\},\qquad
\alpha_m:=z_1^1\cdots z_{m-1}^{m-1}\,a\,
          z_0^1z_1^2\cdots z_{m-1}^{m},
\]
where $z_0,\ldots,z_{m-1}$ are distinct variables. We claim that $R\cap L_E(\alpha_m)=R_m.$
Observe that proving this claim shows that $\eFEP_\Sigma$ is not $m-$uniformly generatable. 
Since $m$ was chosen arbitrarily this proves that $\eFEP_\Sigma$ is not uniformly generatable.

For each $0\leq j<m$, map $z_j$ to $b$ and erase every other variable.
This produces $b^j a b^{j+1}$, so $R_m\subseteq R\cap L_E(\alpha_m)$.

We now prove $R\cap L_E(\alpha_m) \subseteq R_m$. Consider a substitution of $\alpha_m$ producing a word in
$R$. The word has exactly one occurrence of $a$, already defined by the
terminal in $\alpha_m$, and all its other symbols are $b$. Every variable
must therefore be substituted by $b^{\ell_j}$ for some $\ell_j\geq0$. The numbers
of $b$'s before and after the terminal $a$ are, respectively,
$\sum_{j=0}^{m-1}j\ell_j$ and 
$\sum_{j=0}^{m-1}(j+1)\ell_j$.
For a word in $R$, their difference must be one. Thus
$\sum_{j=0}^{m-1}\ell_j=1$, i.e., exactly one variable $z_j$ for $j<m$ is mapped to $b$, and
all other variables are erased. This will imply that the word is in $R_m$, proving the claim.
\end{proof}

\section{Proofs Omitted from Section~\ref{sec:queries}}\label{app:query}


\restateStableQuery*

\begin{proof}
The equivalence of (ii) and (iii) is Theorem~3.4 of \citet{hanneke2026language}. The following proof makes the stable-support
conclusion explicit. Furthermore, (i) $ \Rightarrow$ (ii) follows directly from the definition:
a stable output that is contained in the target is, in particular,
eventually correct. It remains to prove (iii) $\Rightarrow$ (i).

Suppose that $\cL$ has a countable inner-cover $\{C_1,C_2,\ldots\}$.
A finite nonempty cover can be listed in this form by repeating its
elements. Fix an enumeration $u_1,u_2,\ldots$ of $\cX$, and query $u_n$
at round $n$. Let
\[
N_n:=\{u_t:t\leq n,\ a_t=0\}
\]
be the set of negatively answered query points. Define
\[
i_n:=\min\{i\geq1:C_i\cap N_n=\varnothing\},\qquad
\cG(\fmreplace{\sigma}{\bar \sigma}_n,\fmreplace{\alpha}{\bar \alpha}_n):=C_{i_n}.
\]
This minimum exists on every realizable interaction. Indeed, the target
contains some cover element $C_i$, and no point of $C_i$ can receive a
negative answer. Thus at least one index is always eligible. On an
unrealizable \azreplace{input}{interaction} for which there is no eligible index, define the
output arbitrarily as $\cX$.

Fix a target $K\in\cL$, and let
\[
i^*:=\min\{i\geq1:C_i\subseteq K\}.
\]
The index $i^*$ is never excluded, so $i_n\leq i^*$ at every round.
For each $i<i^*$, the minimality of $i^*$ implies $C_i\not\subseteq K$.
Choose a point $z_i\in C_i\setminus K$. The fixed query sequence eventually
queries $z_i$ and receives a negative answer. From that round onward,
$C_i\cap N_n\ne\varnothing$, so index $i$ is permanently excluded.

There are only finitely many indices below $i^*$. Hence there is a finite
round $n_0$ by which all of them have been excluded. Since $i^*$ remains
eligible, for every $n\geq n_0$ we have
\[
i_n=i^*,\qquad \cG(\fmreplace{\sigma}{\bar \sigma}_n,\fmreplace{\alpha}{\bar \alpha}_n)=C_{i^*}\subseteq K.
\]
The output is therefore constant and valid from round $n_0$ onward. This
proves stable generation. 
\end{proof}

\restateCertifiableQueryFeedback*

\begin{proof}
For necessity, fix $K\in\mathcal L$ and an enumeration of $K$.
Take a finite transcript at which certification occurs.
Let $P_K$ consist of the observed positive examples and
positive query points, and let $N_K$ consist of the
negative query points.
Every language in $\mathcal V_{\mathcal L}(P_K,N_K)$
is consistent with this transcript.
Soundness therefore places the certified infinite output
in their common intersection.

For sufficiency, fix an enumeration $u_1,u_2,\ldots$ of
$\cX$ and query $u_t$ at round $t$.
Let $P_t,N_t$ record all positive and negative information
available at that time.
Define
\[
I_t
:=
\bigcap_{L\in\mathcal V_{\mathcal L}(P_t,N_t)}L
\]
Let the output of generator be arbitrary whenever $I_t$ is finite, and $I_t$ whenever it is infinite. Furthermore, certify whenever, $I_t$ is infinite.
On every realizable transcript, the target belongs to the version space, so certification is sound.

Fix a target $K$ and its finite witnesses $P_K,N_K$. Since the queries exhaust $\mathcal{X}$ in the fixed order, every element of $P_K \cup N_K$ is eventually queried at a time $t$.
At that time,
\[
\mathcal V_{\mathcal L}(P_t,N_t)
\subseteq
\mathcal V_{\mathcal L}(P_K,N_K),
\]
and hence
\[
I_t
\supseteq
\bigcap_{L\in\mathcal V_{\mathcal L}(P_K,N_K)}L.
\]
The latter intersection is infinite, so certification occurs
after finitely many rounds.
\end{proof}


\restateCompleteGenHierarchy*

\begin{proof}
Theorem~\ref{thm:generation-hierarchy} establishes the three
strict inclusions between uniform, certifiable, stable, and
generation in the limit. Propositions~\ref{prop:query-cert-implies-stable}
and~\ref{prop:stable-not-query-cert} establish the strict inclusion
of certifiable generation with query feedback in stable generation
without feedback. The strictness of inclusion of generation in limit in generation in limit with query-feedback is proven in Theorem 3.10 of \citet{hanneke2026language}. The equivalence of stable and ordinary generation
with query feedback follows from Theorem~\ref{thm:stable-query}. 

We establish the remaining relations.

\emph{Certifiable generation versus certifiable generation with
query feedback.}
By Theorem~\ref{thm:generation-hierarchy} the collection $\cL:=\{L_i := \Nat \setminus \{i\}: i \in \Nat\}$ is not
certifiably generatable without feedback. However, for each
target $L_i=\Nat \setminus\{ i \}$, let
$N_i=\{i\}$ and $P_i = \varnothing$. Then $\cV_{\cL}(P_i, N_i) = L_i.$ Theorem~\ref{thm:certifiable_query_feedback} implies that the collection is certifiably generatable with query feedback.
The reverse inclusion follows by ignoring queries.

\emph{Identification in the limit.}
Identification implies stable generation by outputting the
language hypothesized by the identifier. The inclusion is strict
by Theorem~\ref{thm:patterns}, which gives a stably generatable
collection that is not identifiable in the limit. Furthermore, Theorem~\ref{thm:tf-patterns} gives a uniformly
generatable collection that is not identifiable in the limit.
Conversely, the collection in
Proposition~\ref{prop:stable-not-query-cert} is also identifiable in the
limit, but is not
certifiably generatable even with query feedback. 

Therefore, identification is incomparable with uniform generation,
certifiable generation, and certifiable generation with query
feedback. This proves all the claimed relations.
\end{proof}

\begin{restatable}{proposition}{restateQueryCertImpliesStable}
\label{prop:query-cert-implies-stable}
    If a collection $\mathcal L$ is certifiably generatable with query
feedback, then it is stably generatable even without feedback.
\end{restatable}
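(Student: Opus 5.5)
The plan is to argue directly from the witness characterization of Theorem~\ref{thm:certifiable_query_feedback} and build a feedback-free generator. It imitates the inner-cover argument in the proof of Theorem~\ref{thm:stable-query}. There, membership queries eliminated spurious candidates; here, positive examples from the enumeration will eliminate them.

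\emph{Setup.} Fix an enumeration $(P_1,N_1),(P_2,N_2),\ldots$ of all pairs of finite disjoint subsets of $\cX$; there are countably many since $\cX$ is countable. Call index $j$ \emph{valid} if $\cV_{\cL}(P_j,N_j)\neq\varnothing$ and
\[
I_j:=\bigcap\nolimits_{L\in\cV_{\cL}(P_j,N_j)}L
\]
is infinite. By Theorem~\ref{thm:certifiable_query_feedback}, every $K\in\cL$ has at least one valid index $j$ with $P_j\subseteq K$ and $N_j\cap K=\varnothing$, i.e.\ $K\in\cV_{\cL}(P_j,N_j)$. Let $j^*(K)$ be the least such index.

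\emph{Generator.} Without feedback, on input $\sigma$ define
\[
j(\sigma):=\min\{j \text{ valid}: P_j\subseteq\supp(\sigma),\ N_j\cap\supp(\sigma)=\varnothing\},\qquad \cG(\sigma):=I_{j(\sigma)},
\]
and output $\cX$ if no such $j$ exists.

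\emph{Stability.} Fix a target $K$ and an enumeration of $K$, and write $j^*=j^*(K)$.
\begin{itemize}
\item Index $j^*$ becomes eligible once the finite set $P_{j^*}\subseteq K$ has appeared. It is never excluded afterwards, because $N_{j^*}\cap K=\varnothing$.
\item Consider any valid $j<j^*$. If $P_j\not\subseteq K$, then $j$ is never eligible. If $P_j\subseteq K$ and $N_j\cap K=\varnothing$, then $K\in\cV_{\cL}(P_j,N_j)$, so $j$ would be a witness for $K$ smaller than $j^*$, contradicting minimality. The only remaining case is $P_j\subseteq K$ and $N_j\cap K\neq\varnothing$. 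Then some element of $N_j\cap K$ eventually appears in the enumeration, and from that point $j$ is permanently excluded.
\end{itemize}
Since there are only finitely many indices below $j^*$, after some finite round we always have $j(\sigma)=j^*$. From then on the output is the constant set $I_{j^*}$, which is infinite by validity and contained in $K$ because $K\in\cV_{\cL}(P_{j^*},N_{j^*})$. Hence $\cL$ is stably generatable.

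The main obstacle is the middle case analysis: negative information can no longer be queried, and a smaller valid witness that is genuinely consistent with $K$ would never be refuted by positive data. The minimality of $j^*$ rules this out, since any such witness would itself be a witness for $K$ and could serve as the limit just as well. Everything else is routine bookkeeping.
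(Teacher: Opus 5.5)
Your proposal is correct and follows essentially the same route as the paper. Both arguments enumerate the countably many finite disjoint witness pairs $(P,N)$ with nonempty version space and infinite intersection, output the intersection for the least such pair consistent with the observed positive data, and use minimality of the target's least witness to show that every smaller index is either never eligible or eventually excluded by a positive example. The only difference is cosmetic: you enumerate all disjoint pairs and filter for validity, whereas the paper enumerates the valid pairs directly.
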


\begin{proof}
For finite disjoint sets $P,N\subseteq\mathcal X$, let $I(P,N):=\bigcap_{L\in \fmreplace{V}{\cV}_{\mathcal L}(P,N)}L.$
Define the family of finite pairs that can witness the characterization in Theorem~\ref{thm:certifiable_query_feedback} as
\[
\cW:=\bigl\{(P,N):
P,N\subseteq\cX\text{ are finite and disjoint},\;
\fmreplace{V}{\cV}_{\cL}(P,N)\neq\varnothing,\;
|I(P,N)|=\infty\bigr\}.
\]
Observe that since $\mathcal X$ is countable, $\mathcal W$ is also countable. Let $(P_i,N_i)_{i\geq 1}$ be an enumeration of $\cW$  and let $C_i:=I(P_i,N_i)$.
By definition, every $C_i$ is infinite. Moreover, by the characterization
of certifiable generation with query feedback, for every
$K\in\mathcal L$ there is an index $i$ such that
$P_i\subseteq K$ and $N_i\cap K=\varnothing$.

We now define the generator without feedback. For a finite sequence $\fmreplace{\sigma}{\bar \sigma}$, let
$S=\supp(\fmreplace{\sigma}{\bar \sigma})$ and define $ i(\fmreplace{\sigma}{\bar \sigma})=\min\{i\geq 1:
P_i\subseteq S,\;N_i\cap S=\varnothing\}$,
whenever the defined set is not empty. The generator is defined as
\[
\fmreplace{G}{\cG}(\fmreplace{\sigma}{\bar \sigma}):=
\begin{cases}
C_{i(\fmreplace{\sigma}{\bar \sigma})}, & \text{if $i(\fmreplace{\sigma}{\bar \sigma})$ is defined},\\
\mathcal X, & \text{otherwise}.
\end{cases}
\]

Fix a target language $K\in\mathcal L$ and an enumeration
$\fmreplace{\sigma}{\bar \sigma}$ of $K$. Let
\[
i^*:=\min\{i\geq 1:
P_i\subseteq K,\;N_i\cap K=\varnothing\}.
\]
Since $P_{i^*}$ is finite, every enumeration of $K$
eventually contains $P_{i^*}$. Furthermore, no element
of $N_{i^*}$ appears in the enumeration. Consequently, there will be some round such that $P_{i^*}$ is a subset of the instances appeared so far and  $N_{i^*}$ has empty intersection with the input, meaning that the generator will consider this as an index in the set it is minimizing.

Now consider any $i<i^*$. By the minimality of $i^*$,
either $P_i\not\subseteq K$ or $N_i\cap K\neq\varnothing$.
In the first case, $P_i$ is never contained in the observed
examples, so $i$ is never considered by the generator. In the second case,
choose $x_i\in N_i\cap K$. Since $\fmreplace{\sigma}{\bar \sigma}$ enumerates $K$,
there is a finite round at which $x_i$ appears. From that
round onward,
$N_i\cap\operatorname{supp}(\fmreplace{\sigma}{\bar \sigma}_n)\neq\varnothing$,
so $i$ will never be considered.

There are only finitely many indices smaller than $i^*$.
Thus there is a round $n^*$ such that
$i(\fmreplace{\sigma}{\bar \sigma}_n)=i^*$ for every $n\geq n^*$.
Since $K\in \fmreplace{V}{\cV}_{\mathcal L}(P_{i^*},N_{i^*})$, we have
\[
\fmreplace{G}{\cG}(\fmreplace{\sigma}{\bar \sigma}_n)=C_{i^*}
=\bigcap_{L\in \fmreplace{V}{\mathcal V}_{\mathcal L}(P_{i^*},N_{i^*})}L
\subseteq K
\qquad\text{for every }n\geq n^*.
\]
The output is therefore constant and valid from round
$n^*$ onward, proving stable generation without feedback.
\end{proof}

We now show that the above implication is strict, namely, there is a countable collection that is stably generatable but not certifiably generatable even with query feedback.

\begin{restatable}{proposition}{restateStableNotQueryCert} 
\label{prop:stable-not-query-cert}
There exists a countable collection that is both stably generatable without feedback and identifiable in the limit but not certifiably generatable with query
feedback.
\end{restatable}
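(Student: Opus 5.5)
We will construct the collection explicitly from a partition of $\Nat$ into infinitely many disjoint infinite blocks. We then get identifiability from Theorem~\ref{thm:identification-telltales}, stable generatability from the implication ``identification $\Rightarrow$ stable generation'', and non-certifiability from Theorem~\ref{thm:certifiable_query_feedback}. The design principle: the one bad target $K$ must be surrounded by languages that agree with $K$ on any finite amount of information, positive and negative. They must contain no point outside $K$ that a query could detect, yet share only finitely much with $K$ in total. To keep the tell-tale condition, none of these languages may be a subset of $K$.

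\textbf{Construction.} Partition $\cX=\Nat$ into pairwise disjoint infinite sets $A_0,A_1,A_2,\ldots$; for instance, $A_i$ is the set of numbers whose $2$-adic valuation is $i$. Let $K:=A_0\cup A_1$. For finite $F\subseteq A_0$, finite $G\subseteq A_1$ and $j\ge 2$, let $L_{F,G,j}:=F\cup G\cup A_j$. Set $\cL:=\{K\}\cup\{L_{F,G,j}\}$. This family is countable, and all its languages are infinite.

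\textbf{Identifiability.} We exhibit a tell-tale assignment. Take $T_K:=\varnothing$. This works because every $L_{F,G,j}$ contains $A_j\not\subseteq K$, so no other member of $\cL$ is a subset of $K$. For $L=L_{F,G,j}$, take $T_L:=F\cup G\cup\{a_j\}$ with some fixed $a_j\in A_j$. Suppose $T_L\subseteq L'\subseteq L$. Then $L'\neq K$, since $a_j\notin K$. So $L'=L_{F',G',j'}$, and $a_j\in L'$ forces $j'=j$. Finally $F\subseteq F'\subseteq F$ and $G\subseteq G'\subseteq G$, so $L'=L$. By Theorem~\ref{thm:identification-telltales}, $\cL$ is identifiable in the limit. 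Outputting the identifier's hypothesis gives a stable generator, as noted in the proof of Theorem~\ref{thm:complete-gen-hierarchy}.

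\textbf{Non-certifiability with queries.} This is the step needing care. We show that the condition of Theorem~\ref{thm:certifiable_query_feedback} fails for $K$. Fix arbitrary finite $P\subseteq K$ and $N\subseteq\cX\setminus K$. Let $F:=P\cap A_0$ and $G:=P\cap A_1$. Since $N$ is finite, it meets only finitely many blocks, so there is $J$ with $N\cap A_j=\varnothing$ for all $j\ge J$. For each such $j$, the language $L_{F,G,j}$ contains $P$ and avoids $N$, since $N\cap(F\cup G)\subseteq N\cap K=\varnothing$. Hence
\[
\bigcap\nolimits_{L\in\cV_{\cL}(P,N)}L\subseteq\bigcap\nolimits_{j\ge J}\bigl(F\cup G\cup A_j\bigr)=F\cup G,
\]
using disjointness of the blocks. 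This intersection is finite. So no witnesses $(P_K,N_K)$ exist, and $\cL$ is not certifiably generatable with query feedback.

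The main point to check is the interplay between the two requirements: the finite parts $F,G$ must be rich enough that positive examples from both $A_0$ and $A_1$ cannot isolate $K$. At the same time, the blocks $A_j$ with $j\ge2$ must lie outside $K$, so that $K$ keeps an empty tell-tale while negative queries still exclude only finitely many indices $j$.
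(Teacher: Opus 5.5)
Your proof is correct, but it differs from the paper's in two places: the construction, and how identifiability is obtained.

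The paper takes $\cX=A\sqcup M$, the distinguished target $K_\star=A$, and single-swap neighbours $K_j=(A\setminus\{a_j\})\cup\{m_j\}$. It writes down one explicit map that is at once a stable generator and an identifier: output $K_\star$ until some $m_j$ has appeared, and afterwards $K_j$ for the least such $j$. For non-certifiability it computes $\bigcap_{L\in\cV_{\cL}(P,N)}L=P\cup\{a_j:m_j\in N\}$ and invokes Theorem~\ref{thm:certifiable_query_feedback}, exactly as you do.

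Because each $K_j$ contains all of $K_\star$ except one point, a finite positive sample rules out only finitely many neighbours. So the paper needs only a one-parameter family. Your neighbours $F\cup G\cup A_j$ share only a finite set with $K$, which is why you must index them by finite sets to absorb arbitrary positive data. In exchange, $K$ gets an empty tell-tale, identifiability follows in a few lines from Theorem~\ref{thm:identification-telltales}, and stability is inherited from identification. The paper's version is self-contained with an explicit generator, whereas yours uses Angluin's characterization as a black box. An explicit identifier for your family is also easy to write down, though.

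Two cosmetic points. Take $J\ge 2$ so that $L_{F,G,j}$ is defined for all $j\ge J$. Also, splitting $K$ as $A_0\cup A_1$ is unnecessary: $K=A_0$ with neighbours $F\cup A_j$, $j\ge 1$, works with the same argument.
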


\begin{proof}
Let $A=\{a_1,a_2,\ldots\}$ and $M=\{m_1,m_2,\ldots\}$
be disjoint countably infinite sets, and let
$\mathcal X=A\sqcup M$. Define
\[
K_\star:=A,\qquad
K_j:=(A\setminus\{a_j\})\cup\{m_j\}\quad (j\geq 1),
\qquad
\mathcal L:=\{K_\star\}\cup\{K_j:j\geq 1\}.
\]
Note that every language in $\cL$ is infinite.

We first show that $\mathcal L$ is stably generatable.
For a finite sequence $\sigma$, define
$J(\sigma):=\{j\geq 1:m_j\in\operatorname{supp}(\sigma)\}$,
and let
\[
\fmreplace{G}{\cG}(\sigma):=
\begin{cases}
K_\star, & J(\sigma)=\varnothing,\\
K_{\min J(\sigma)}, & J(\sigma)\neq\varnothing.
\end{cases}
\]
On every enumeration of $K_\star$, no element from $M$
will appear, and hence the generator outputs $K_\star$ in every round. 
For any target $K_j$, the element $m_j$ eventually appears in the enumeration.
Since $K_j \cap M = \{m_j\}$ from that round
onward the generator outputs $K_j$ and never changes
its output. Thus $\cL$ is stably generatable
without feedback. Since the output of the generator eventually equals the target language itself, the same map also identifies $\cL$ in the limit.

We next show that $\mathcal L$ is not certifiably
generatable with query feedback. Consider the target
$K_\star=A$ and any finite sets $P\subseteq A$ and
$N\subseteq M$. By construction,
\[
\fmreplace{V}{\mathcal V}_{\mathcal L}(P,N)
=\{K_\star\}\cup
\{K_j:a_j\notin P,\;m_j\notin N\}.
\]
Since $K_j$ excludes $a_j$ and $K_\star$ excludes
every element from $M$, we obtain
\[
\bigcap_{L\in \fmreplace{V}{\mathcal{V}}_{\mathcal L}(P,N)}L
=P\cup\{a_j:m_j\in N\}.
\]
The right-hand side is finite. Therefore no finite sets $P$ and $N$ can be a witness to the  characterization of certifiable generation for \fmreplace{$K^*$}{$K_\star$}. Hence, $\cL$ is not certifiably generatable with query feedback.
\end{proof}


\section{Proofs Omitted from Section~\ref{sec:teaching}}\label{app:teaching}

\restateDimensionHierarchy*

\begin{proof}
We first prove the chain of inequalities. 
Every stable generation teaching set is a generation teaching set, giving
the first inequality.

Next, consider any admissible assignment
$\mathbf T=(T_L)_{L\in\cL}\in\mathfrak A(\cL)$. In the if direction
of Theorem~\ref{thm:stable-char}, we constructed a generator $\cG$
that satisfies $\SGTD(\cL,\cG)\leq\sup_{L\in\cL}|T_L|.$
which proves the second inequality.

To prove $\STT(\cL)\leq\CGTD(\cL)$, it suffices to consider the case
$\CGTD(\cL)=m<\infty$. Fix an enumeration of all finite sets of size at most $m$ that satisfy the property in \eqref{eq:certificate}. Such sets form a countable family because $\cX$ is
countable. For every $L\in\cL$, assign the first set in this enumeration
that is contained in $L$, and denote it by $T_L$. The definition of
$\CGTD$ ensures that this choice exists.

We show that the assignment $(T_L)_{L \in \cL}$ satisfies \eqref{eq:stable-fiber} and \eqref{eq:stable-noclash}. 
Consider any $T_L$ and $T_{L'}$ such that $T_{L} \cup T_{L'} \subseteq L, L'$. Then each assigned
set was an available choice for both languages. Since we chose the first available set according \fmedit{to} a fix enumeration for both languages, we must have
$T_L=T_{L'}$. Therefore, the assignment satisfies \eqref{eq:stable-noclash}.  Furthermore, we have 
\[
\bigcap\nolimits_{L\in\cL:T_L=T}L
\supseteq
\bigcap\nolimits_{L\in\cL:T\subseteq L}L,
\]
Consequently, the assignment satisfies \eqref{eq:stable-fiber}. Therefore $\STT(\cL) \leq \sup_{L \in \cL} |T_L| \leq m,$ proving the last inequality.

For the first separation, use the collection
$\cL_1=\{\cX\}\cup\{\cX\setminus\{w\}:w\in\cX\}$ which is also used by \citet{kalavasis2025characterizations}. This collection is not stably generatable, so
$\SGTD(\cL_1)=\infty$. On the other hand, since $\cL_1$ is countable, Theorem~\ref{thm:gtd} gives
$\GTD(\cL_1)\leq1$.



The second and third separations are witnessed by the 
constructions in
Corollary~\ref{cor:infinite-gap} and Proposition~\ref{prop:stt-cgtd-separation}.
\end{proof}

Throughout this appendix, the term ``admissible assignment'' refers to an element of
$\mathfrak A(\cL)$, as in Definition~\ref{def:ASGTD}. 

 The following construction witnesses the third separation in
Theorem~\ref{thm:dimension-hierarchy}, between $\STT$ and $\CGTD$.


\begin{proposition}\label{prop:stt-cgtd-separation}
There exists a countable collection $\cL$ such that $\STT(\cL)=1$ and $\CGTD(\cL)=\infty$.
\end{proposition}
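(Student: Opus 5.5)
We can reuse the collection from Proposition~\ref{prop:stable-not-query-cert}. Let $A=\{a_1,a_2,\ldots\}$ and $M=\{m_1,m_2,\ldots\}$ be disjoint countably infinite sets, and let $\cX=A\sqcup M$. Set $K_\star:=A$, $K_j:=(A\setminus\{a_j\})\cup\{m_j\}$ for $j\geq1$, and $\cL:=\{K_\star\}\cup\{K_j:j\geq1\}$. The plan has three steps: show $\STT(\cL)\leq1$ with an explicit singleton assignment, show $\STT(\cL)\geq1$, and show $\CGTD(\cL)=\infty$ by exhibiting a language with no finite certifiable generation teaching set.

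\textbf{Upper bound $\STT(\cL)\le 1$.} Assign $T_{K_j}:=\{m_j\}$ for every $j\geq1$ and $T_{K_\star}:=\{a_1\}$; any fixed element of $A$ works for $K_\star$. We check the two conditions of Theorem~\ref{thm:stable-char}.
\begin{itemize}
\item Condition \eqref{eq:stable-noclash}. For two distinct languages $K_i,K_j$, we have $m_i\notin K_j$, so the premise of \eqref{eq:stable-noclash} fails. For the pair $K_\star,K_j$, we have $T_{K_j}=\{m_j\}\not\subseteq A=K_\star$, so again the premise fails. Hence \eqref{eq:stable-noclash} holds vacuously.
\item Condition \eqref{eq:stable-fiber}. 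All assigned sets are pairwise distinct, so every fiber $\{L'\in\cL:T_{L'}=T_L\}$ is the singleton $\{L\}$. Its intersection is $L$ itself, which is infinite.
\end{itemize}

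\textbf{Lower bound $\STT(\cL)\ge 1$.} Suppose an admissible assignment used only empty sets. Then all languages would lie in a single fiber, and its intersection $\bigcap_{L\in\cL}L$ would be required to be infinite. But this intersection is contained in $A\cap\bigcap_j (A\setminus\{a_j\})=\varnothing$, so \eqref{eq:stable-fiber} fails. Hence every admissible assignment gives some language a nonempty set, and $\STT(\cL)=1$.

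\textbf{$\CGTD(\cL)=\infty$.} Consider the target $K_\star=A$ and any finite $T\subseteq A$. The languages containing $T$ are $K_\star$ together with every $K_j$ with $a_j\notin T$, and there are infinitely many such $j$. Their intersection lies inside $A$, and for every $a_j\notin T$ it omits $a_j$, because $a_j\notin K_j$. So the intersection is exactly $T$, which is finite. Thus $K_\star$ has no finite certifiable generation teaching set, its infimum is taken over the empty set, and $\CGTD(\cL)=\infty$.

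This last step is the crux, but since it is the same intersection computation as in Proposition~\ref{prop:stable-not-query-cert} restricted to $N=\varnothing$, no step should present a real obstacle. The only care needed is the $\STT\geq1$ check, so that the value is exactly $1$ rather than $0$.
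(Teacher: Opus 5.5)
Your proof is correct, but it uses a different witnessing collection from the paper's.

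\textbf{The paper's route.} The paper builds a new family over $\Nat$,
\[
L_{i,j}=\{2k-1:1\le k<i\}\cup\{2,4,\ldots,2j\}\cup\{2k-1:k\ge j\},\qquad 1\le i<j,
\]
and assigns $T_{L_{i,j}}=\{2j\}$. Many languages share the same witness, and condition \eqref{eq:stable-fiber} is verified through their common odd tail. It then shows that \emph{every} finite $T\subseteq\Nat$ has a finite intersection $\bigcap_{L\in\cL:T\subseteq L}L$, so no target has a finite certifying set.

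\textbf{Your route.} You reuse the collection of Proposition~\ref{prop:stable-not-query-cert} with an injective assignment. Every fiber is then a singleton, so \eqref{eq:stable-fiber} holds trivially and \eqref{eq:stable-noclash} is just the ordinary non-clashing condition. $\CGTD=\infty$ is forced by the single language $K_\star$, since each $K_j$ is certified by $\{m_j\}$.

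\textbf{What each buys.} Your argument is shorter and gives a bit more in another direction. Your assignment is non-clashing in the sense of Definition~\ref{def:telltale}, so you also get $\NCTD^+(\cL)\le 1$. More generally, any non-clashing assignment has singleton fibers and is therefore admissible, so $\STT\le\NCTD^+$ always; this mirrors $\ITD\le\NCTD^+$. Hence any identifiable class with $\NCTD^+=1$ that is not certifiably generatable witnesses the separation. By Proposition~\ref{prop:stable-not-query-cert}, your class is not certifiable even with query feedback. The paper's construction instead gives a collection in which failure of certification is spread over all targets rather than concentrated on one exceptional language. It also shows the weakened non-clashing condition of Theorem~\ref{thm:stable-char} doing real work with non-singleton fibers.
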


\begin{proof}
For $1\leq i<j$, define
\[
L_{i,j}:=
\{2k-1:1\leq k<i\}
\cup\{2,4,\ldots,2j\}
\cup\{2k-1:k\geq j\},
\]
and let $\cL:=\{L_{i,j}:1\leq i<j\}$. This collection is countably
infinite, and every language contains an infinite odd tail.

We first show that $\STT(\cL)\leq1$. Assign
$T_{L_{i,j}}:=\{2j\}$. Every language assigned $\{2j\}$ contains the
odd tail starting at $2j-1$, so
\[
\{2k-1:k\geq j\}
\subseteq
\bigcap_{L\in\cL:T_L=\{2j\}}L.
\]
Thus the languages assigned the same set have infinite intersection.
Furthermore, if $j<j'$, then $2j'\notin L_{i,j}$, since the largest
even number in $L_{i,j}$ is $2j$. Hence two languages assigned different
sets cannot both contain one another's assigned set. The assignment is
therefore admissible, proving $\STT(\cL)\leq1$. Since the intersection of languages in the class is not infinite, we have  $\STT(\cL) = 1$.

We next show that no target has a finite certifiable teaching set.
Fix any finite $T\subseteq\Nat$, and choose an integer
$m>\max(T\cup\{0\})$. Then $T\subseteq L_{m,j}$ for every $j>m$.
Consequently,
\[
\begin{aligned}
\bigcap_{L\in\cL:T\subseteq L}L
&\subseteq \bigcap_{j>m}L_{m,j}\\
&=\{2k-1:1\leq k<m\}\cup\{2,4,\ldots,2m+2\}.
\end{aligned}
\]
Indeed, the common even elements are those at most $2m+2$, and every
odd number beyond the initial odd segment is excluded once $j$ is
large enough. The displayed intersection is finite. Since $T$ was chosen
arbitrarily, this implies that
$\CGTD(\cL)=\infty$.
\end{proof}

The following construction witnesses the second separation in
Theorem~\ref{thm:dimension-hierarchy}, between $\SGTD$ and $\STT$.

\begin{restatable}{theorem}{restateArbitraryGap}\label{thm:arbitrary-gap}
For every $n\geq1$, there is a countably infinite collection $\cL_n$ of
cofinite languages over a countable domain such that
\[
\SGTD(\cL_n)=1\qquad\text{and}\qquad\STT(\cL_n)=n.
\]
Moreover, the generator witnessing the first equality depends only on the set of
observed examples and has exactly two possible outputs. 
\end{restatable}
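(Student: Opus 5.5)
The plan is to give an explicit construction modeled on the family from Theorem~\ref{thm:all-stable-certifiable-dimensions}, and to modify it so that a generator can exploit the \emph{order} in which information arrives. Condition~\eqref{eq:stable-noclash}, by contrast, only sees finite sets. I will take disjoint countably infinite sets $A=\{a_1,a_2,\ldots\}$ and $B=\{b_1,b_2,\ldots\}$ with $\cX=A\sqcup B$. The languages will be cofinite, each of the form $\cX\setminus F$. Here $F$ lies entirely in $B$ (an ``$A$-type'' language, which contains $A$) or entirely in $A$ (a ``$B$-type'' language), with $|F|$ tied to $n$ (e.g.\ $|F|=n-1$ or $|F|\le n-1$) and with index constraints still to be tuned. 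The point of confining $F$ to one side is that every language then contains $A$ or $B$. This makes a two-output generator with outputs in $\{A,B\}$ possible, and such a generator depends only on $\supp(\sigma)$.

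\textbf{Step 1 ($\SGTD\le 1$).} I will define $\cG(\sigma)=A$ or $B$ according to a monotone rule on $S=\supp(\sigma)$. Monotone here means that once the rule outputs the correct side, adding further elements of the target cannot flip it. As a first attempt, I will let the rule output $A$ iff $S$ contains some $b_k$ whose index exceeds a threshold determined by the $a$'s present in $S$. I will then choose the index constraints on $F$ so that a single well-chosen element $t_L\in L$ settles the rule for every $S$ with $t_L\in S\subseteq L$. For an $A$-type language this should be a $b_k$ beyond $\max F$. For a $B$-type language it should be an $a_k$ chosen symmetrically. The delicate point is that $L$ contains all of one side, so the adversary can add arbitrary elements of that side. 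The rule and the index constraints must be coordinated so that these additions never override the teaching element. Proving that the rule is stable after $t_L$ appears amounts to a direct case check. The size-$1$ lower bound, $\SGTD\ge1$, is immediate because $\bigcap\cL$ is finite.

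\textbf{Step 2 ($\STT\le n$ and $\STT\ge n$).} For the upper bound I will assign to each $L=\cX\setminus F$ a canonical set of size $n$ that pins down which side $L$ contains, namely $n$ elements of the opposite side chosen by a fixed enumeration. Any $n$ elements of $B$ rule out every $B$-type language avoiding them, and there are at most $n-1$ such exceptions. I will then verify conditions~\eqref{eq:stable-fiber} and~\eqref{eq:stable-noclash}, first-choice style, as in the proof of $\STT\le\CGTD$. For the lower bound, suppose an admissible assignment uses sets of size at most $n-1$. I will find $L$ of $A$-type and $L'$ of $B$-type with $T_L\cup T_{L'}\subseteq L\cap L'$: choose $F\subseteq B$ and $F'\subseteq A$ of size up to $n-1$ so that they avoid the relevant finite sets. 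This is a pigeonhole/diagonal argument over the countably many finite sets. Condition~\eqref{eq:stable-noclash} then forces $T_L=T_{L'}$. Iterating yields a fiber containing languages of both types whose missing sets cover enough that the intersection $A\cap B=\varnothing$ is reached up to finitely many elements. This violates~\eqref{eq:stable-fiber}.

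\textbf{Main obstacle.} The main obstacle is Step~1 together with its interplay with Step~2. The index constraints must be strong enough that a single element yields order-insensitive stability. They must also be weak enough that the set-based lower bound in Step~2 still goes through. I expect to have to iterate the exact family definition, for instance by tying the admissible $F$ to a ``tag'' index, before both arguments close.
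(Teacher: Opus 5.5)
\textbf{There is a genuine gap.} The proposal never fixes the family, and that construction is the entire content of the theorem. Worse, the one concrete commitment you make in Step~1 provably fails. You give an $A$-type language $L=\cX\setminus F$ ($F\subseteq B$) a teaching point $b_k\in B\setminus F$, and a $B$-type language $L'=\cX\setminus F'$ ($F'\subseteq A$) a teaching point $a_m\in A\setminus F'$. Since $L\supseteq A$ and $L'\supseteq B$, the sequence $(a_m,b_k)$ lies in $(L\cap L')^*$ and contains both teaching sets, so its output must be contained in $L\cap L'$. But $B\not\subseteq L$ when $F\neq\varnothing$, and $A\not\subseteq L'$ when $F'\neq\varnothing$. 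Nonempty missing sets on both sides are unavoidable if $\bigcap\cL$ is to be finite.

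No tuning of thresholds or index constraints helps, because this joint consistency holds for every choice of $F,F',k,m$. In fact, for any generator depending only on the observed set, it forces one stable output shared by all languages, hence contained in $\bigcap\cL$. What you need is the opposite placement plus a cross-separation property: each language's teaching point lies on the side it contains entirely, and for every $A$-type $L$ and $B$-type $L'$ one has $t_L\notin L'$ or $t_{L'}\notin L$.

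The paper gets this from a finite incidence structure. It uses markers $a_1,\ldots,a_N$ and $b_G$ for $G\in\binom{[N]}{r}$, with $r=n-1$. The languages are $P_i=\cX\setminus\{b_G:i\in G\}$ with teaching point $a_i$, and $Q_G=\cX\setminus\{a_i:i\notin G\}$ with teaching point $b_G$. The generator outputs $U$ if some observed $a_i$ has $i$ outside every $G$ with $b_G$ observed, and $V$ otherwise. This family does have your shape, with sides $\{a_1,\ldots,a_N\}\cup U$ and $\{b_G\}_G\cup V$. However, its missing sets are far larger than $n-1$ (e.g.\ $N-r$ for $Q_G$). Also, your suggested $|F|\le n-1$ gives $\cL_1=\{\cX\}$, which has $\STT=0$.

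Step~2 has matching gaps. Your upper bound (``any $n$ elements of $B$ rule out every $B$-type language'') is imported from the family $A\cup F$ of Theorem~\ref{thm:all-stable-certifiable-dimensions}. In your complement-shaped family, every $B$-type language contains all of $B$. Moreover, an $n$-subset of $A$ lies in every $B$-type language whose small missing set avoids it. So an $n$-set does not exclude the other type. The paper's bound works because each $Q_G$ contains only $r=n-1$ of the $a$-markers, so $\{a_1,\ldots,a_n\}$ clashes with every $Q$-type language, which receives $\varnothing$.

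For the lower bound, choosing $F,F'$ ``to avoid the relevant finite sets'' is circular, since $T_L$ depends on $F$ and $T_{L'}$ on $F'$. Even a fiber containing one language of each type has cofinite intersection $\cX\setminus(F\cup F')$, so ``iterating'' needs a mechanism you do not supply. The paper supplies two.
\begin{itemize}
\item \emph{Counting.} With $E(T)=\bigcup_{b_G\in T}G$, one has $T\subseteq P_j$ iff $j\notin E(T)$, and $|E(T_L)|\le q=r^2$. The double count $\binom{s}{2}\le sq$ then bounds the number of distinct teaching sets on $P_1,\ldots,P_N$ by $2q+1$. Hence some $T$ is shared by $q+1$ of them, and no-clash forces $T$ onto a suitable $Q_G$.
\item \emph{Deletion families.} The one-point deletions $P_i\setminus\{v\}$ and $Q_G\setminus\{u\}$ (for $u,v\notin T$) are subsets of $P_i$ and $Q_G$ that contain $T$. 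No-clash therefore drags them into the same fiber, which collapses the fiber's intersection into the finite set of markers together with $T$.
\end{itemize}
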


\begin{proof}
Fix $n\geq1$, and set
\[
r:=n-1,\qquad q:=r^2,\qquad N:=(2q+1)(q+1),\qquad
\cF:=\binom{[N]}{r}.
\]
We have $N\geq n$. When $r=0$, the family $\cF$ consists only of the
empty set. Define two finite marker sets
\[
A:=\{a_1,\ldots,a_N\},\qquad B:=\{b_F:F\in\cF\},
\]
where $a_1,\ldots,a_N$ and $b_F, F\in \cF$ are distinct elements, and countably infinite sets $U,V$ such that $A, B, U,$ and $V$ are pairwise disjoint. 
Let $M:=A\cup B$
and $\cX:=M\sqcup U\sqcup V$.

For every $i\in[N]$ and $F\in\cF$, define
\[
P_i:=\{b_F:i\notin F\}\cup A \cup U\cup V,
\]
\[
Q_F:=\{a_i:i\in F\}\cup B\cup U\cup V.
\]
Now define families $\cP_i$ and $\cQ_{\cF}$ that are obtained by allowing at most one deletion from $P_i$ and $Q_{F}$, respectively, as follows:
\[
\cP_i:=\{P_i\}\cup\{P_i\setminus\{v\}:v\in V\},
\qquad
\cQ_F:=\{Q_F\}\cup\{Q_F\setminus\{u\}:u\in U\}.
\]
Set
\[
\cL_n:=\bigcup_{i\in[N]}\cP_i\ \cup\ \bigcup_{F\in\cF}\cQ_F.
\]
This collection is countably infinite. Every language omits only finitely
many markers and at most one point of $U\cup V$, so every language is
cofinite. Furthermore, every language in a family $\cP_i$ contains all of
$U$, and every language in a family $\cQ_F$ contains all of $V$.

\textbf{Proof of $\SGTD(\cL_n)=1$.}
For a finite training sequence $\sigma\in\cX^*$, define
\[
J(\sigma):=\{i:a_i\in\supp(\sigma)\}\setminus
       \bigcup_{\substack{F\in\cF\\b_F\in\supp(\sigma)}}F,
\]
and let
\[
\cG(\sigma):=
\begin{cases}
U,&J(\sigma)\ne\varnothing,\\
V,&J(\sigma)=\varnothing.
\end{cases}
\]
This generator depends only on the observed set and always outputs one
of the two infinite sets $U,V$.

For a target $L\in\cP_i$, use the singleton teaching set $\{a_i\}$.
Consider any $\sigma\in L^*$ containing $a_i$. Observe that
$$
i \in \{i:a_i\in\supp(\sigma)\}
$$
Furthermore, any $b_F$ in any language in $\cP_i$ satisfies $i \notin \fmreplace{\cF}{F}$. Therefore $i \in J(\sigma)$. Consequently, $\cG(\sigma)=U\subseteq L$. The same reasoning applies for every 
continuation of $\sigma$ in $L$, so the output remains $U$.

For a target $L\in\cQ_F$, use the singleton teaching set $\{b_F\}$.
Every marker $a_i$ that can appear in a sequence consistent with $L$ has
$i\in F$. Once $b_F$ has appeared, the union subtracted in the definition
of $J(\sigma)$ contains all of $F$. It therefore contains every possible $i$ for any
observed $a_i$, so $J(\sigma)=\varnothing$ and $\cG(\sigma)=V\subseteq L$.
Again, the same argument holds on every consistent continuation, and the
output remains $V$. Consequently, every target has a singleton stable
generation teaching set, proving $\SGTD(\cL_n)\leq1$.

Every $u\in U$ is excluded by a language $Q_F\setminus\{u\}$, and every
$v\in V$ is excluded by a language $P_i\setminus\{v\}$. It follows that $\bigcap_{L\in\cL_n}L\subseteq M.$
This intersection is finite, so 
$\SGTD(\cL_n)=1$.

\textbf{Proof of $\STT(\cL_n)\leq n$.}
Assign
\[
T_L:=
\begin{cases}
\{a_1,\ldots,a_n\},&L\in\bigcup_i\cP_i,\\
\varnothing,&L\in\bigcup_F\cQ_F.
\end{cases}
\]
The assigned sets are contained in their targets and have size at most
$n$. All the languages that are assigned the set $\{a_1,\ldots,a_n\}$ contain $U$,
and all those assigned to  $\varnothing$ contain $V$. Hence both required
intersections are infinite and satisfy the condition  \eqref{eq:stable-fiber}.

We prove the condition  \eqref{eq:stable-noclash}. For a pair of languages to be assigned to different sets it must be that one belongs to $\bigcup_F\cQ_F$ and the other belongs to $\bigcup_i\cP_i$. However, for every $F \in \cF$, every language in $\cQ_F$ contains exactly $r=n-1$ markers from $A$. Hence, it cannot contain $\{a_1,\ldots,a_n\}$ so mutual consistency is impossible. Therefore the assignment is admissible, and $\STT(\cL_n)\leq n$.

%

\textbf{Proof of $\STT(\cL_n)\geq n$.}
Suppose, for the sake of contradiction, that there is an admissible
assignment with $|T_L|\leq n-1= r$ for every $L\in\cL_n$. For a finite set
$T\subseteq\cX$, define
\[
E(T):=\bigcup_{\substack{F\in\cF\\b_F\in T}}F.
\]
Each set $F\in\cF$ has size $r$, and an assigned teaching set contains at
most $r$ markers from $B$. Therefore
\[
|E(T_L)|\leq r|T_L|\leq r^2=q.
\]
The only instances missing from $P_j$ are the markers $b_F$ for which $j\in F$.
Consequently, for every finite set $T$,
\begin{equation}\label{eq:gap-consistency}
T\subseteq P_j\quad\Longleftrightarrow\quad j\notin E(T).
\end{equation}

Write $T_i:=T_{P_i}$. If $T_i\ne T_j$, condition~\eqref{eq:stable-noclash} implies
that at least one of $T_i\subseteq P_j$ and $T_j\subseteq P_i$ fails.
By \eqref{eq:gap-consistency}, this means $j\in E(T_i)$ or $i\in E(T_j)$.
Select $s$ indices whose teaching sets are pairwise distinct. Each of their
$\binom s2$ pairs requires at least one such membership. On the other
hand, each selected $E(T_i)$ contains at most $q$ indices and can account
for at most $q$ pairs. Hence, $\binom s2\leq sq,$
which gives $s\leq2q+1$. There are therefore at most $2q+1$ different
teaching sets among the languages $P_1,\ldots,P_N$.

Since $N=(2q+1)(q+1)$, some teaching set $T$ occurs on at least $q+1$
indices. Let
\[
I:=\{i:T_i=T\},\qquad |I|\geq q+1.
\]
We next show that there must exist some $F\in \cF$ such that $Q_F$ receives this same
teaching set. The set of indices $\{i:a_i\in T\}$ has size at most $r$,
so choose an $r$-element set $F\in\cF$ containing all of these indices.
The language $Q_F$ contains the corresponding $A$-markers and every point
of $B\cup U\cup V$. Hence $T\subseteq Q_F.$
Furthermore, $|E(T_{Q_F})|\leq q<|I|$, so there is an index
$i\in I\setminus E(T_{Q_F})$. By \eqref{eq:gap-consistency}, 
$T_{Q_F}\subseteq P_i$. We also know from the definition of $I$ that $ T_{P_i}=T$ and hence $T_{P_i}\subseteq Q_F$.
The two teaching sets are mutually consistent. Condition~\eqref{eq:stable-noclash}
therefore forces $T_{P_i}=T_{Q_F}=T$.

Fix any $v\in V\setminus T$. The language $P_i\setminus\{v\}$ contains
$T$ and is a subset of $P_i$. Its own teaching set is contained in
$P_i\setminus\{v\}$, and therefore in $P_i$. Thus $T_{P_i}=T\subseteq P_i\setminus\{v\}$ and 
$T_{P_i\setminus\{v\}}\subseteq P_i$.
Applying \eqref{eq:stable-noclash} again gives
$T_{P_i\setminus\{v\}}=T$.

Similarly, fix any $u\in U\setminus T$. Since $Q_F\setminus\{u\}$
contains $T$ and is contained in $Q_F$, we have
$T_{Q_F}=T\subseteq Q_F\setminus\{u\}$ and $T_{Q_F\setminus\{u\}}\subseteq Q_F$.
Therefore $T_{Q_F\setminus\{u\}}=T$ as well.

Thus, for every $x\in(U\cup V)\setminus T$, there is some
$L\in\cL_n$ with $T_L=T$ and $x\notin L$. Consequently,
\[
\bigcap_{L\in\cL_n:T_L=T}L\subseteq M\cup T.
\]
Since $M\cup T$ is finite, this contradicts the infinite-intersection
condition for languages assigned the same teaching set. Hence, no assignment of maximum
size $r=n-1$ is possible and $\STT(\cL_n)\geq n$. Combining the upper
and lower bounds proves $\STT(\cL_n)=n$.
\end{proof}

\begin{restatable}{corollary}{restateInfiniteGap}\label{cor:infinite-gap}
There is a countably infinite collection $\cL_\infty$ over a countable domain
such that
$\SGTD(\cL_\infty)=1$ and $\STT(\cL_\infty)=\infty$.
\end{restatable}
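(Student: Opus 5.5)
The plan is to glue together the collections $\cL_n$ from Theorem~\ref{thm:arbitrary-gap} on pairwise disjoint copies of their domains. For each $n\geq1$, let $\cX_n$ be a disjoint copy of the domain of $\cL_n$, and let $\cX:=\bigsqcup_n\cX_n$, which is countable. I would then define
\[
\cL_\infty:=\bigcup\nolimits_{n\geq1}\cL_n,
\]
where each language of $\cL_n$ is viewed as a subset of $\cX_n\subseteq\cX$. Each language is still infinite, and $\cL_\infty$ is countably infinite.

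\textbf{Upper bound $\SGTD(\cL_\infty)\leq1$.} Let $\cG_n$ be the generator from Theorem~\ref{thm:arbitrary-gap}, which depends only on the observed set. On an input $\sigma$ whose support lies in a single block $\cX_n$, the combined generator outputs $\cG_n(\sigma)$, and on the empty sequence or a mixed input it outputs anything. For a target $L\in\cL_n$, every $\sigma\in L^*$ lies in $\cX_n$. So the singleton stable teaching set $\{t\}$ for $L$ with respect to $\cG_n$ works for the combined generator as well: every $\sigma\in L^*$ containing $t$ is nonempty and lies in block $n$, and the same holds for all its continuations. Since $\cL_\infty$ has empty common intersection, no empty teaching set can work (or alternatively $\SGTD\leq1$ suffices), so $\SGTD(\cL_\infty)=1$.

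\textbf{Lower bound $\STT(\cL_\infty)=\infty$.} Given an admissible assignment $(T_L)_{L\in\cL_\infty}$, I would restrict it to $\cL_n$ and show the restriction is admissible for $\cL_n$. Condition~\eqref{eq:stable-noclash} transfers directly because it concerns pairs within $\cL_n$.

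The main obstacle is condition~\eqref{eq:stable-fiber}. The fiber $\{L'\in\cL_\infty:T_{L'}=T_L\}$ may contain languages from other blocks, and the restricted fiber is a subset of it, so its intersection could only grow; that direction is harmless. I also need $T_L$ to be a set usable within block $n$, and it is, since $T_L\subseteq L\subseteq\cX_n$. Fibers cannot mix blocks unless $T_L=\varnothing$, because a nonempty $T_L$ lies in a single block. If $T_L=\varnothing$ is shared across blocks, the intersection over the restricted fiber is still a superset of the full fiber's intersection, which is infinite. Hence the restriction is admissible for $\cL_n$ (possibly using $\varnothing$, which is allowed), and $\sup_{L\in\cL_n}|T_L|\geq\STT(\cL_n)=n$. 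Since this holds for every $n$, $\sup_{L\in\cL_\infty}|T_L|=\infty$ for every admissible assignment, giving $\STT(\cL_\infty)=\infty$.
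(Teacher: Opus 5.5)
Your proof is correct and matches the paper's argument: you take disjoint (tagged) copies of the $\cL_n$ from Theorem~\ref{thm:arbitrary-gap}, build a block-wise generator that reuses the singleton stable teaching sets (the empty common intersection rules out dimension $0$, and that lower bound is needed for the stated equality, so the parenthetical about $\SGTD\leq1$ sufficing is unnecessary), and restrict any admissible assignment to each block, where fibers only shrink and so their intersections stay infinite. Your case analysis for a shared $T_L=\varnothing$ is harmless but vacuous: a fiber containing languages from two different blocks would have empty intersection, so it cannot occur in an admissible assignment.
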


\begin{proof}
For each $n\geq1$, let $\cX_n$ be the domain of the collection $\cL_n$
constructed in Theorem~\ref{thm:arbitrary-gap}, and let $\cG^{(n)}$ be its
generator. Take disjoint tagged copies of these collections. Formally, let
\[
\cX_\infty:=\bigsqcup_{n\geq1}(\{n\}\times\cX_n)\qquad\text{and}\qquad
\cL_\infty:=\bigcup_{n\geq1}
\{\{n\}\times L:L\in\cL_n\}.
\]
Both the domain and the collection are countable. We first show that
$\SGTD(\cL_\infty)=1$.

On a nonempty training sequence $\sigma$ whose elements all have the same
tag $n$, remove the tags, apply $\cG^{(n)}$, and add tag $n$ to every
element of its output. 
On the empty sequence 
output any fixed infinite set. For a target $\{n\}\times L$, use the tagged
copy of the singleton teaching set for $L$. Every consistent sequence
containing this point is nonempty and lies entirely in component $n$.
The component generator is stable and correct once its singleton has
appeared, so the tagged generator has the same property. Therefore
$\SGTD(\cL_\infty)\leq1$. Languages in different components are disjoint,
so their common intersection is empty, ruling out dimension zero.

We next show that $\STT(\cL_\infty)=\infty$. Consider any admissible
assignment and restrict it to one tagged component. The no-clash condition
continues to hold, since it concerns only pairs of languages. Also,
restricting the assignment only removes languages from each collection
$\{L:T_L=T\}$, so their common intersection remains infinite. Thus
the restriction is an admissible assignment on that component.
Removing the tags does not change any teaching-set size. Consequently,
\[
\STT(\cL_\infty)\geq\STT(\cL_n)=n
\qquad\text{for every }n\geq1.
\]
No uniform finite bound can satisfy these inequalities, and hence
$\STT(\cL_\infty)=\infty$.
\end{proof}


\restateAllStableCertifiableDimensions*
\begin{proof}
Let $A=\{a_1,a_2,\ldots\},
 B=\{b_1,b_2,\ldots\}$
be disjoint countably infinite sets, and set $\cX=A\sqcup B$.
Define
\[
\mathcal A_n
=
\{A\cup F:F\subseteq B,\ |F|\leq n-1\}, \qquad
\mathcal B_n
=
\{B\cup F:F\subseteq A,\ |F|\leq n-1\},
\]
and let
$\cL_n=\{\cX\}\cup\mathcal A_n\cup\mathcal B_n.$
Every language in $\cL_n$ is infinite. Furthermore, $\cL_n$
is countable, since a countable set has only countably many
finite subsets. By Theorem~\ref{thm:dimension-hierarchy}, it suffices to show that $\CGTD(\cL_n) \leq n$ and $\SGTD(\cL_n) \geq n$.


\textbf{Proof of $\CGTD(\cL_n)\leq n$.} Define $T_A=\{a_1,\ldots,a_n\}$, $T_B=\{b_1,\ldots,b_n\}.$
Every language in $\mathcal B_n$ contains at most $n-1$
elements of $A$, so none contains $T_A$. Therefore,
\[
\bigcap_{L\in\cL_n, T_A\subseteq L}L
=
\cX \cap \left(\bigcap_{L\in\mathcal A_n}L\right)
=A.
\]
The last equality holds because every language in
$\mathcal A_n$ contains $A$, and $A$ itself belongs to
$\mathcal A_n$. Similarly, $\bigcap_{L\in\cL_n, T_B\subseteq L}L=B.$
Both intersections are infinite. Thus $T_A$ certifies every
target in $\mathcal A_n$ and the target $\cX$, while $T_B$
certifies every target in $\mathcal B_n$. Since these sets
have size $n$, we conclude that $\CGTD(\cL_n) \leq n$.





\textbf{Proof of $\SGTD(\cL_n) \geq n$.}
We first show that, for every finite $T\subseteq\cX$ with
$|T|\leq n-1$,
\begin{equation}\label{eq:two-block-small-closure}
\bigcap_{L\in\cL_n, T\subseteq L}L=T.
\end{equation}
Consider the languages $K_A=A\cup(T\cap B), K_B=B\cup(T\cap A).$
Since $|T|\leq n-1$, we have $K_A\in\mathcal A_n$ and
$K_B\in\mathcal B_n$. Both languages contain $T$, and $K_A\cap K_B=T$. Consequently,
\[
T
\subseteq
\bigcap_{L\in\cL_n, T\subseteq L}L
\subseteq K_A\cap K_B=T,
\]
proving \eqref{eq:two-block-small-closure}. This already shows that $\CGTD(\cL_n) \geq n$.

Next, for the sake of contradiction, suppose there exists a generator $\cG$ which admits
stable teaching sets $T_K\subseteq K$ of size at most $n-1$
for every $K\in\cL_n$.

Let $T=T_{\cX}$, and let $\sigma$ be a sequence listing the
elements of $T$. Since $T$ is a stable teaching set for
$\cX$, we have
\begin{equation}\label{eq:two-block-top-lock}
\cG(\sigma\cdot\rho)=\cG(\sigma)
\end{equation}
for every continuation $\rho$ over $\cX\setminus T$.

We claim that $\cG(\sigma)\subseteq H$ for every
$H\in\cL_n$ containing $T$. To prove this, fix such a
language $H$, and let $\rho_H$ list the elements of
$T_H\setminus T$. The concatenation $\sigma\cdot\rho_H$
is a sequence of distinct elements of $H$ and contains $T_H$.
Therefore, the teaching guarantee for $H$ gives $\cG(\sigma\cdot\rho_H)\subseteq H.$
On the other hand, \eqref{eq:two-block-top-lock} gives $\cG(\sigma\cdot\rho_H)=\cG(\sigma).$
Thus $\cG(\sigma)\subseteq H$, proving the claim.

It follows that $\cG(\sigma)
\subseteq
\bigcap_{H\in\cL_n, T\subseteq H}H.$
Since $|T|\leq n-1$, equation
\eqref{eq:two-block-small-closure} implies that the
intersection on the right is $T$. Hence $\cG(\sigma)\subseteq T,$
contradicting the requirement that $\cG(\sigma)$ is infinite.
Therefore $\SGTD(\cL_n)\geq n$.
\end{proof}

\restateTeachingFinite*

\begin{proof}
\textbf{Upper bound of $1$.} 
For every $A\subseteq\cL$, let $I_A:=\bigcap_{K\in A}K.$ Fix $L\in\cL$ and define
\[
B_L
=
\bigcup_{\substack{
A\subseteq\cL:\\
L\in A,\ |I_A|<\infty
}}
I_A.
\]
Since $\cL$ is finite, there are only finitely many subfamilies
$A\subseteq\cL$. Hence $B_L$ is a finite union of finite sets
and therefore is finite.
Since $L$ is infinite, choose $t_L\in L\setminus B_L$ and set
$T_L:=\{t_L\}$.

Let $A_L:=\{K\in\cL:t_L\in K\}.$
Suppose for the sake of contradiction that $I_{A_L} = \bigcap_{K\in A_L}K$
is finite. Since $L\in A_L$, the subfamily $A_L$ appears in
the definition of $B_L$, and hence $I_{A_L}\subseteq B_L.$
On the other hand, by the definition of $A_L$, $t_L\in I_{A_L}.$ Therefore $t_L\in B_L$, contradicting the choice of $t_L$.
Thus $\left|
\bigcap_{K\in\cL\ :\ T_L\subseteq K}K
\right|
=\infty.$
Hence,
$\GTD(\cL) = \SGTD(\cL) = \STT(\cL) = \CGTD(\cL) \leq 1$.
\textbf{Dimension 0 iff $\bigcap_{L\in\cL}L$ is infinite.} If $\left|\bigcap_{L\in\cL}L\right|=\infty$,
we may take $T_L=\emptyset$ for every $L\in\cL$, yielding $\GTD(\cL) = \SGTD(\cL) = \STT(\cL) = \CGTD(\cL) \leq 0$.
Conversely, if $\GTD(\cL)=0$, \fmedit{choose a generator with $\GTD(\cL, \cG) = 0$}. Then for every $L \in \cL, \cG(()) \subseteq L$. Thus, $\left|\bigcap_{L\in\cL}L\right|=\infty$. Together with Theorem~\ref{thm:dimension-hierarchy} this completes the proof.
\end{proof}

 \restateDFAUnifCGTDNCTD*

\begin{proof}
    The first statement follows from Theorem~4.1 of \cite{arenas2025language}. The family constructed there has the stated DFA complexity and is not $2^{n^{|\cF|}}$-uniformly generatable. Moreover, the common intersection of the languages in the construction is finite. Since $\cF$ is a finite collection of infinite languages, Theorem~\ref{thm:teaching-finite} implies that $\operatorname{CGTD}(\cF) = 1$.

    For the second statement, let $\Sigma=\{a,b\}$ and define
    $W = \{w\in \Sigma^*:|w|<\ell\}$. Then
    \[
    |W|
    =
    \sum_{i=0}^{\ell-1}2^i
    =
    2^\ell-1.
    \]
    
    Define $L_\star:=\Sigma^*$
    and, for every $w\in W$, $L_w:=\Sigma^* \setminus\{w\}.$
    Let
    $\cF
    =
    \{L_\star\}
    \cup
    \{L_w:w\in W\}.$
    Clearly,
    $|\cF| = 1+|W|
    = 2^\ell$. Furthermore, $\bigcap_{L \in \mathcal{F}} L=\Sigma^* \backslash W$, which implies $\CGTD(\cF) = 0$ and $\cF$ is 0-uniformly generatable.

    The language $L_\star$ is recognizable by a one-state DFA.
    Moreover, for every $w\in W$, the language
    $\Sigma^* \setminus\{w\}$ can be recognized by a DFA with
    $|w|+2\leq\ell+1$ states: the automaton keeps track of the
    longest prefix of $w$ read so far, moves to an accepting sink
    upon any mismatch, and rejects only when the input ends
    exactly after reading $w$.
    
    We next prove $\ITD(\cF) \geq 2^\ell-1$, which immediately implies $\NCTD^{+}(\cF) \geq 2^\ell-1$. 
    Let $(T_L)_{L \in \cF}$ be a tell-tale assignment.
    For every $w\in W$,
    $L_w\subset L_\star$, so that the tell-tale set $T_{L_\star}$ for $L_\star$ is not a subset of $L_w$. Thus  $w \in T_{L_\star}$ for all $w\in W$; in particular, $
    |T_{L_\star}|
    \geq
    |W|
    =
    2^\ell-1$.
    %
\end{proof}

\end{document}